\documentclass[runningheads]{llncs}

\usepackage{eccv}

\usepackage{eccvabbrv}

\usepackage{graphicx}
\usepackage{booktabs}

\usepackage[accsupp]{axessibility}  

\usepackage{hyperref}

\usepackage{orcidlink}

\usepackage{capt-of} 
\usepackage{multirow}

\usepackage{amsmath}
\usepackage{amssymb}
\usepackage{mathtools}
\newtheorem{assumption}[theorem]{Assumption}
\usepackage{xspace}

\begin{document}

\title{Unified Backbone Refinement for Diffusion Models via Internal-Latent Analysis} 

\titlerunning{DUNE: Unified Backbone Refinement for Diffusion Models}

\author{Haksoo Lim\inst{1} \and
Myeongjin Lee\inst{1} \and
Wonjoon Chang\inst{1} \and Jaesik Choi\inst{1,2}}

\authorrunning{Lim et al.}

\institute{Korea Advanced Institute of Science and Technology (KAIST), Daejeon 34141, Republic of Korea \and
INEEJI, Seongnam 13558, Republic of Korea}

\maketitle

\begin{figure}[h]
  \centering
  \includegraphics[width=0.95\textwidth]{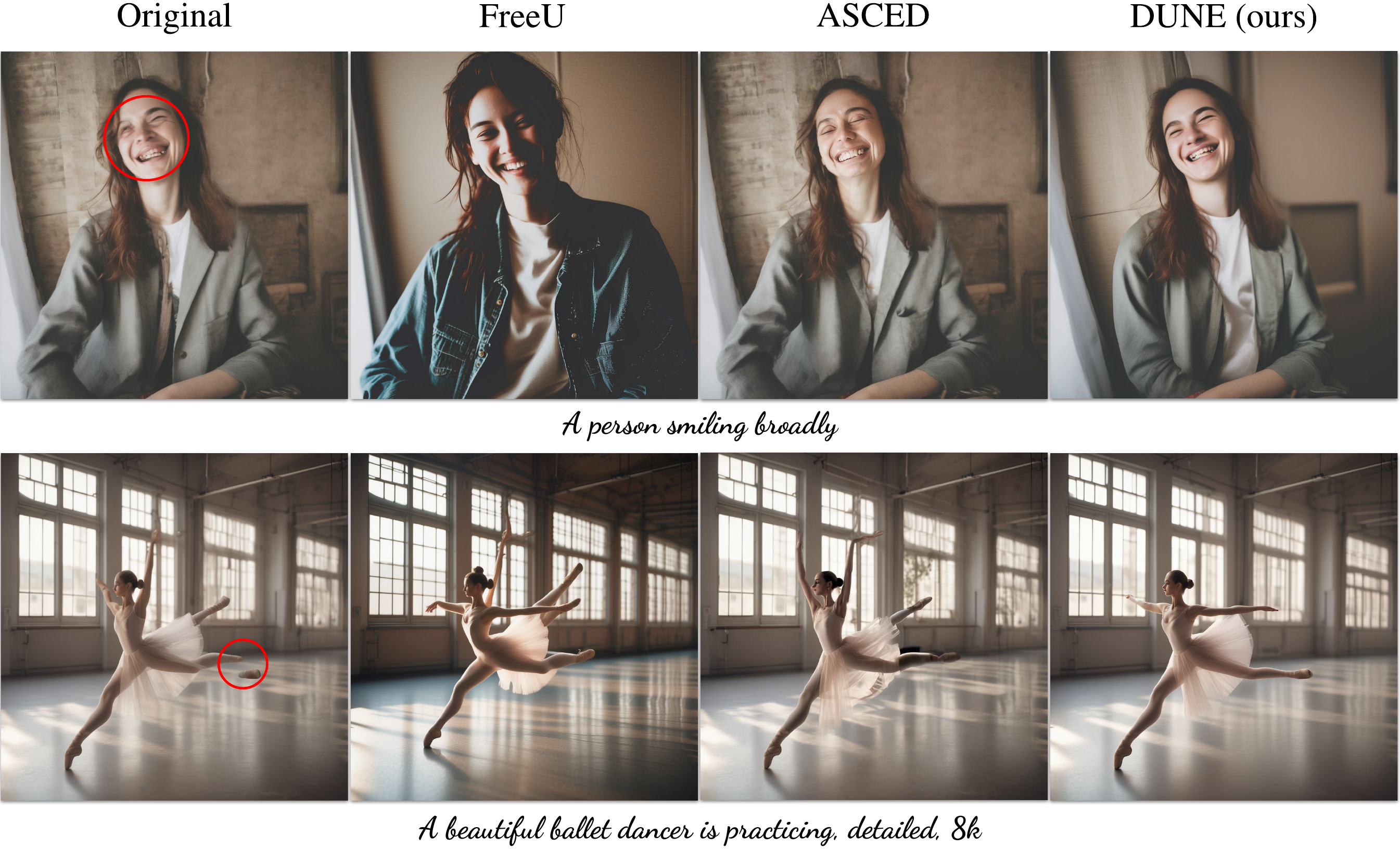}
  \caption{We propose DUNE, a method that freely improves image quality, meaning no additional training or increase of memory. DUNE mitigates visual artifact while preserving semantic consistency.}
  \label{fig:teaser}
  \vspace{-3em}
\end{figure}

\begin{abstract}
Diffusion models have achieved remarkable success across diverse domains, with performance closely related to the denoising backbones that parameterize the score function. In this paper, we present a systematic, phase-aware analysis of diffusion components and show that abrupt, early-stage fluctuations in deep latents are strongly associated with artifacts. Guided by these findings, we introduce \textbf{DUNE} (\textbf{D}iffusion \textbf{U}nified Network refi\textbf{NE}r), a training-free refinement framework that detects abrupt deviations in deep low-noise internal latents using a shared EMA-based criterion, and applies backbone-specific suppression to the detector-selected entries. Although derived from U-Net, the same detect--suppress principle extends naturally to Transformer-based diffusion models by acting on the latents of deep self-attention blocks. 
Extensive experiments across multiple backbones indicate that DUNE improves fidelity while reducing hallucinations, offering new insight into where and when diffusion backbones should be controlled.
\keywords{Diffusion models \and Explainability}
\end{abstract}    
\section{Introduction}
\label{introduction}

Recently, diffusion models have achieved remarkable results across various computer vision domains, including image generation, super-resolution, and image editing \cite{ho2020denoising, li2022srdiff, kwon2022diffusion}. Surpassing other generative models, diffusion models have also been successfully extended to other domains such as video generation, voice synthesis, and time series forecasting~\cite{yang2023diffusion_survey}. These models operate through specific procedures, namely a \textit{forward process} and \textit{reverse process} ~\cite{yang2023diffusion_survey,ho2020denoising,scoresde}. During the forward process, the model incrementally adds noise to an input image, finally transforming it into Gaussian noise. In the reverse process, the model learns a gradient of the log-likelihood, also called a \textit{score function}, using specialized architectures,  most commonly U-Net and Transformer variants \cite{ronneberger2015u, dit}. Once trained, diffusion models generate new images by sampling from a Gaussian distribution and applying the learned score function to iteratively denoise the sample.

Across a wide range of applications, the U-Net architecture has typically been employed as the foundational framework for diffusion models~\cite{ho2020denoising, rombach2022high, podell2023sdxl}. A U-Net consists primarily of \textit{downsampling blocks}, a bottleneck region (known as \textit{h-space}), \textit{skip connections}, and \textit{upsampling blocks}. Recent work reveals that skip connections and upsampling blocks play distinct internal roles, emphasizing high-frequency and low-frequency components, respectively~\cite{si2024freeu}. Although existing refinement methods manipulate internal activations or the output of the score network, improper control of these features can lead to deviations from the original image, intensifying visual artifacts—commonly referred to as hallucinations—as well as overly simplified backgrounds and exaggerated contrasts between light and dark regions (see Figure~\ref{fig:teaser}).

In this work, we observe that artifact-associated score dynamics are concentrated in early denoising stages and are reflected in deep internal latents. Through component-wise and phase-aware analysis, we find that detecting and correcting such anomalies in the deepest latent space of U-Net (h-space) substantially improves fidelity while preserving semantics (see Figure~\ref{fig:score_plot}). Motivated by these insights, we introduce \textbf{DUNE}: \textbf{D}iffusion \textbf{U}nified Network refi\textbf{NE}r, a \emph{training-free} refinement framework that suppresses detector-selected internal deviations during the detect phase and optionally adjusts perceptual details in a later phase.

Our analysis reveals that diffusion models already know which parts contain anomalies and can substantially enhance quality by explicitly addressing these areas (cf. Section~\ref{sec:theorems}, ~\ref{sec:proposedmethod}). We partition generation into two phases: a \textit{detect phase}, where anomalies are identified and corrected in the latent representations, and an optional \textit{detail phase}, where internal manipulations can be applied for user-controllable perceptual adjustments such as tone and saturation. We extend the detect--suppress framework to Transformer-based diffusion models by operating on deep self-attention latents that satisfy the same intervention criterion as U-Net h-space: semantic integration with reduced stochastic fluctuation. These improvements incur no additional training or model-specific finetuning, and deliver consistent gains across metrics and models (see Figure~\ref{fig:main}). Across extensive experiments, DUNE consistently improves fidelity and robustness over training-free refinement baselines.

Our contributions are summarized as follows:

\begin{itemize}
\item Through component-wise and phase-specific analysis of U-Net backbones, we show that artifact-associated dynamics are concentrated in early denoising stages and are reflected in deep internal latents, motivating targeted phase-aware control.
\item Based on theoretical and empirical analysis of deep intervention points in U-Net h-space and Transformer self-attention latents, we propose DUNE, which applies a shared EMA-based detector and early masked-correction template with backbone-specific suppressors.
\end{itemize}
\section{Background \& Observation}
\label{sec:observation}

We adopt the DDPM~\cite{ho2020denoising} formulation: a forward 
process corrupts clean data $X_0\sim q(X_0)$ into 
$X_t=\sqrt{\bar\alpha_t}\,X_0+\sqrt{1-\bar\alpha_t}\,\epsilon$ 
with $\epsilon\sim\mathcal{N}(0,\mathbf{I})$, and a score network 
$\epsilon_\theta(X_t,t)$ is trained to predict the injected noise, 
equivalently estimating the score 
function~\cite{ho2020denoising,song2019generative}. Building on 
this foundation, we first review related training-free refinement 
methods, then systematically analyze how individual backbone 
components influence artifact emergence across denoising phases. 
Additional related work appears in 
Appendix~\ref{appendix:relatedworks}.

\subsection{The Roles of U-Net Components across denoising steps}
\label{sec:unetrole}

\begin{figure}[t]
\centering
\includegraphics[width=\linewidth]{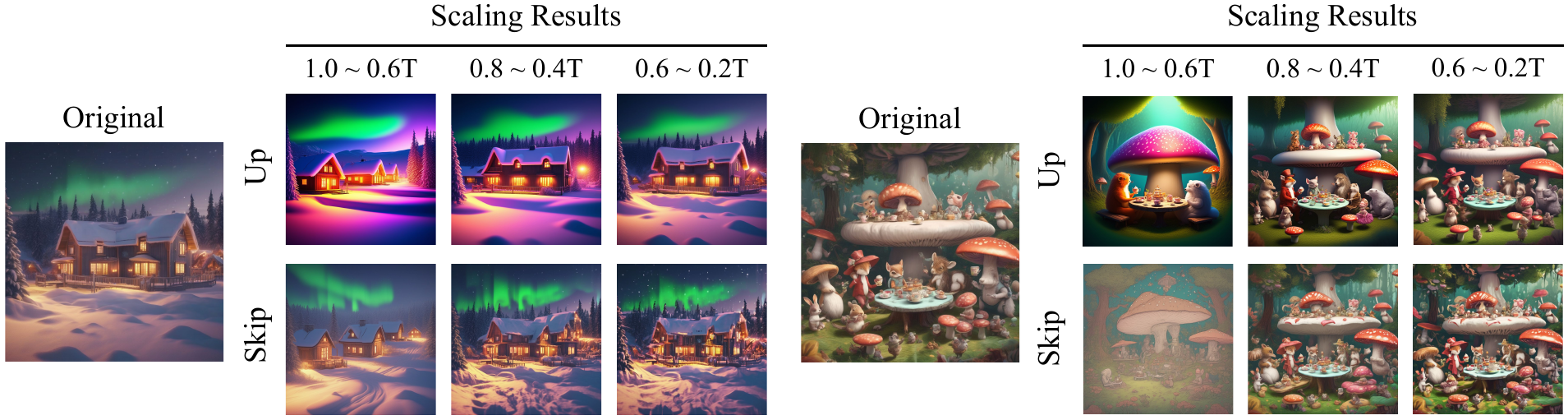}
\caption{Effects of scaling U-Net components across denoising steps. Upsampling blocks affect brightness and tone, while skip connections enhance edge details. In early phases, however, scaling primarily leads to semantic changes.
}
\vspace{-1.0em}
\label{fig:scalephase}
\end{figure}

In this section, we explore the role of U-Net components considering phases of generation, i.e., denoising steps. Specifically, we scale the upsampling blocks and skip connections during particular durations and analyze how each component, along with the timing of scaling, influences the generated images\footnote{In this experiment, we scale input feature maps of all upsampling blocks and do not scale the shallowest skip connections since they are too sensitive, only scaling the remaining skip connections.}. 

Figure~\ref{fig:scalephase} presents the scaling results which indicate three observations. First, scaling upsampling blocks improves the quality of the generated images with smoothened textures, consistent with prior observations in FreeU~\cite{si2024freeu}. In addition, we can also observe that these blocks also contribute to increasing the brightness and tone of the images. Second, while FreeU reports that naively scaling the skip connections show minimal effects, we find that scaling them during mid-to-late phases (e.g., after 0.6T) enhances the detail information and clarifies edges by sharpening faint or indistinct contours in the generated images, suggesting that skip connections are in charge of detail information during mid-to-late phases. Lastly, it is important to note that scaling components during early phases (1.0T $\sim$ 0.6T and 0.8T $\sim$ 0.4T) has a more substantial impact on altering the semantics of the generated images than on fulfilling the specific roles of each component described above. This indicates that, during early phases, the semantic structure formation dominates over the individual functional contributions of the U-Net components.

\textbf{Our hypothesis.} Based on the above observations, unexpected semantic variations (e.g., artifacts) are associated with unstable early-phase latent dynamics in the score network, and naively scaling each latent might exaggerate anomalies and making visual artifacts. We therefore hypothesize that abrupt early-phase variations in component latents are strongly associated with artifact emergence in generated images.
\section{Analysis of Internal Latents}
\label{sec:theorems}

\begin{figure}[t]
\centering
\includegraphics[width=0.8\linewidth]{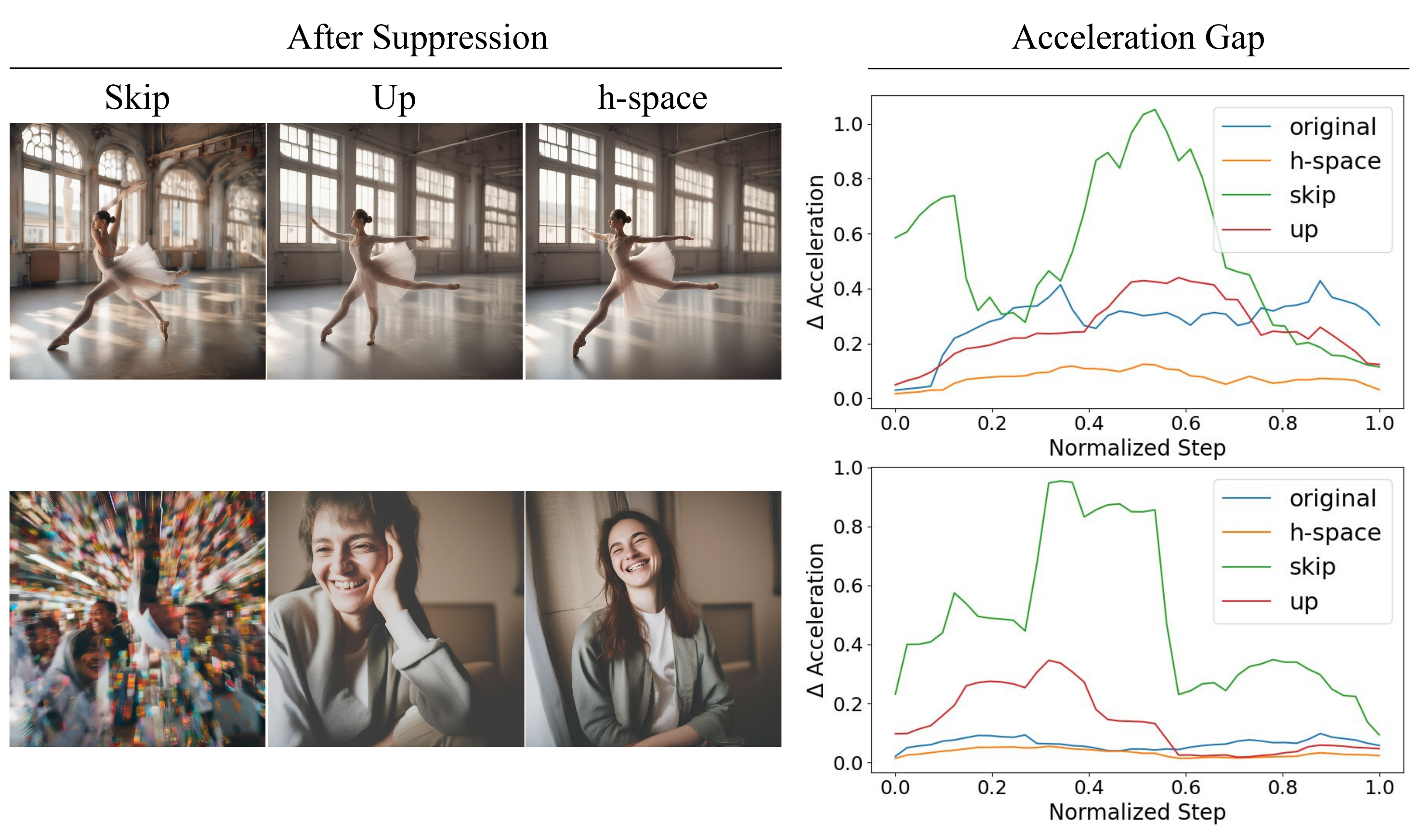}
\caption{Suppression Results with Different Components in U-net Controlling. The change of the score function according to denoising time step is called \textit{acceleration}~\cite{cao2025temporal}.
}
\vspace{-1.0em}
\label{fig:component_plot}
\end{figure}

In this section, we investigate the depth-wise characteristics of the U-Net architecture to support our use of the internal latent spaces—\emph{h-space} for U-Net and the self-attention latents for Transformer backbones. In typical U-Net–based diffusion models, downsampling is performed by convolutional layers (CNNs), halving the spatial dimensions and increasing channel depth~\cite{podell2023sdxl,lcm,arkhipkin2024kandinsky}. The following propositions formally describe how U-Net effectively reduces the noise in diffused images and concentrates the semantic content within the h-space.

\begin{proposition}\label{thm:hspace}
Given an $n\times n$ data matrix $X_0=\{X_0^{i,j}\}_{1\leq i,j\leq n}$ and an $m\times m$ convolutional kernel $K = \{\lambda_{i,j}\}_{1\leq i,j\leq m}$ used in a downsampling layer, the standard deviation of the noise added to diffused data $X_t$ (for $t \in {0,1,...,T}$) is scaled by a factor of $||K||_2=\sqrt{\sum_{i,j}\lambda_{i,j}^2}$ through the convolutional operation.
\end{proposition}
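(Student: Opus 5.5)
We will work directly from the DDPM forward marginal $X_t=\sqrt{\bar\alpha_t}\,X_0+\sqrt{1-\bar\alpha_t}\,\epsilon$ with $\epsilon=\{\epsilon^{i,j}\}$ having i.i.d.\ $\mathcal{N}(0,1)$ entries. The plan rests on two facts: a convolution (with any stride, as used for downsampling) is a linear map, and a linear combination of i.i.d.\ Gaussians is Gaussian, with variance equal to the sum of the squared coefficients.

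\emph{Linearity split.} For an output position $(p,q)$ of the downsampling layer, write the convolution as
\[
(K*X_t)^{p,q}=\sum_{i,j=1}^{m}\lambda_{i,j}\,X_t^{s p+i,\,s q+j},
\]
where $s$ is the stride. The bias term is omitted, since it only shifts the mean. Substituting the forward marginal gives
\[
(K*X_t)^{p,q}=\sqrt{\bar\alpha_t}\,(K*X_0)^{p,q}+\sqrt{1-\bar\alpha_t}\,(K*\epsilon)^{p,q}.
\]
The first term is the deterministic signal component. The second term is the noise we must analyze.

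\emph{Noise variance.} For a fixed $(p,q)$, the window indices $(sp+i,sq+j)$ with $1\le i,j\le m$ are distinct. Hence $(K*\epsilon)^{p,q}=\sum_{i,j}\lambda_{i,j}\epsilon^{sp+i,sq+j}$ is a weighted sum of independent standard normals. Its mean is $0$ and its variance is $\sum_{i,j}\lambda_{i,j}^2=\|K\|_2^2$, so it is distributed as $\mathcal{N}(0,\|K\|_2^2)$. The noise standard deviation at that output position is therefore $\sqrt{1-\bar\alpha_t}\,\|K\|_2$. Compared with the input noise standard deviation $\sqrt{1-\bar\alpha_t}$, it is scaled by the factor $\|K\|_2$. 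This holds for every $t\in\{0,\dots,T\}$; at $t=0$ both sides vanish, which is consistent with the claim. Multiple input channels are handled the same way: we sum over the channel index as well, and $\|K\|_2$ is read as the Frobenius norm over all kernel entries.

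\emph{Main obstacle: boundary positions.} The delicate point is output positions near the boundary when padding is used. There, some window entries are padding constants rather than noise, and the variance becomes $\sum$ over only the in-range $\lambda_{i,j}^2$, which is at most $\|K\|_2^2$. I would state the result for interior positions, or equivalently under valid or circular convolution, and remark that at the boundary the factor $\|K\|_2$ is an upper bound.

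A second subtlety is that neighboring outputs share inputs when stride $s<m$, so the output noise is correlated. The proposition concerns only the marginal standard deviation per entry, so this correlation does not affect the claim. I would note it explicitly, since iterating the argument through later layers would then require the covariance rather than just the variance.
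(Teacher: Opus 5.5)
Your proposal is correct and follows the paper's proof. Linearity of the strided convolution splits the output into a deterministic term $\sqrt{\bar\alpha_t}\,K*X_0$ plus $\sqrt{1-\bar\alpha_t}\,K*\epsilon$, and each entry of $K*\epsilon$ is a weighted sum of distinct i.i.d.\ standard normals with variance $\sum_{i,j}\lambda_{i,j}^2$. Your caveats are sound refinements the paper leaves implicit: its proof tacitly covers only interior positions, where at padded boundaries the factor is only an upper bound, and it says nothing about correlation between overlapping windows.
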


We empirically checked that the mean of kernel norms in each downsampling block are sufficiently small, effectively suppressing the added noise and confirming the theoretical prediction of Proposition~\ref{thm:hspace} (e.g., SDXL shows kernel norms of 0.2036 and 0.1509 in its respective downsampling layers). 

This inherent denoising capability early in the reverse trajectory clarifies \emph{where} to intervene. However, directly correcting these anomalies in shallow layers poses challenges, as simultaneous adjustments of image content and added noise are needed to maintain the normal distribution of the score function outputs (see Equation~\ref{eq:score_loss}). Conversely, corrections applied in the h-space are less problematic, as this deeper latent space inherently condenses semantic information while effectively reducing noise. The first column of Figure~\ref{fig:component_plot} clearly illustrates that corrections in shallower layers, such as skip connections or upsampling blocks, increase score deviations in hallucinated regions, whereas corrections within the h-space significantly mitigate these deviations. Therefore, we operate our detect--suppress mechanism primarily in h-space.

Next, we extend our analysis into Transformer-based diffusion models. The following proposition shows that the latent of self-attention layer has innate denoising capability like h-space in U-Net.

\begin{figure}[t]
\centering
\includegraphics[width=0.8\linewidth]{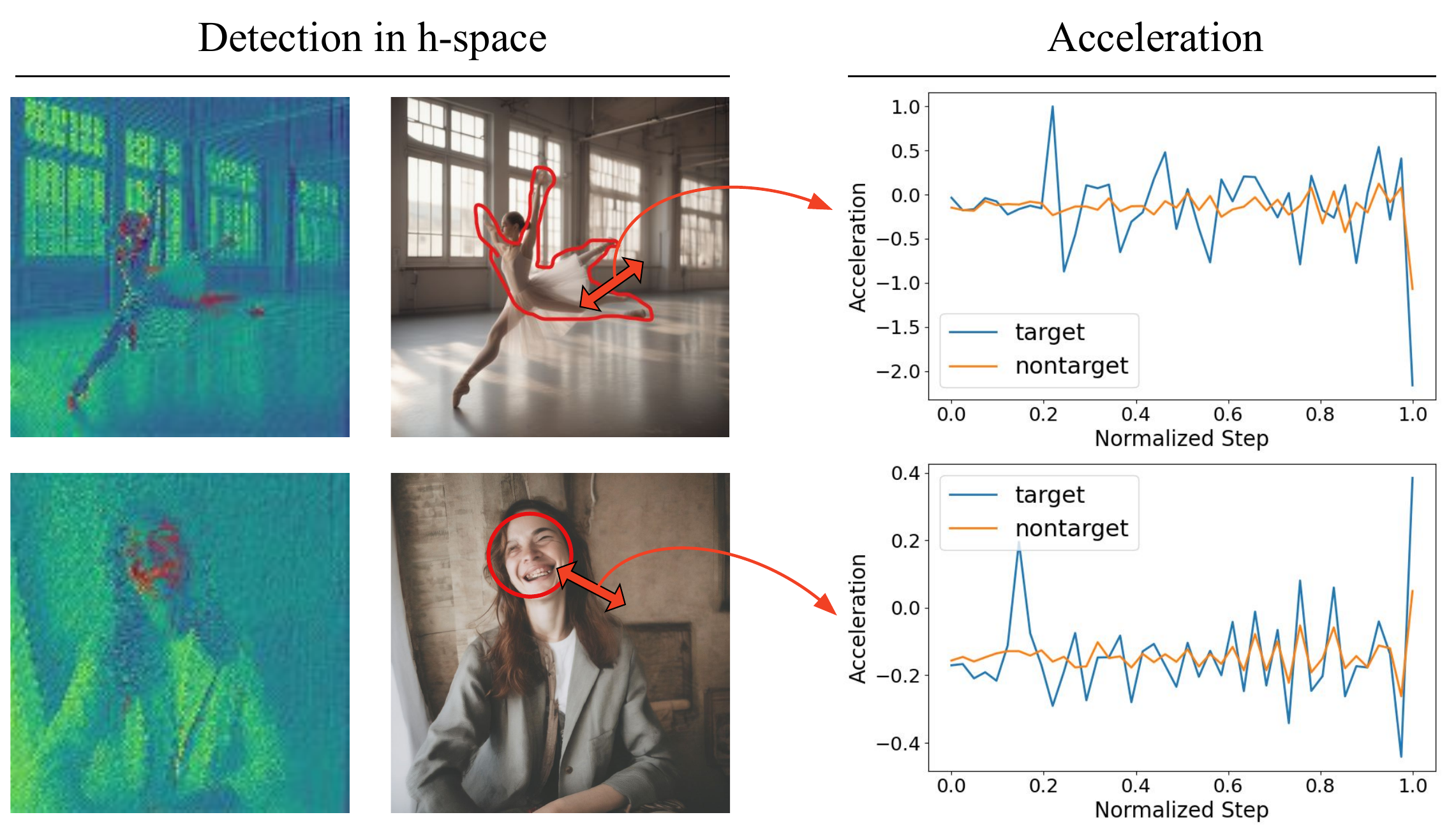}
\caption{We visualize how detector-selected regions align with abrupt score-acceleration patterns. The plot shows the averaged score function across diffusion steps, with ``Target'' and ``Non-target'' representing the inner and outer regions of the red bounding contour, respectively.
}
\vspace{-1.0em}
\label{fig:score_plot}
\end{figure}

\begin{proposition}\label{thm:attn_softmax}
Given tokens $x_j=s_j+\varepsilon_j\in\mathbb{R}^d$ with i.i.d.\ $\varepsilon_j\sim\mathcal{N}(0,\sigma^2 I_d)$ and attention weights $w=\mathrm{softmax}((qK^\top)/\tau)$ ($w_j\ge0$, $\sum_j w_j=1$), the attention output $y=\sum_{j=1}^N w_j x_j$ scales the isotropic noise standard deviation by $\sqrt{\sum_{j=1}^N w_j^2}$, i.e.,
\[
\mathrm{Cov}(y_{\mathrm{noise}})=\sigma^2\!\left(\sum_{j=1}^N w_j^2\right) I_d,
\]
so $\sum_j w_j^2\!\in[1/N,1]$ plays the role of a \emph{data-adaptive kernel norm} (cf.\ Proposition~\ref{thm:hspace} for CNNs).
\end{proposition}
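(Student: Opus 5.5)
The plan is to treat the attention weights $w$ as fixed (conditionally deterministic) coefficients and then use the fact that a linear combination of independent isotropic Gaussians is again an isotropic Gaussian. The first step is to decompose the output as
\[
y=\sum_{j=1}^N w_j s_j+\sum_{j=1}^N w_j\varepsilon_j ,
\]
and to identify $y_{\mathrm{noise}}:=\sum_j w_j\varepsilon_j$. This mirrors the proof of Proposition~\ref{thm:hspace}, where a convolution output is a fixed linear combination of the $n\times n$ noisy entries with coefficients $\lambda_{i,j}$. The only change here is that the coefficients $w_j$ are produced by the softmax rather than by a learned kernel.

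The second step computes the covariance. Independence of the $\varepsilon_j$ kills the cross terms $\mathbb{E}[\varepsilon_j\varepsilon_k^\top]=0$ for $j\neq k$, and each $\mathbb{E}[\varepsilon_j\varepsilon_j^\top]=\sigma^2 I_d$. This gives
\[
\mathrm{Cov}(y_{\mathrm{noise}})=\sum_{j}w_j^2\sigma^2 I_d .
\]
Taking a square root of the per-coordinate variance yields the scaling factor $\sqrt{\sum_j w_j^2}$ on the standard deviation. This is the exact analogue of $\|K\|_2$ in Proposition~\ref{thm:hspace}.

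The third step establishes the range $\sum_j w_j^2\in[1/N,1]$. For the lower bound, apply Cauchy--Schwarz: $1=(\sum_j w_j)^2\le N\sum_j w_j^2$, with equality when $w_j=1/N$ (uniform attention, maximal averaging). For the upper bound, note $0\le w_j\le 1$, so $w_j^2\le w_j$ and hence $\sum_j w_j^2\le\sum_j w_j=1$, with equality at a one-hot attention pattern (no averaging). These bounds justify calling the quantity a data-adaptive kernel norm: it depends on $q$ and $K$ through $w$, but it always lies between the two extremes.

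The main obstacle is the modeling assumption in the first step. Strictly speaking, $w$ is computed from keys that themselves contain the noisy tokens, so $w$ is not independent of $\{\varepsilon_j\}$, and the unconditional covariance does not factor this cleanly. I would handle this by stating the result conditionally on $w$ (equivalently, treating $q$ and $K$ as given, or as computed from a separate noise-free projection). Under that reading, $\mathrm{Cov}(y_{\mathrm{noise}}\mid w)=\sigma^2(\sum_j w_j^2)I_d$ holds exactly. Everything else in the argument is routine linear algebra once this conditioning is made explicit.
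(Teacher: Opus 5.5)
Your proposal is correct and follows essentially the same route as the paper's proof. You split $y=\sum_j w_j s_j+\sum_j w_j\varepsilon_j$, use independence to remove the cross terms and obtain $\sigma^2(\sum_j w_j^2)I_d$, and bound $\sum_j w_j^2$ below by Cauchy--Schwarz and above by $w_j^2\le w_j$; the paper uses the equivalent $\sum_j w_j^2\le\max_k w_k\le 1$ and likewise states the result for fixed attention weights $w$. One small caution on your last paragraph: conditioning on $w$ is not equivalent to treating $w$ as independent of the $\varepsilon_j$, since if $w$ is computed from noisy keys, conditioning on it changes the law of the noise, so the correct hypothesis is the latter (fixed or noise-independent $q,K$), which the paper also assumes without saying so.
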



Thus the standard deviation scales by $\sqrt{\sum_j w_{ij}^2}\in[1/\sqrt{N},\,1]$, exactly mirroring the role of the kernel norm in Proposition~\ref{thm:hspace} but now \emph{data–adaptive} via softmax. As the attention distribution becomes more diffuse (higher entropy; e.g., larger temperature), $\sum_j w_{ij}^2 \downarrow$ toward $1/N$ and the denoising effect strengthens. Stacking layers therefore yields a deep token space that is both semantically integrated and noise–reduced, making mid–to–deep self–attention representations the Transformer analogue of U‑Net’s $h$–space and a natural sweet spot for detection and suppression.

The two results above justify our design: (i) downsampling convolutions scale (and often contract) noise toward the U-Net bottleneck, and (ii) self-attention performs noise averaging over tokens before value projection. Operating our detect--suppress mechanism in these deepest latents stabilizes score dynamics (reducing abnormal spikes) and preserves semantics.

Transformer backbones lack explicit skip-/upsampling branches. We therefore define the extension through the intervention criterion. In both architectures, DUNE targets a deep semantically integrated latent where stochastic fluctuations are attenuated.
For Transformers, Proposition~\ref{thm:attn_softmax} justifies mid-to-deep self-attention latents as this operating region, and Appendix~\ref{appendix:attemperical}, \ref{appendix:transformer_layer_ablation} further validates this empirically through latent visualization and target-layer ablation.
\section{Proposed Method}
\label{sec:proposedmethod}

In the previous section, we identified that abrupt changes in the h-space of U-Net components are strongly associated with the emergence of artifacts in generated images. This phenomenon mainly appears at early denoising phases, where semantic properties are determined. Based on these observations, we propose a component-aware refinement framework, \textbf{DUNE}: \textbf{D}iffusion \textbf{U}nified Network refi\textbf{NE}r. Leveraging insights from Section~\ref{sec:observation},~\ref{sec:theorems}, we divide the denoising process into two phases: the \textit{detect phase} (1.0T--0.6T) and the \textit{detail phase} (0.6T--0.0T). This division reflects that refinements in early and later stages predominantly affect global structure and local details, respectively. Note that the detect phase constitutes the core of DUNE and is sufficient for artifact suppression; the detail phase is an optional extension for stylistic control.

The key functionalities and usage of DUNE across the two phases are 
described below. In the detect phase, DUNE identifies and corrects 
potential anomalies within the h-space. The illustrative description 
of this process is shown in Figure~\ref{fig:framework}. The detail 
phase is an optional, user-facing module that enables fine-grained 
control over brightness, tone, and saturation by selectively scaling 
skip connections and upsampling blocks during the later denoising 
steps (cf. Figure~\ref{fig:combined_control_unet}). This step depends 
on the design choice of users, which properties they want to 
emphasize. In our implementation, we adopt the scaling methodology 
from FreeU~\cite{si2024freeu}, restricting it to the detail phase 
to preserve semantic content established during the detect phase. 
However, applying FreeU alone may result in overly saturated images, 
which deviate significantly from prompts intended to produce darker 
or moodier imagery. We therefore investigate each component in greater 
detail, enabling users to finely adjust the clarity and tonal 
qualities of the generated images. Also, since these adjustments are stylistic rather than corrective, we 
exclude the detail phase from all quantitative evaluations and present 
its effects qualitatively in Appendix~\ref{appendix:down}.

\begin{figure}[t]
\centering
\includegraphics[width=0.7\linewidth]{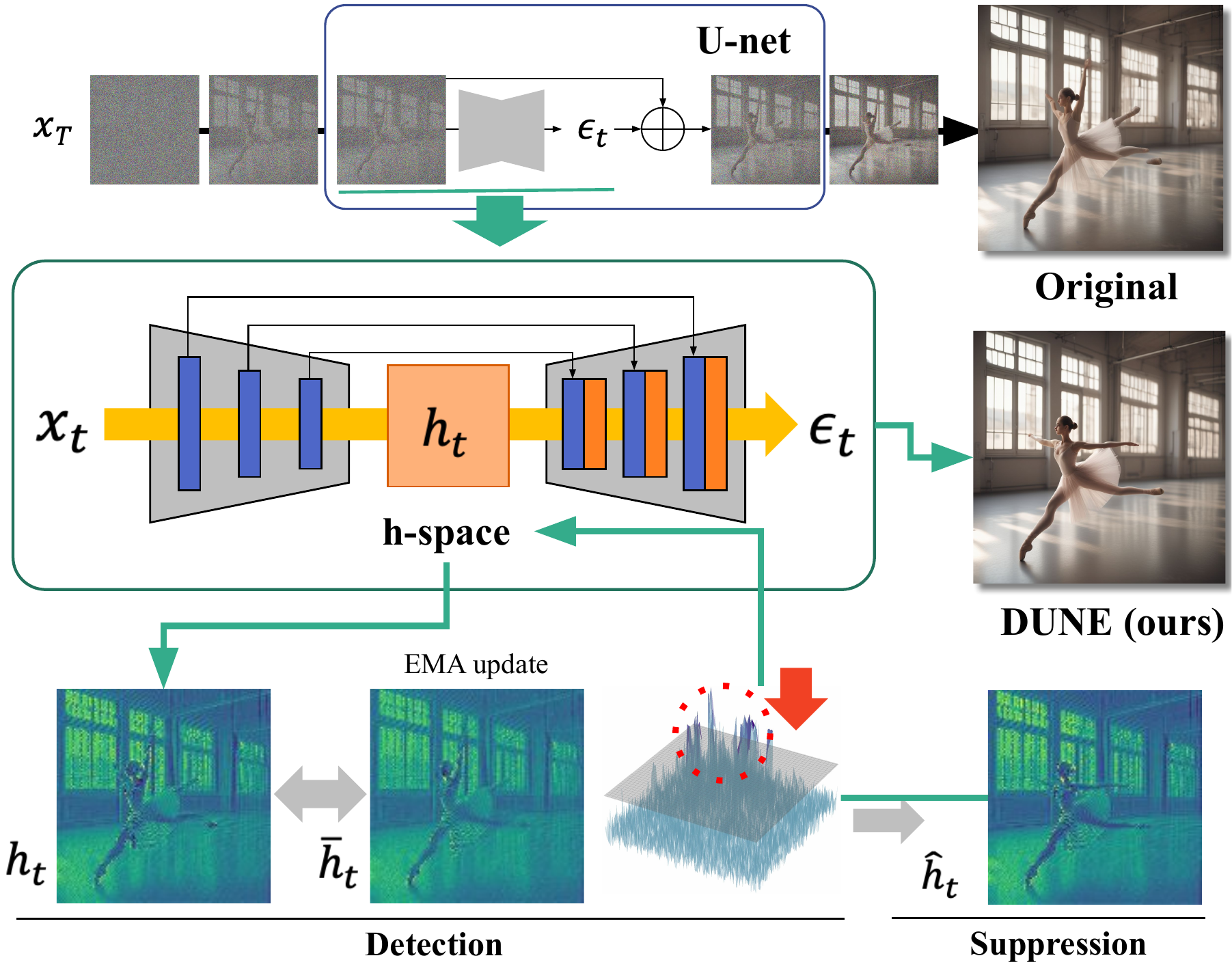}
\caption{Overall framework of DUNE. The h-space is directly matched with self-attention latent in Transformer-based diffusion models.}
\vspace{-1.0em}
\label{fig:framework}
\end{figure}

\subsection{Detection Analysis}\label{sec:analysis}

Several recent studies link visual artifacts in diffusion models to abnormal temporal variations in the score function~\cite{aithal2024understanding,cao2025temporal}. The change of the score function according to denoising time step is called \textit{acceleration}~\cite{cao2025temporal}. Building on these findings, we detect abrupt internal changes in the score network (U-Net and Transformer) over the denoising trajectory. Masks are resized to match the spatial dimensions of the corresponding feature maps, and we compare acceleration between masked and unmasked regions.

During the reverse process to generate images, we obtain latent variable $\textbf{h}_t$ at normalized timestep $t \in [1,T]$, where $T$ denotes the total number of inference steps. To detect abrupt changes in latent features, we compute the exponential moving average (EMA) of a scaled latent variable, which represents the estimation. Motivated by prior observations that artifact regions exhibit abnormal score dynamics~\cite{aithal2024understanding,cao2025temporal}, and given that the score function in diffusion models is defined as $
s_{\theta}(t,\textbf{x}_t) = -\frac{\epsilon_{\theta}(t,\textbf{x}_t)}{\sigma_t}$,
we apply detection to \(z_t=-h_t/\sigma_t\), the score-consistent normalization of \(h_t\), to factor out timestep-dependent amplitude drift and isolate abrupt internal deviations. Consequently, our EMA is defined as:
$
\bar{\textbf{z}}_t \;=\; \gamma \bar{\textbf{z}}_{t+1} + (1-\gamma) \textbf{z}_t.
$
We use the standard 
value $\gamma=0.7$ throughout all experiments without per-backbone 
tuning. The EMA is initialized as 
$\bar{\textbf{z}}_T = \textbf{z}_T$ at the first inference step.

We then measure deviations between each scaled latent ${\textbf{z}}_t$ and its corresponding EMA $\bar{\textbf{z}}_{t+1}$ at each step. Because these values differ according to the brightness of individual images, we adopt a log-ratio to facilitate consistent comparisons across images:
\[
\Delta = \log\mid\frac{\textbf{z}_t}{\bar{\textbf{z}}_{t+1}}\mid.
\]

Since $\Delta$ is less sensitive to image-specific scale and brightness variations, we identify a quantile threshold $\lambda$ corresponding to the $p\%$ percentile, thereby isolating the top $(1 - p)\%$ most deviated features. The resulting binary mask $M$ is defined as $M = (\Delta > \lambda)$,
highlighting anomalous regions (see second column in Figure~\ref{fig:score_plot}). Figure~\ref{fig:score_plot} also confirms that detected regions exhibit strong acceleration fluctuations concentrated in the first 40\% of steps (the detect phase), and that the same logic applies to Transformer self-attention latents.

\subsection{Suppression Analysis}
\label{sec:sup-analysis}


We use masked EMA suppression as the canonical correction objective for detector-selected low-noise latents. Transformer latents instantiate this objective through masked EMA blending. U-Net h-space contains low-resolution channels with specialized semantic activations, so DUNE uses channel-aware masked scaling to shrink detected anomalous channels while preserving channel specialization. Appendix~\ref{appendix:snr_proof} analyzes this U-Net instantiation on detector-selected low-SNR subsets and relates it to EMA-style suppression.

Let $\textbf{h}_t$ denote the target internal latent at timestep $t$ (U-Net $h$-space or Transformer self-attention latent),
and let $M_t$ be the detected anomaly mask.
We apply a masked correction in a unified gated form:
\begin{equation}
\hat{\textbf{h}}_t=(1-M)\odot\textbf{h}_t \;+\; M_t \odot \mathcal{S}_{\text{bb}}(\textbf{h}_t,\bar{\textbf{h}}_t;\kappa),
\label{eq:unified_suppress}
\end{equation}
where $\mathcal{S}_{\text{bb}}$ denotes a backbone-specific suppression operator, $\bar{\textbf{h}}_t := -{\sigma_t}\bar{\textbf{z}}_t$ is the de-scaled EMA reference used in detection and $\kappa\in[0,1]$ controls correction strength.

For U-Net-based diffusion models, we suppress detector-selected h-space outliers because our analysis identifies this region as a stable intervention point where artifact-associated deviations are concentrated. Since each $\textbf{h}_t$ consists of multiple channels with relatively low spatial resolution (e.g., $1280\times32\times32$ for SDXL), where each channel encodes distinct semantic information. Hence, channels must be treated independently. We compute the absolute value of the channel-wise mean of $\textbf{h}_t$, denoted by $\textbf{n}_t$ (shape of $1280\times1\times1$). Using a scaling factor $\kappa$, the corrected latent representation becomes $\mathcal{S}_{\text{UNet}}({\textbf{h}}_t,\bar{\textbf{h}}_t;\kappa) \;=\; \kappa\,(n_t \cdot {\textbf{h}}_t)$.


For Transformer-based methods, latents are patchified and self-attention mixes tokens, so simple masked scaling is less stable. We therefore blend masked features with the EMA reference:
$
\mathcal{S}_{\text{Tr}}({\textbf{h}}_t,\bar{\textbf{h}}_t;\kappa)
\;=\; \kappa {\textbf{h}}_t + (1-\kappa)\bar{\textbf{h}}_t.
$
We report the performance of our detect--suppression methods for both U-Net and Transformer backbones in Figure~\ref{fig:main}.

Although the algebraic forms differ, both suppressors implement the same local stability objective in detector-selected low-noise latents. The Transformer case admits direct EMA blending, whereas the U-Net case uses a bounded channel-aware instantiation of EMA-style suppression to avoid disrupting specialized h-space channels.

\subsection{SNR Analysis of the Detect--Suppress Phase}
\label{sec:snr}

We refine the theoretical role of the detect--suppress phase by explicitly analyzing the
\emph{signal-to-noise ratio (SNR)} in the internal latent that DUNE operates on.
We model the scaled latent $\textbf{z}_t := -\frac{\textbf{h}_t}{\sigma_t}\in\mathbb{R}^{d}.$ as a sum of a slowly-varying semantic component and a stochastic component:
\begin{assumption}
\label{assump:decomp}
For each timestep $t$, the scaled latent admits a decomposition
${\textbf{z}}_t = s_t + n_t$, where \(s_t:=\mathbb{E}[z_t\mid\mathcal{F}_t]\) and \(n_t:=z_t-s_t\), 
In here, $s_t$ denotes a semantic signal, $n_t$ denotes a stochastic component
(noise / abnormal spikes) and \(\mathcal{F}_t\) contains the conditioning, timestep, and slowly varying semantic state.
Then \(\mathbb{E}[n_t\mid\mathcal{F}_t]=0\) by construction.
\end{assumption}

\begin{assumption}
\label{assump:slow_drift}
There exists $\delta\ge 0$ such that $\|s_{t+1}-s_t\|_2 \le \delta$ for all $t$ in the detect phase.
\end{assumption}

\begin{definition}
\label{def:snr}
We define the latent SNR at timestep $t$ as
\begin{equation}
\mathrm{SNR}(z_t) \;:=\;
\frac{\mathbb{E}\|s_t\|_2^2}{\mathbb{E}\|n_t\|_2^2}.
\label{eq:snr_def}
\end{equation}
\end{definition}

Let $M_t\in\{0,1\}^{d}$ denote a binary mask (broadcastable to the latent shape) produced by the detect
step. Define masked/unmasked parts by $a_{t,M}:=M_t\odot a_t$ and $a_{t,\neg M}:=(1-M_t)\odot a_t$. Then we analyze a unified suppression operator:
\begin{equation}
\hat{\textbf{z}}_t
\;=\;
(1-M_t)\odot \textbf{z}_t \;+\; M_t\odot\big(\kappa \textbf{z}_t + (1-\kappa)\bar{\textbf{z}}_t\big),
\qquad \kappa\in[0,1].
\label{eq:generic_suppress}
\end{equation} In our U-Net implementation, the masked channels are shrunk with a channel-aware
factor, which can be interpreted as a per-channel version.

Next, define the \emph{noise concentration} and \emph{signal concentration} within the detected mask by
\begin{center}
  \begin{minipage}{0.45\textwidth}
    \begin{equation}
      \eta_t := \frac{\mathbb{E}\|n_{t,M}\|_2^2}{\mathbb{E}\|n_t\|_2^2} \in [0,1]
      \label{eq:eta_def}
    \end{equation}
  \end{minipage}
  \begin{minipage}{0.45\textwidth}
    \begin{equation}
      \rho_t := \frac{\mathbb{E}\|s_{t,M}\|_2^2}{\mathbb{E}\|s_t\|_2^2} \in [0,1]
      \label{eq:rho_def}
    \end{equation}
  \end{minipage}
\end{center}
which capture how much noise and signal energy, respectively, fall within $M_t$. The following theorem shows SNR gain of EMA blending:

\begin{theorem}
\label{thm:snr_ema}
Consider Eq.~\eqref{eq:generic_suppress} and define the EMA estimation error
$e_t := \bar{\textbf{z}}_{t+1}-s_t$.
Under Assumption~\ref{assump:decomp} and assuming $\mathbb{E}\langle n_t,e_t\rangle=0$,
the post-suppression SNR satisfies
\begin{equation*}
\frac{\mathrm{SNR}(\hat z_t)}{\mathrm{SNR}(z_t)}
\;\ge\;
\frac{1}{1-(1-\kappa^2)\eta_t + (1-\kappa)^2 \varepsilon_t},
\qquad
\varepsilon_t := \frac{\mathbb{E}\|e_{t,M}\|_2^2}{\mathbb{E}\|n_t\|_2^2}.
\label{eq:snr_gain_ema}
\end{equation*}
Consequently, $\mathrm{SNR}(\hat{\textbf{z}}_t)>\mathrm{SNR}({\textbf{z}}_t)$ whenever
\begin{equation*}
(1-\kappa^2)\eta_t \;>\; (1-\kappa)^2 \varepsilon_t.
\label{eq:snr_improve_condition}
\end{equation*}
\end{theorem}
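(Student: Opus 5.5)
The plan is a direct second-moment computation. I will split $\hat{\textbf{z}}_t$ into the same semantic component $s_t$ plus a new noise term, then compare noise energies before and after suppression. I read the EMA reference in Eq.~\eqref{eq:generic_suppress} as the one used by the detector, $\bar{\textbf{z}}_{t+1}$; this matches the definition $e_t=\bar{\textbf{z}}_{t+1}-s_t$ in the statement. With this reading, $\bar{\textbf{z}}_{t+1}=s_t+e_t$. By Assumption~\ref{assump:decomp}, $\textbf{z}_t=s_t+n_t$. On the masked coordinates this gives
$\kappa \textbf{z}_t+(1-\kappa)\bar{\textbf{z}}_{t+1}=s_t+\kappa n_t+(1-\kappa)e_t$, and on the unmasked coordinates $\textbf{z}_t$ is unchanged. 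Hence
\[
\hat{\textbf{z}}_t = s_t + \hat n_t,\qquad \hat n_t := (1-M_t)\odot n_t + M_t\odot\big(\kappa n_t+(1-\kappa)e_t\big).
\]
The key point is that the blend has total weight one on $s_t$, so the semantic part is exactly preserved. I therefore take $s_t$ as the signal of $\hat{\textbf{z}}_t$, and the numerator of $\mathrm{SNR}(\hat{\textbf{z}}_t)$ equals that of $\mathrm{SNR}(\textbf{z}_t)$. The ratio of SNRs then reduces to $\mathbb{E}\|n_t\|_2^2/\mathbb{E}\|\hat n_t\|_2^2$.

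Next I expand $\mathbb{E}\|\hat n_t\|_2^2$. The masks $M_t$ and $1-M_t$ have disjoint supports, so the masked and unmasked parts are orthogonal. This gives
\[
\mathbb{E}\|\hat n_t\|_2^2=\mathbb{E}\|n_{t,\neg M}\|_2^2+\kappa^2\mathbb{E}\|n_{t,M}\|_2^2+(1-\kappa)^2\mathbb{E}\|e_{t,M}\|_2^2+2\kappa(1-\kappa)\,\mathbb{E}\langle n_{t,M},e_{t,M}\rangle .
\]
Using $\mathbb{E}\|n_{t,\neg M}\|_2^2=\mathbb{E}\|n_t\|_2^2-\mathbb{E}\|n_{t,M}\|_2^2$ and dividing by $\mathbb{E}\|n_t\|_2^2$, the right-hand side becomes $1-\eta_t+\kappa^2\eta_t+(1-\kappa)^2\varepsilon_t$ plus the normalized cross term. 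This is exactly the denominator $1-(1-\kappa^2)\eta_t+(1-\kappa)^2\varepsilon_t$ in the statement. Once the cross term is shown to vanish (or to be non-positive), the bound follows with equality, hence with $\ge$. The consequence is then immediate: that denominator is $<1$ exactly when $(1-\kappa^2)\eta_t>(1-\kappa)^2\varepsilon_t$.

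The main obstacle is the cross term $\mathbb{E}\langle n_{t,M},e_{t,M}\rangle$. The hypothesis as stated is only $\mathbb{E}\langle n_t,e_t\rangle=0$ over all coordinates, whereas the mask is data-dependent. My first approach is to use the conditional structure of Assumption~\ref{assump:decomp}. The EMA $\bar{\textbf{z}}_{t+1}$ depends only on past steps, and $s_t$ is $\mathcal{F}_t$-measurable, so I will treat $e_t$ (and the mask, if it is computed from the conditioning/past state) as $\mathcal{F}_t$-measurable. Then
$\mathbb{E}\langle M_t\odot n_t,M_t\odot e_t\rangle=\mathbb{E}\big\langle \mathbb{E}[n_t\mid\mathcal{F}_t],M_t\odot e_t\big\rangle=0$.
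If the mask also depends on $n_t$, which it does through $\Delta$, I will state explicitly that the orthogonality is assumed to hold coordinate-wise on the detected mask, i.e.\ $\mathbb{E}\langle n_{t,M},e_{t,M}\rangle=0$. Alternatively, it suffices that this cross term is non-positive, since that only strengthens the inequality. I will flag this as the precise reading of the hypothesis needed. The remaining steps are routine.
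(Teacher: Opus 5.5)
Your proposal follows the paper's proof. Both write the masked blend as $s_{t,M}+\kappa n_{t,M}+(1-\kappa)e_{t,M}$ with the reference taken as $\bar{\textbf{z}}_{t+1}$ (as the appendix version states), treat $s_t$ as the preserved signal, and expand $\mathbb{E}\|\hat{\textbf{z}}_t-s_t\|_2^2$ using disjoint supports, which gives the bound with equality. The paper removes the cross term $\mathbb{E}\langle n_{t,M},e_{t,M}\rangle$ just by citing $\mathbb{E}\langle n_t,e_t\rangle=0$, so your observation that mask-restricted orthogonality is what is actually needed (because $M_t$ depends on $n_t$ through $\Delta$) tightens the same argument rather than departing from it.
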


The following corollary shows that suppressing anomalies in deep latents bounds the resulting score perturbation:

\begin{corollary}
\label{cor:score_stability}
Let the remaining mapping from the target latent to noise prediction be
$\epsilon_\theta(\cdot,t)=g_t(\cdot)$ and assume $g_t$ is $L_t$-Lipschitz.
Then the change in the predicted score satisfies
\begin{equation*}
\|\; s_\theta(t,\hat{\mathbf{x}}_t) - s_\theta(t,{\mathbf{x}}_t)\;\|_2
\;\le\;
\frac{L_t}{\sigma_t}\,\|\hat{\mathbf{h}}_t-{\mathbf{h}}_t\|_2
\;\propto\;
\frac{L_t(1-\kappa)}{\sigma_t}\,\|M_t\odot({\mathbf{z}}_t- \bar{\mathbf{z}}_t)\|_2.
\label{eq:score_stability}
\end{equation*}
This provides a local stability argument for why suppressing detector-selected residuals can reduce score perturbations.
\end{corollary}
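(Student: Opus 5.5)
The corollary follows from composing two facts: how the score depends on the predicted noise, and how far the suppression operator moves the latent.

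\emph{Step 1: from score to noise prediction.} By definition the score is $s_\theta(t,\cdot)=-\epsilon_\theta(\cdot,t)/\sigma_t$. The remaining network maps the internal latent to the noise prediction through $g_t$, so
\[
s_\theta(t,\hat{\mathbf{x}}_t)-s_\theta(t,\mathbf{x}_t)=-\tfrac{1}{\sigma_t}\bigl(g_t(\hat{\mathbf{h}}_t)-g_t(\mathbf{h}_t)\bigr).
\]
Here $s_\theta(t,\hat{\mathbf{x}}_t)$ denotes the score produced when $\hat{\mathbf{h}}_t$ is injected in place of $\mathbf{h}_t$. I will make this interpretation explicit, since the input $\mathbf{x}_t$ is unchanged and only the internal latent is altered.

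\emph{Step 2: Lipschitz bound.} Taking norms and applying the $L_t$-Lipschitz assumption on $g_t$ gives the first inequality immediately:
\[
\|s_\theta(t,\hat{\mathbf{x}}_t)-s_\theta(t,\mathbf{x}_t)\|_2\le \tfrac{L_t}{\sigma_t}\|\hat{\mathbf{h}}_t-\mathbf{h}_t\|_2.
\]

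\emph{Step 3: compute the displacement from the suppressor.} I use the canonical EMA-blending form in Eq.~\eqref{eq:generic_suppress}. Subtracting $\mathbf{z}_t$ yields
\[
\hat{\mathbf{z}}_t-\mathbf{z}_t=M_t\odot\bigl((\kappa-1)\mathbf{z}_t+(1-\kappa)\bar{\mathbf{z}}_t\bigr)=-(1-\kappa)\,M_t\odot(\mathbf{z}_t-\bar{\mathbf{z}}_t).
\]
Since $\mathbf{h}_t=-\sigma_t\mathbf{z}_t$ and $\bar{\mathbf{h}}_t=-\sigma_t\bar{\mathbf{z}}_t$ (the same de-scaling used for the EMA reference in Eq.~\eqref{eq:unified_suppress}), the $\mathbf{h}$-displacement is $\sigma_t$ times the $\mathbf{z}$-displacement. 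Hence
\[
\|\hat{\mathbf{h}}_t-\mathbf{h}_t\|_2=\sigma_t(1-\kappa)\|M_t\odot(\mathbf{z}_t-\bar{\mathbf{z}}_t)\|_2.
\]
Substituting this into Step 2 gives the stated proportionality, with proportionality constant $\sigma_t$. This is why the statement writes $\propto$ rather than equality; if one cancels $\sigma_t$, the bound is exactly $L_t(1-\kappa)\|M_t\odot(\mathbf{z}_t-\bar{\mathbf{z}}_t)\|_2$.

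\emph{Main obstacle: the U-Net suppressor.} The U-Net suppressor $\kappa(n_t\cdot\mathbf{h}_t)$ is not literally of the EMA-blending form. I would treat it as the per-channel instantiation the paper mentions. On masked channels the displacement is $(\kappa n_t-1)\odot\mathbf{h}_t$. If one identifies the effective reference via $\bar{\mathbf{h}}_t\approx\kappa_{\mathrm{eff}}\mathbf{h}_t$, or simply bounds $|\kappa n_t-1|$ by a constant, this reduces to the same $\propto$ statement with a channel-dependent constant in place of $(1-\kappa)$. The rest of the argument is routine algebra.
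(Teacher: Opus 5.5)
Your argument is correct and matches the reasoning the paper leaves implicit, since the corollary is stated without a separate proof: $s_\theta=-\epsilon_\theta/\sigma_t$ plus the Lipschitz bound on $g_t$ gives the inequality, and the unified EMA-blending suppressor gives $\hat{\mathbf h}_t-\mathbf h_t=\sigma_t(1-\kappa)\,M_t\odot(\mathbf z_t-\bar{\mathbf z}_t)$, so the $\propto$ hides exactly the factor $\sigma_t$. Your U-Net aside is not needed, because the corollary is stated for the unified operator, but as written it is slightly off: on masked entries the detection threshold gives $\bar{\mathbf z}_t\approx(1-\gamma)\mathbf z_t$ (it is the blended output, not the reference, that is $\approx\kappa_{\mathrm{eff}}\mathbf z_t$), and bounding $|\kappa n_t-1|$ only controls $\|M_t\odot\mathbf z_t\|_2$, which you would still have to convert to $\|M_t\odot(\mathbf z_t-\bar{\mathbf z}_t)\|_2$ via $|\bar{\mathbf z}_{t+1,i}|\le e^{-\lambda}|\mathbf z_{t,i}|$ (this needs $\lambda>0$ and yields an inequality, not a proportionality).
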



In the outlier regime selected by our detector, the following 
lemma shows that replacing masked EMA blending with a masked 
channel-wise scaling is justified: the only discrepancy consists 
of (i)~a coefficient mismatch $\tilde\kappa-\alpha_t$, 
which we empirically confirm is near zero at our chosen 
hyperparameters, and 
(ii)~a provably small EMA residual suppressed by the detection 
threshold $\lambda$.

\begin{lemma}
Consider the naive EMA blending operator:
$
\mathbf u_t^{\mathrm{EMA}} := \kappa \mathbf z_t + (1-\kappa)\bar{\mathbf z}_t,\quad \kappa\in[0,1].
$
For U-Net, define the channel statistic $n_t\in\mathbb R^{C\times 1\times 1}$ as in Sec.~\ref{sec:sup-analysis}
and let the channel-aware scaling coefficient be
$
\alpha_t := \kappa\, n_t.
$
Define the U-Net scaling output in the scaled-latent space as
$
\mathbf u_t^{\mathrm{UNet}} := \alpha_t \odot \mathbf z_t.
$
Let $\tilde\kappa := 1-\gamma(1-\kappa)=\kappa+(1-\kappa)(1-\gamma)$.
Then for any $p\in[1,\infty]$,
\[
\|M_t\odot(\mathbf u_t^{\mathrm{EMA}}-\mathbf u_t^{\mathrm{UNet}})\|_p
\;\le\;
\|M_t\odot(\tilde\kappa-\alpha_t)\odot \mathbf z_t\|_p
\;+\;
(1-\kappa)\gamma e^{-\lambda}\,\|M_t\odot \mathbf z_t\|_p.
\]
\end{lemma}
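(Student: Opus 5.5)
We expand the EMA and then use the detection mask to control the term involving the previous average. The EMA recursion gives $\bar{\mathbf z}_t=\gamma\bar{\mathbf z}_{t+1}+(1-\gamma)\mathbf z_t$. Substituting this into $\mathbf u_t^{\mathrm{EMA}}$ yields
\[
\mathbf u_t^{\mathrm{EMA}}=\bigl(\kappa+(1-\kappa)(1-\gamma)\bigr)\mathbf z_t+(1-\kappa)\gamma\,\bar{\mathbf z}_{t+1}=\tilde\kappa\,\mathbf z_t+(1-\kappa)\gamma\,\bar{\mathbf z}_{t+1}.
\]
Hence
\[
M_t\odot(\mathbf u_t^{\mathrm{EMA}}-\mathbf u_t^{\mathrm{UNet}})=M_t\odot(\tilde\kappa-\alpha_t)\odot\mathbf z_t+(1-\kappa)\gamma\,M_t\odot\bar{\mathbf z}_{t+1}.
\]
Here $\alpha_t$ is broadcast over spatial positions. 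The triangle inequality for $\|\cdot\|_p$, valid for every $p\in[1,\infty]$, reduces the claim to the entrywise bound $\|M_t\odot\bar{\mathbf z}_{t+1}\|_p\le e^{-\lambda}\|M_t\odot\mathbf z_t\|_p$.

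This is where the detector enters. On every entry with $M_t=1$ we have $\Delta>\lambda$, that is,
\[
\log\bigl|z_t/\bar z_{t+1}\bigr|>\lambda,
\]
so $|\bar z_{t+1}|<e^{-\lambda}|z_t|$ holds entrywise on the mask. We use the convention that entries with $\bar z_{t+1}=0$ have $\Delta=+\infty$ and trivially satisfy the bound. Multiplying by $M_t$ gives the pointwise inequality $|M_t\odot\bar{\mathbf z}_{t+1}|\le e^{-\lambda}|M_t\odot\mathbf z_t|$ componentwise. Monotonicity of $\ell_p$ norms under componentwise domination then yields the desired bound for all $p$, including $p=\infty$.

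The main obstacle is interpretive rather than technical. We must make sure the mask in the lemma is the same $M_t=(\Delta>\lambda)$ computed with $\bar{\mathbf z}_{t+1}$, and not with $\bar{\mathbf z}_t$. We must also fix how $\Delta$ is defined at zero entries. A further point is that the U-Net mask may be channel-level rather than entrywise. In that case I would state the argument for the entrywise mask on which $\Delta$ is thresholded, and note that a coarser mask needs the threshold condition to hold on all selected entries. The algebra itself is routine.
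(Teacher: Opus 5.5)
Your proposal is correct and matches the paper's proof essentially step for step. You expand $\bar{\mathbf z}_t$ via the EMA recursion to get $\tilde\kappa\,\mathbf z_t+(1-\kappa)\gamma\,\bar{\mathbf z}_{t+1}$, subtract $\alpha_t\odot\mathbf z_t$, and use $\Delta_i>\lambda \Rightarrow |\bar z_{t+1,i}|\le e^{-\lambda}|z_{t,i}|$ on the mask together with the triangle inequality and monotonicity of $\ell_p$ norms; the paper does the same, except that it states the entrywise bound before taking masked $\ell_p$ norms, and its appendix version likewise defines the mask with $\bar{\mathbf z}_{t+1}$ and assumes a well-defined (numerically stabilized) log-ratio.
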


\begin{table*}[t]
    \centering
    \caption{Quantitative comparison of Original vs. Fixed models. Arrows indicate the preferred direction for each metric. ``--'' denotes unsupported backbones; N/A indicates that the method produced pure noise (cf. Figure~\ref{fig:fig16}).}
    \label{tab:metric}
    \vspace{-0.5em}
    \renewcommand{\arraystretch}{1.1} 
    \setlength{\tabcolsep}{5pt}     
    
    \resizebox{0.9\textwidth}{!}{ 
    \begin{tabular}{l c c c c c c}
      \toprule
      \textbf{Methods} & \textbf{Metric} & \textbf{SDXL} & \textbf{LCM} & \textbf{Kandinsky 3} & \textbf{PixArt-$\Sigma$} & \textbf{Hunyuan-DiT} \\
      \midrule
      \multirow{3}{*}{Original}
        & FID $\downarrow$         & 18.86 & 22.53 & 21.36 & 28.75 & 30.82 \\
        & HADM $\downarrow$         & 0.227 & 0.152 & 0.250 & 0.316 & 0.294 \\
        & CLIP sim. $\uparrow$    & 1.0 & 1.0 & 1.0 & 1.0 & 1.0 \\
      \midrule
      \multirow{3}{*}{FreeU}
        & FID $\downarrow$         & 31.22 & 27.57 & --    & -- & -- \\
        & HADM $\downarrow$         & 0.351 & 0.237 & -- & -- & -- \\
        & CLIP sim. $\uparrow$    & 0.781 & 0.756 & -- & -- & -- \\
      \midrule
      \multirow{3}{*}{ASCED}
        & FID $\downarrow$         & 22.07 & 27.60 & 22.44 & N/A & N/A \\
        & HADM $\downarrow$         & 0.227 & 0.151 & 0.252 & N/A & N/A \\
        & CLIP sim. $\uparrow$    & 0.945 & 0.995 & 0.924 & N/A & N/A \\
      \midrule
      \multirow{3}{*}{PAG}
        & FID $\downarrow$         & 21.72 & 35.87 & -- & 32.61 & -- \\
        & HADM $\downarrow$         & 0.210 & 0.123 & -- & 0.284 & -- \\
        & CLIP sim. $\uparrow$    & 0.821 & 0.362 & -- & 0.840 & -- \\
      \midrule
      \multirow{3}{*}{\textbf{DUNE}}
        & FID $\downarrow$         & \textbf{18.75} & \textbf{22.41} & \textbf{20.76} & \textbf{27.80} & \textbf{30.67} \\
        & HADM $\downarrow$         & \textbf{0.074} & \textbf{0.052} & \textbf{0.095} & \textbf{0.254} & \textbf{0.283} \\
        & CLIP sim. $\uparrow$    & 0.925 & 0.940 & 0.933 & 0.952 & 0.988 \\
      \bottomrule
    \end{tabular}}
    \vspace{-1.0em}
\end{table*}

\section{Experiments}
\label{experiments}

\begin{figure*}[ht!]
\centering
\includegraphics[width=0.85\linewidth]{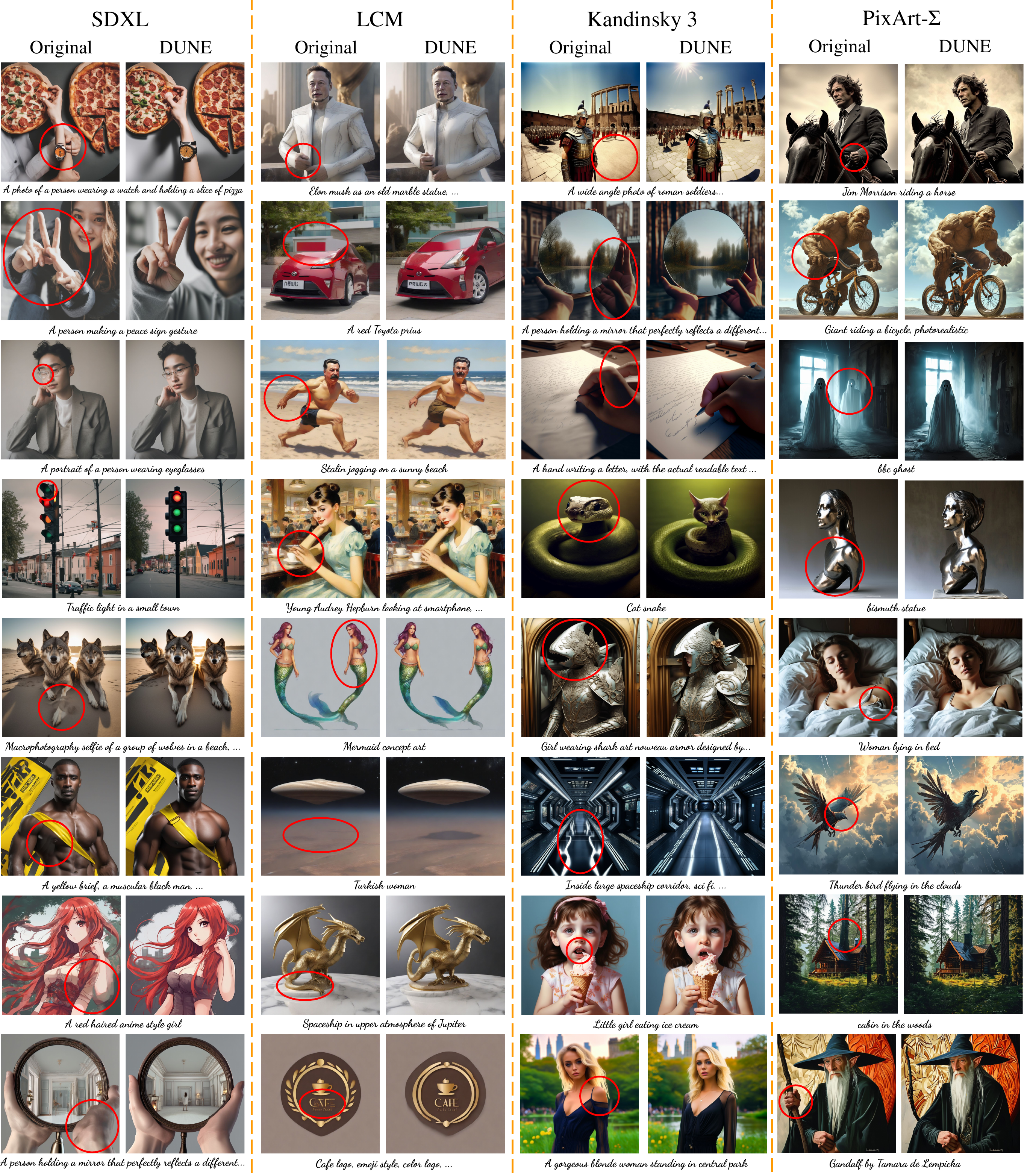}
\caption{An experimental result on high-resolution text to image generation. We give remained Hunyuan-DiT figures in Appendix~\ref{appendix:remained_figures}.}
\label{fig:main}
\vspace{-2.0em}
\end{figure*}

\subsection{Experimental Environments}

We evaluate DUNE across various high-resolution diffusion models. We select widely-adopted U-Net-based models (Stable Diffusion XL (SDXL)~\cite{podell2023sdxl}, Latent Consistency Model (LCM)~\cite{lcm}, and Kandinsky 3~\cite{arkhipkin2024kandinsky}) and Transformer-based models (PixArt-$\Sigma$~\cite{pixart_sigma} and Hunyuan-DiT~\cite{hunyuan}). Our implementations use the publicly available \texttt{diffusers}\footnote{https://github.com/huggingface/diffusers.git} library. Both qualitative and quantitative analyses are provided to demonstrate that DUNE consistently refines and enhances generated images.

We compare against three representative training-free model-augment refinement methods: FreeU~\cite{si2024freeu}, ASCED~\cite{cao2025temporal} and PAG~\cite{PAG}. FreeU reweights backbone and skip features of a pre-trained diffusion U-Net at inference time. ASCED analyzes temporal score dynamics to detect artifact-prone regions and injects corrective noise during sampling. PAG controls internal latent of self-attention layer of diffusion backbones. We give detailed description of these models in Appendix~\ref{appendix:relatedworks}. Note that we use official implementations in \texttt{diffusers} and the authors’ code for ASCED\footnote{\url{https://github.com/YuCao16/ASCED.git}}, following the authors' published default settings.

For quantitative evaluation, we assess the overall generation quality by using the Fréchet Inception Distance (FID) metric\footnote{https://github.com/boomb0om/text2image-benchmark} calculated on 5,000 images generated from COCO prompts~\cite{COCOdataset}. 
We also report HADM-L~\cite{HADM}, a detection-based metric that localizes fine-grained anatomical defects (e.g., extra fingers, distorted joints). For HADM analysis, we generate 5,000 images from human-related prompts by using the human-related annotation files from official COCO resource, following HADM paper~\cite{HADM}. Lastly, since we observed existing refinement methods fail to preserve consistent semantic (e.g., FreeU), we calculate clip similarity to check original-to-refined semantic consistency: cosine similarity of CLIP embeddings between original and refined images. Details of all metrics appear in Appendix~\ref{appendix:metric}. Note that unless otherwise stated, all quantitative results report the detect phase only (detect–suppress without detail-phase reweighting), ensuring a fair comparison with baselines that do not employ component scaling~\footnote{We adopt this 
over text--image CLIP score, as we found that overly saturated outputs can paradoxically achieve higher CLIP scores, making it unreliable for evaluating refinement quality. 
Further analysis is provided in Appendix~\ref{appendix:metric}.}.

\subsection{Enhancing Quality with DUNE}  




\begin{figure*}[t]
    \centering
    
    \begin{minipage}[c]{0.5\textwidth}
        \centering
        \captionof{table}{User survey result for qualitative comparison of Original (red) vs. DUNE outputs (blue).}
        \vspace{1.0em}
    \renewcommand{\arraystretch}{1.1} 
    \setlength{\tabcolsep}{5pt}     
        \resizebox{0.95\linewidth}{!}{
        \begin{tabular}{lccc}
          \toprule
          \textbf{Model} & \textbf{Dataset} & \textbf{Original} & \textbf{DUNE} \\
          \midrule
          \multirow{2}{*}{DDPM} & CelebA-HQ & 6.60\% & \textbf{93.40\%} \\
                                & Bedroom   & 15.28\% & \textbf{84.72\%} \\
          \midrule
          \multirow{2}{*}{VE}   & FFHQ      & 11.51\% & \textbf{88.49\%} \\
                                & Church    & 16.67\% & \textbf{83.33\%} \\
          \bottomrule
        \end{tabular}}


        \vspace{.5em} 

        \includegraphics[width=0.8\linewidth]{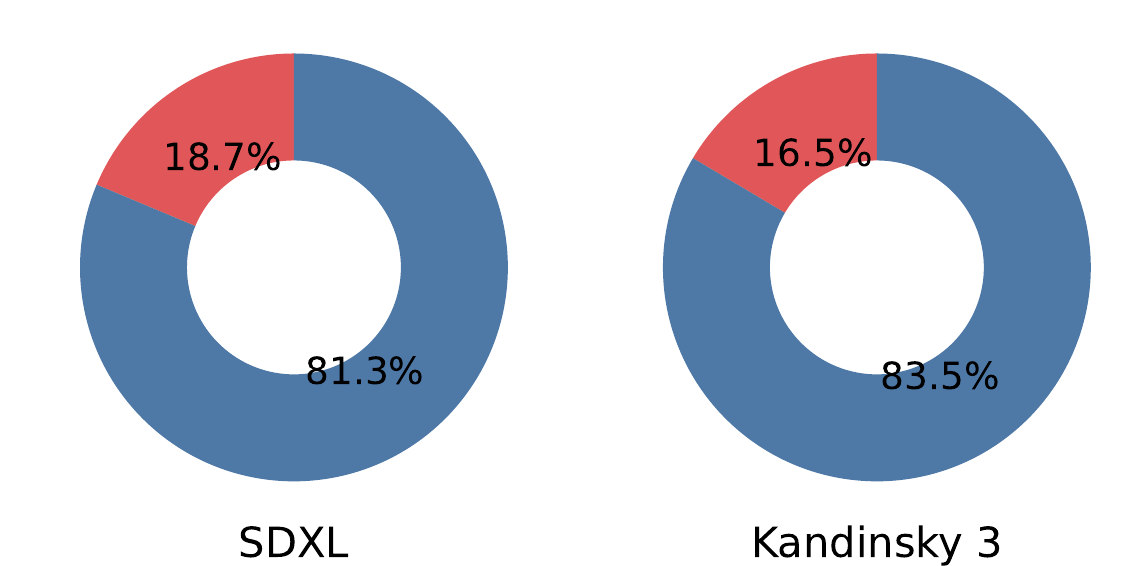}
        \label{tab:usersurvey}
        \label{figure:piechart}
    \end{minipage}%
    \hfill
    \begin{minipage}[c]{0.45\textwidth}
        \centering
        \includegraphics[width=\linewidth]{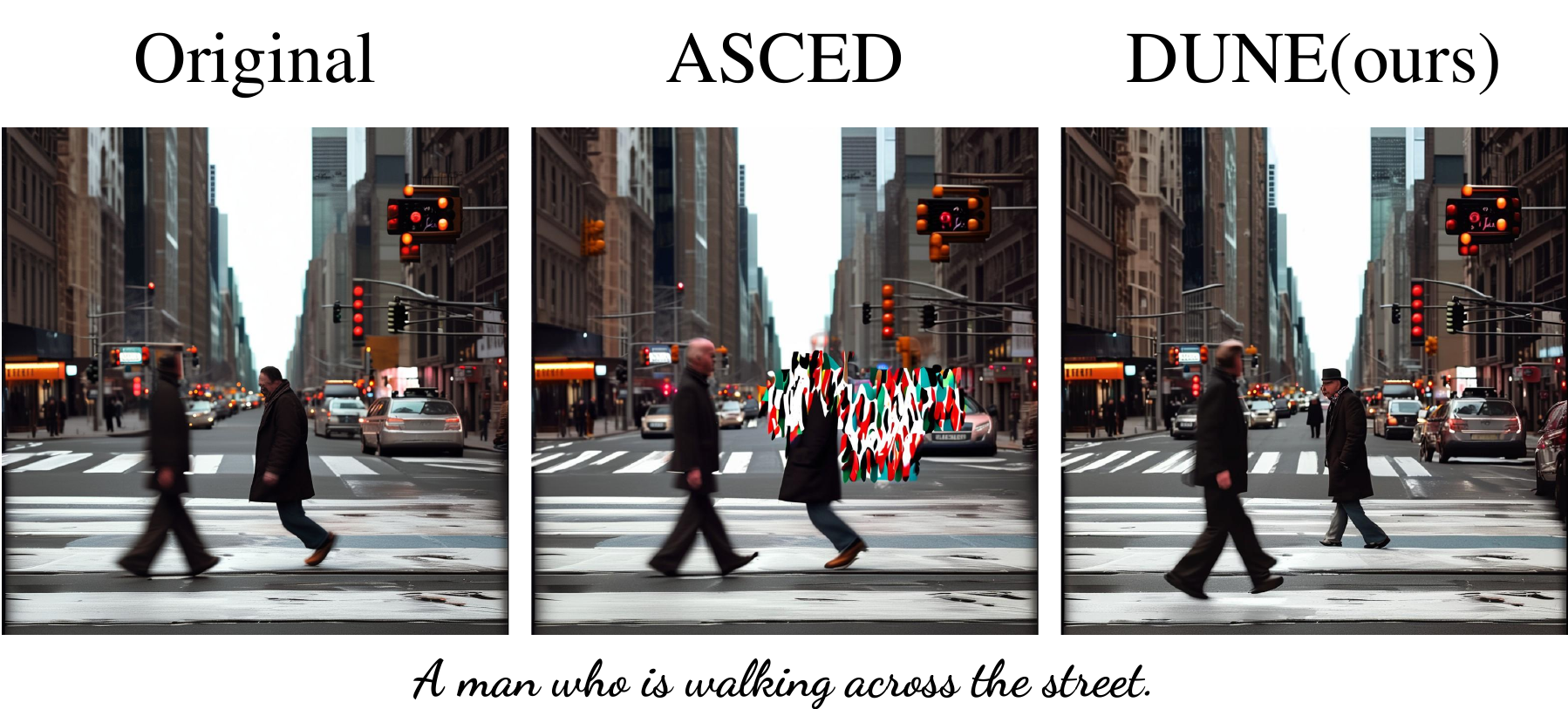}
        \vspace{-1.5em} 
        \captionof{figure}{Visual comparison of ASCED and DUNE on PixArt-$\Sigma$.}
        \label{fig:fig16}
        
        \vspace{.5em} 
        
        \includegraphics[width=0.8\linewidth]{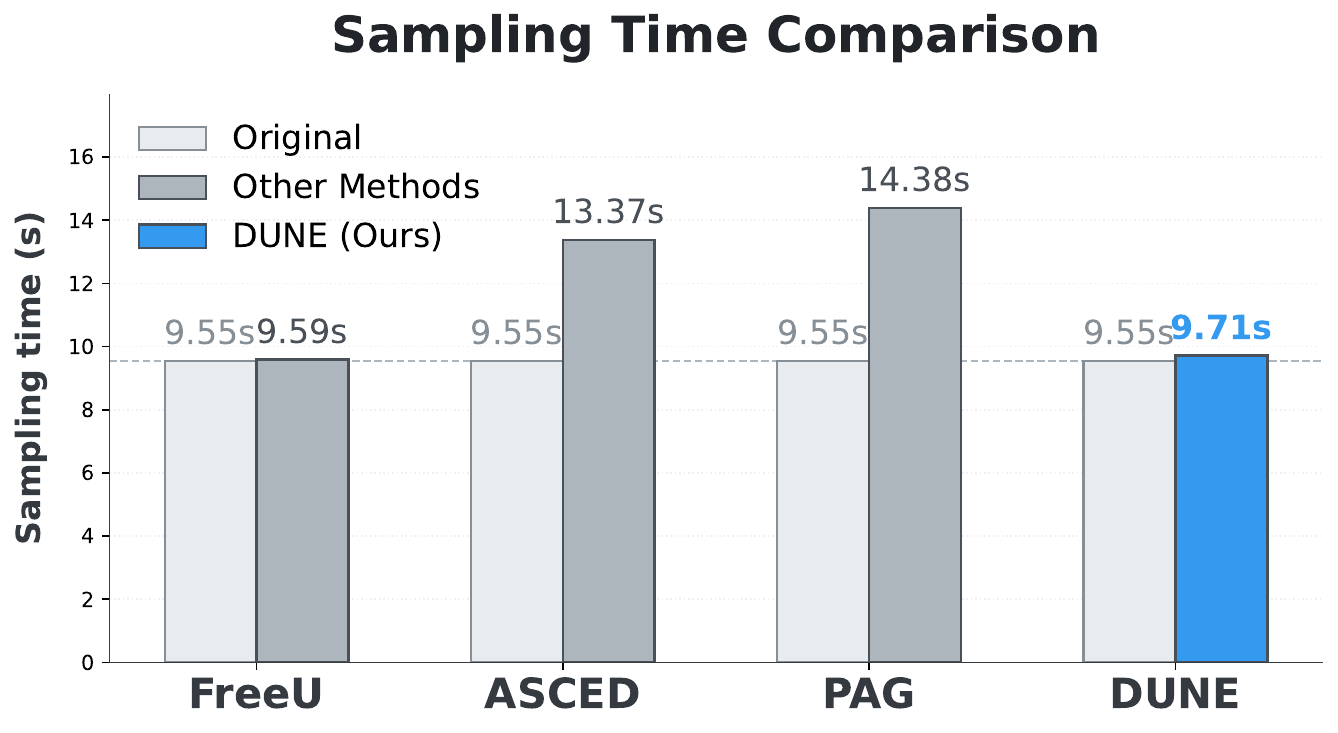}
        \vspace{-.5em}
        \captionof{figure}{Sampling time comparison on SDXL.}
        \label{fig:fig16_1}
    \end{minipage}
    \vspace{-1.em}
\end{figure*}

We first verify that DUNE suppresses detector-selected unstable regions. Figure~\ref{fig:acceleration_dynamics} (extending Figure~\ref{fig:score_plot}) depicts score-acceleration in target/non-target regions before/after DUNE. DUNE attenuates the acceleration spikes in the detector-selected target region, while the non-target region remains largely unchanged. We quantify this effect using normalized acceleration amplification (NAA) and excess acceleration gap reduction (EAGR):
{\small
\begin{align*}
    \mathrm{NAA}(M)=
\mathbb{E}_{t}\!\left[
\frac{\sqrt{\mathbb{E}_{c,i\in M}\,a_t(c,i)^2}}
     {\sqrt{\mathbb{E}_{c,i\notin M}\,a_t(c,i)^2}+\epsilon}
\right],\;\;\;\;
\mathrm{EAGR}=1-\frac{\mathrm{NAA}_{D}^{TopK}-\mathrm{NAA}_{D}^{Rand}}{\mathrm{NAA}_{O}^{TopK}-\mathrm{NAA}_{O}^{Rand}+\epsilon}.
\end{align*}}
Here \(a_t\) denotes score acceleration, \(D/O\) denote DUNE/original sampling, \(\mathrm{TopK}\) is the detector mask, and \(\mathrm{Rand}\) is a density-matched random mask. DUNE reduces the excess target-vs-random acceleration gap by \(24.9\%\) on SDXL and \(18.1\%\) on PixArt-\(\Sigma\); random-mask control shows no comparable reduction.

These diagnostics are consistent with our intervention-location ablations. In U-Net backbones, Figure~\ref{fig:component_plot} shows that suppressing shallower skip/upsampling features can increase score deviations in hallucinated regions, whereas h-space suppression reduces them. For Transformer backbones, Table~\ref{tab:pixart_layer_ablation} shows that the \(3/4\)-depth self-attention layer gives the best FID. Table~\ref{tab:random_mask_ablation} further shows that density-matched random h-space suppression degrades FID, indicating that the gain comes from detector-selected intervention in deep low-noise latents.

\begin{figure}[t]
  \centering
  \vspace{-1.em}
  \begin{subfigure}{0.75\linewidth}
    \centering
    \includegraphics[width=\linewidth]{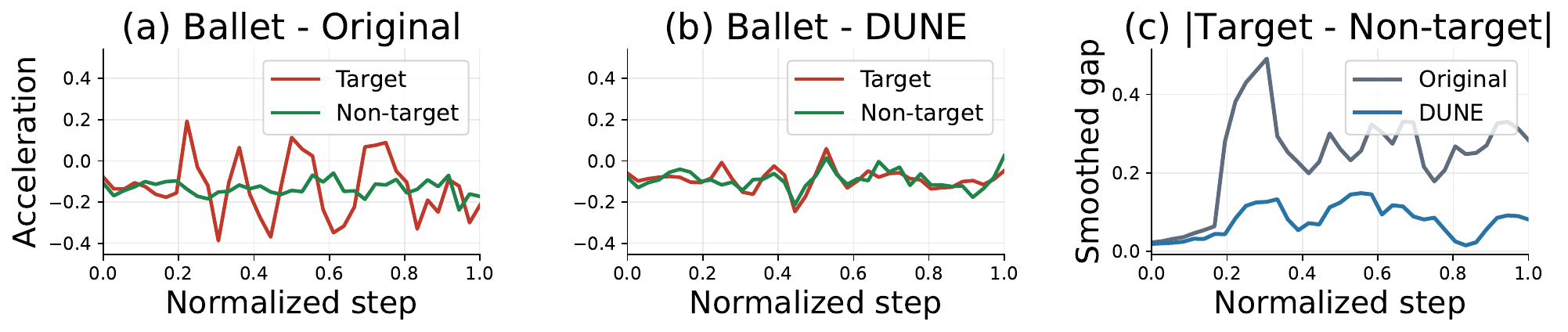}
    \vspace{-2.em}
    \label{fig:ballet}
  \end{subfigure}
  \begin{subfigure}{0.75\linewidth}
    \centering
    \includegraphics[width=\linewidth]{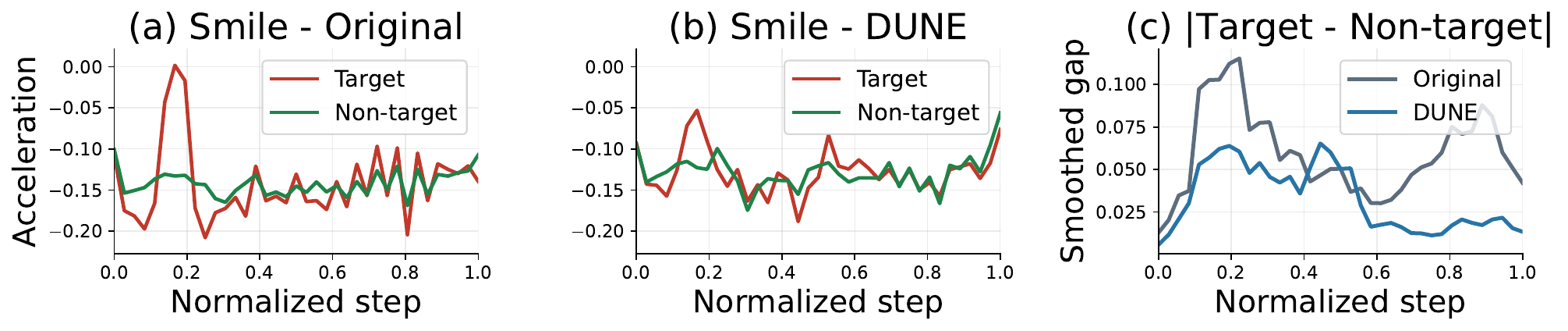}
    \label{fig:smile}
  \end{subfigure}
  \vspace{-2.em}
  \caption{Before/after score acceleration. Target denotes the detector-selected mask and non-target denotes its complement. Curves omit the final 10 denoising steps and are normalized only for visualization.}
  \label{fig:acceleration_dynamics}
  \vspace{-1.em}
\end{figure}

Table~\ref{tab:metric} shows that DUNE exhibits no artifact--fidelity trade-off in our evaluation: it improves both FID and HADM over the vanilla generator on all five tested backbones. On SDXL, every supported baseline worsens FID, whereas DUNE improves FID by \(0.11\) and gives the largest HADM drop (\(0.227\!\to\!0.074\)). Unsupported entries are reported explicitly because the corresponding official/default implementations are unavailable for those backbones: FreeU lacks official/default settings for Kandinsky~3, PixArt-\(\Sigma\), and Hunyuan-DiT, while PAG lacks official/default settings for Kandinsky~3 and Hunyuan-DiT. DUNE also maintains high original-to-refined CLIP similarity, indicating low global semantic drift from the input generation. Overall, Table~\ref{tab:metric} shows that DUNE reduces localized artifacts while preserving the global content of the original sample.

Notably, ASCED, which edits the score trajectory directly, performs poorly on Transformer-based high-resolution text-to-image settings, introducing noisy artifacts (cf. Figure~\ref{fig:fig16}). This suggests that direct score correction can deviate the denoising path and that Transformers are sensitive to abrupt trajectory changes and channel-agnostic masking; DUNE's latent-level, backbone-aware suppression avoids these failure modes. Even though PAG achieves better HADM than original images, PAG shows overly saturated images with simplified edges, as reported in~\cite{PAG}. We also give visual comparison in Appendix~\ref{appendix:refinement}.

Finally, we summarize the practical settings of DUNE. We sweep the detection percentile \(p\) and suppression factor \(\kappa\) on LCM and Kandinsky~3, computing FID on \(1{,}000\) images for each pair
\((p,\kappa)\in\{0.3,0.5,0.7,0.9\}\times\{0.1,0.3,0.5,0.7,0.9\}\).
Figure~\ref{fig:ablation_pick} in Appendix~\ref{appendix:sensitivity} shows the resulting heatmaps. Across both backbones, \(p=0.9\) gives the best FID, so we fix \(p=0.9\) for all main experiments. The suppression factor \(\kappa\) is fixed once per backbone, with values summarized in Table~\ref{tab:hyperparam}; no per-prompt tuning is used. For Transformer backbones, the target layer is fixed at \(3/4\) depth, following the depth ablation in Table~\ref{tab:pixart_layer_ablation}.

\subsection{Qualitative Evaluation}

Figure~\ref{fig:main} compares the generated images from the original generation process (left column) and the proposed DUNE (right column) across various diffusion models. In the original generations, diffusion models often struggle with fine structures (e.g., fingers, teeth) and may introduce extra parts, producing unrealistic complexity. DUNE suppresses these extraneous components and yields more coherent, detailed structures. When prompts are challenging or intentionally unusual (e.g., cat snake), DUNE effectively addresses hallucinations, removing artifacts while preserving semantic consistency with the original generation.

Appendix~\ref{appendix:refinement} provides qualitative comparisons with other baselines. Even with the default hyperparameters reported in their original papers, FreeU and PAG produce overly saturated images with simplified edges, losing the original semantic intent, as the authors mentioned~\cite{si2024freeu,PAG} (see also Appendix~\ref{appendix:freeu}). ASCED, which directly 
modifies the score trajectory, tends to inject noise that degrades local structure, particularly in high-resolution Transformer-based generations. In contrast, DUNE reduces artifacts without inducing 
semantic drift or over-saturation by operating on deep latents rather than the score output itself.

We also conduct a user study with 50 volunteers. For breadth, we include unconditional models (DDPM and VE) on CelebA-HQ, Bedroom, FFHQ, and Church, as well as high-resolution text-to-image models (SDXL and Kandinsky 3). Prompts and generated images are provided in Appendix~\ref{appendix:usersurvey}. Users consistently favored DUNE-enhanced outputs, supporting the effectiveness of our approach in improving perceived quality and reducing hallucinations.

Finally, we conduct several additional experiments in Appendix~\ref{appendix:down}. We selectively scale skip connections and upsampling blocks both upwards and downwards during the detail phase, contrasting with previous methods that predominantly amplify these components~\cite{si2024freeu}. This enables precise control over brightness and tone, allowing both brighter and dimmer images.

\section{Conclusion}
\label{conclusion}

In this work, we analyzed the functional roles of U-Net components throughout the diffusion process, clarifying where and when artifact-associated internal dynamics appear during diffusion sampling. Based on these findings, we introduced DUNE, a training-free, phase-aware refinement framework whose core detect--suppress mechanism identifies and corrects anomalous deep latents, yielding consistent quality gains across both U-Net and Transformer backbones. The framework additionally offers an optional detail phase for user-driven stylistic adjustments. We extend DUNE into Transformer-based methods and also achieve performance improvements. Extensive evaluations show that DUNE reduces artifacts while preserving semantic consistency. 

\textbf{Limitations}
Although we suggest some guideline of choosing hyperparameters in sensitivity analysis, our method requires minimal hyperparameter tuning (percentile threshold and scaling factor). Fully automatic threshold setting remains for future work. 
\section*{Acknowledgement}

This work was supported by Institute of Information \& communications Technology Planning \& Evaluation (IITP) grant funded by the Korea government (MSIT) (No. RS-2022-II220984, Development of Artificial Intelligence Technology for Personalized Plug-and-Play Explanation and Verification of Explanation; No. RS-2024-00457882, AI Research Hub Project), and by the Korea Evaluation Institute of Industrial Technology (KEIT) grant funded by the Korea government (MOTIE) (No. RS-2025-25458052, Development of Core Technologies for Manufacturing Foundation Models).
\newpage
%
%
\bibliographystyle{splncs04}
\bibliography{main}

\clearpage
\setcounter{page}{1}

\section{Hyperparameter Setting and Miscellaneous environments}

\subsection{Sensitivity Analysis on Percentiles and Kappas}
\label{appendix:sensitivity}

\begin{figure}[t!]
\centering
\includegraphics[width=0.8\linewidth]{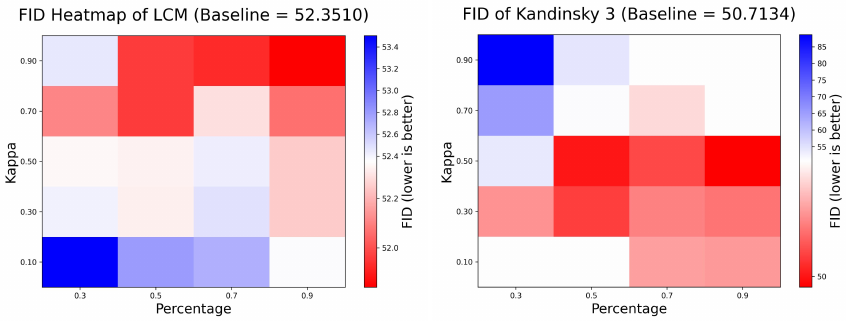}
\caption{Heatmap of FID across different percentages and kappas.}
\label{fig:ablation_pick}
\end{figure}

In this section, we investigate the sensitivity of DUNE with respect to its two hyperparameters: the detection percentile $p$ and the suppression scaling factor $\kappa$ (see Section~\ref{sec:analysis}). Specifically, we generate 1000 images using LCM and Kandinsky 3 models and compute their FID across combinations of hyperparameters $(p, \kappa) \in \{0.3, 0.5, 0.7, 0.9\} \times \{0.1, 0.3, 0.5, 0.7, 0.9\}$. 

Figure~\ref{fig:ablation_pick} illustrates heatmaps of FID values across these hyperparameter combinations. The best results consistently occur at $p=0.9$, corresponding to suppressing the top 10\% of clearly identified anomalies and thereby we set default value of $p$ by 0.9 (Table~\ref{tab:hyperparam}). FID scores generally improve (decrease) either with moderate suppression levels for Kandinsky 3 ($\kappa=0.5$) or at higher suppression levels for LCM.

\subsection{Implementation details}

\begin{table}[h]
  \centering
  \vspace{-1.5em}
  \caption{Detailed hyperparameter settings for DUNE.}
  \label{tab:hyperparam}
  \vspace{-0.5em}
  \resizebox{0.65\linewidth}{!}{%
    \renewcommand{\arraystretch}{1.4}
    \begin{tabular}{lccccc}
      \toprule
      & {SDXL} & {LCM} & {Kandinsky 3} & {PixArt-$\Sigma$} & {Hunyuan-DiT} \\
      \midrule
      \textbf{Kappa ($\kappa$)} & 0.3 & 0.3 & 0.5 & 0.95 & 0.85\\
      \bottomrule
    \end{tabular}
  }
  \vspace{-0.5em}
\end{table}

We provide the practical hyperparameter settings used throughout our experiments. The percentile threshold is fixed to $p=0.9$, as supported by the sensitivity analysis in Appendix~\ref{appendix:sensitivity}, and the EMA coefficient is fixed to $\gamma=0.7$ throughout all experiments. Consequently, the only backbone-dependent control parameter is the suppression factor $\kappa$. 
For Transformer backbones, we use a fixed target layer at $3/4$ depth of the
self-attention stack. This choice is motivated by the mid-to-deep denoising
property discussed in Section~\ref{sec:theorems}, qualitatively supported by Figure~\ref{fig:latent_vis}, and
quantitatively validated by the PixArt-$\Sigma$ ablation in
Appendix~\ref{appendix:transformer_layer_ablation}.
We also use a short warm-up before activating suppression to avoid intervening before stable semantics emerge; this is kept fixed as an implementation default rather than tuned per prompt or per image. Specifically, we start suppression after three steps for long samplers (SDXL and Kandinsky~3) and after one step for LCM. All remaining settings follow the default parameters of the underlying diffusion models.

All experiments are conducted by using the following software and hardware environments: \textsc{Ubuntu} 18.04 LTS, \textsc{Python} 3.12.3, \textsc{CUDA} 12.1, \textsc{NVIDIA} Driver 530.30.02, Intel Xeon Gold 6342 CPU, and \textsc{RTX A6000}.

\section{Empirical observation of the Transformer intervention}\label{appendix:attemperical}

This section supports the claim of Section~\ref{sec:theorems} that, in Transformer backbones, mid-to-deep self-attention latents become progressively denoised and therefore serve as the appropriate intervention space for DUNE. Our goal is twofold: (i) to visualize this depth-wise denoising trend, and (ii) to justify the practical choice of the target self-attention layer used in our experiments.

\subsection{Latent-map visualization across depth and time}

\begin{figure*}[t!]
\centering
\includegraphics[width=0.9\linewidth]{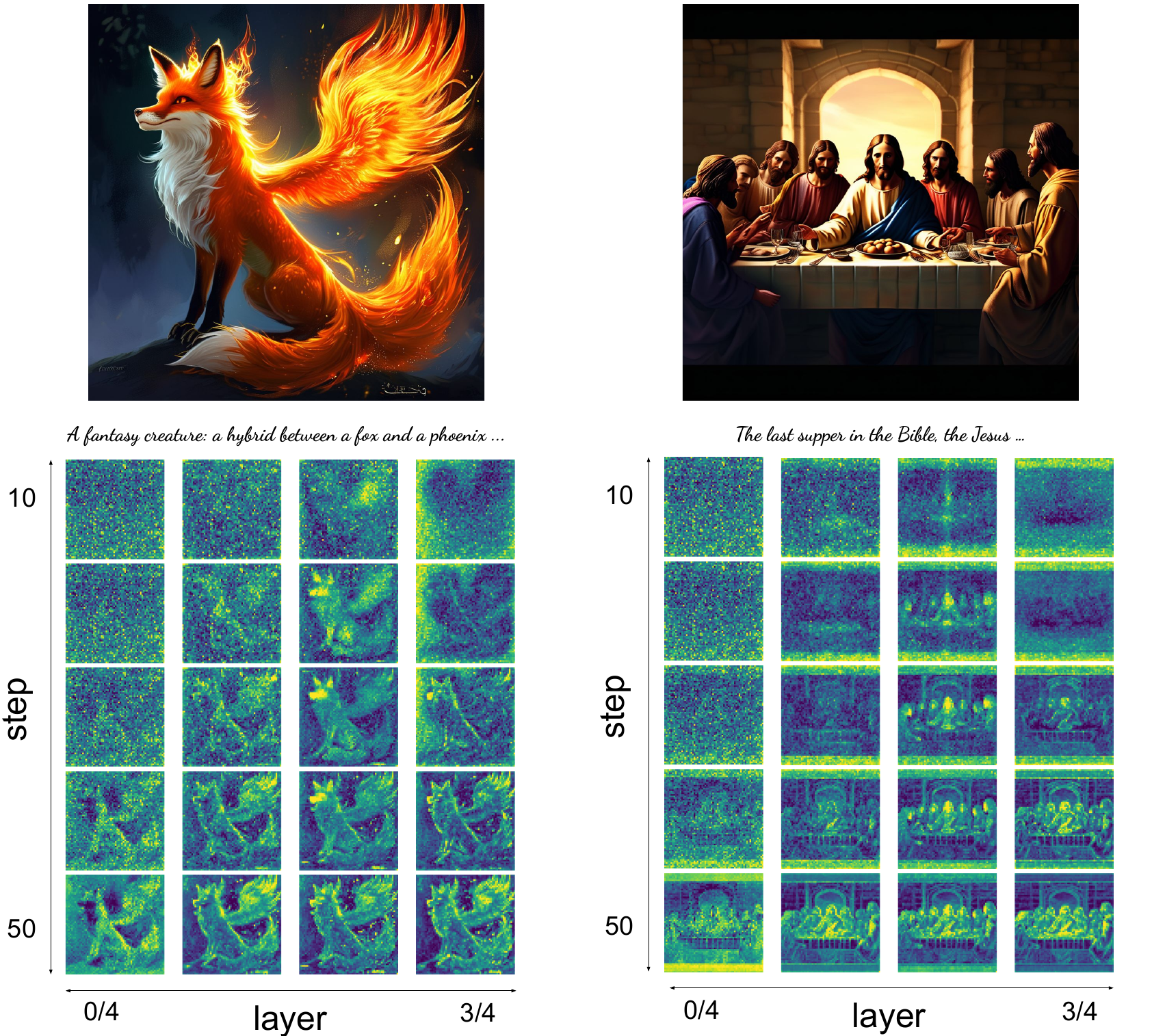}
\caption{Self‑attention latent maps over depth and time for PixArt‑$\Sigma$ and Hunyuan‑DiT. For each model, we visualize token maps obtained by channel‑averaging the post‑attention activations at four depths of the Transformer stack—$0/4$, $1/4$, $2/4$, and $3/4$ of the total attention depth—sampled every 10 denoising steps. The terminal layers are omitted from visualization because they primarily output the directional noise ($\epsilon$) following the diffusion model's objective.}
\label{fig:latent_vis}
\end{figure*}

In this section, we provide qualitative evidence supporting the theoretical results in Section~\ref{sec:theorems}. For U‑Net backbones, denoising concentrates in the downsampled bottleneck (``$h$‑space''). However, because this space has much lower spatial resolution than the input, purification is difficult to visualize directly and is typically inferred from theory and prior analyses~\cite{kwon2022diffusion}. 
In contrast, Transformer backbones operate on patchified tokens via self‑attention, which preserves spatial structure and enables direct visualization of latent maps.

Figure~\ref{fig:latent_vis} reports representative results for PixArt‑$\Sigma$ and Hunyuan‑DiT. 
At the first layer ($0/4$), the token maps are noisy, and as depth increases toward the middle ($1/4$–$2/4$), the maps become substantially cleaner. This trend matches the prediction of Proposition~\ref{thm:attn_softmax}: the softmax weights form a data‑adaptive kernel whose concentration factor $\sum_j w_j^2$ decreases in the middle layers, yielding stronger noise reduction. 
Near $3/4$ of the stack, the maps sharpen and become clearer. 
These representative examples are consistent with the proposed mid-to-deep denoising trend (here sampled every 10 denoising steps), supporting the use of mid‑to‑deep self‑attention representations as the Transformer analogue of the $h$‑space for detection and suppression.

\subsection{Target-layer ablation for Transformer backbones}
\label{appendix:transformer_layer_ablation}

\begin{table}[t]
\centering
\caption{Target-layer ablation on PixArt-$\Sigma$. We vary only the target
self-attention layer used by DUNE and keep all other settings fixed. Lower FID
is better.}
\label{tab:pixart_layer_ablation}
\setlength{\tabcolsep}{15pt}
\begin{tabular}{ccc}
\hline
Relative depth & Layer index & FID $\downarrow$ \\
\hline
$0/4$ & $1/28$  & 31.65 \\
$1/4$ & $7/28$  & 30.14\\
$1/2$ & $14/28$ & 28.95 \\
$3/4$ & $21/28$ & 27.80 \\
Last  & $28/28$ & 32.68 \\
\hline
\end{tabular}
\end{table}

The theoretical result in Section~\ref{sec:theorems} motivates applying DUNE to mid-to-deep
self-attention representations, where token features become semantically
integrated and increasingly denoised. This tendency is also visible in
Figure~\ref{fig:latent_vis}: early layers remain noisy, middle layers become cleaner, and features near $3/4$ depth appear sharper. To validate this design
quantitatively, we apply DUNE to PixArt-$\Sigma$ while varying only the target
self-attention layer. Specifically, we test four representative depths
($1/4$, $1/2$, $3/4$, and the last layer of the Transformer stack) while
keeping all other settings fixed. As shown in
Table~\ref{tab:pixart_layer_ablation}, the $3/4$-depth layer yields the best
FID, supporting our default choice for Transformer backbones. Overall, the
theory justifies a mid-to-deep operating region, while $3/4$ serves as a robust
empirical default rather than a claim of universal optimality.

Also, to clarify the scope of our unified claim, we do not assume that Transformer backbones admit the same branch-wise decomposition as U-Net skip connections
and upsampling paths. Instead, we claim that both architectures contain a deep
internal latent in which semantic structure is more integrated and stochastic
fluctuations are relatively attenuated, making it a stable intervention point
for DUNE.

Figure~\ref{fig:latent_vis} qualitatively supports this view: for both PixArt-$\Sigma$ and
Hunyuan-DiT, post-attention token maps become progressively cleaner from early
to middle layers, while deeper layers remain semantically sharper. We further
validate the practical target-layer choice on PixArt-$\Sigma$ by applying DUNE
at four representative depths ($1/4$, $1/2$, $3/4$, and the last layer) while
keeping all other settings fixed. As shown in
Table~\ref{tab:pixart_layer_ablation}, the $3/4$-depth layer achieves the best
FID, supporting our default choice for Transformer backbones. Thus, the unified
aspect of DUNE lies in the detect--suppress principle and the choice of a deep
semantically integrated intervention space, rather than in enforcing identical
component-level analyses across U-Net and Transformer architectures.

\section{Robustness on non-hallucinated images}\label{appendix:ablation}

\begin{figure}[t!]
    \centering
    \begin{subfigure}[b]{0.8\linewidth}
        \centering
        \includegraphics[width=\linewidth]{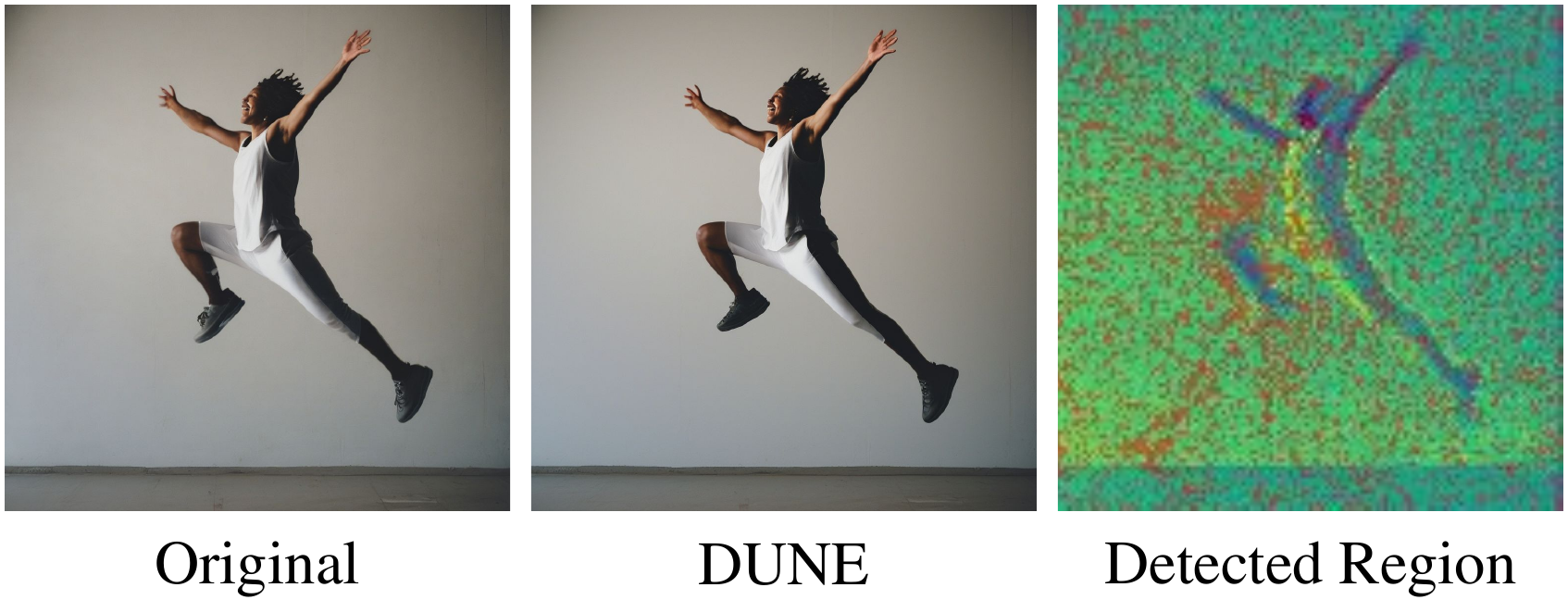}
        \caption{Visualization of non-artifact scenarios. This image was generated by SDXL with the prompt ``\textit{A person jumping with arms raised}.''}
        \label{fig:ablation_nonhall}
    \end{subfigure}

    \vspace{1em} 

    \begin{subfigure}[b]{0.8\linewidth}
        \centering
        \includegraphics[width=\linewidth]{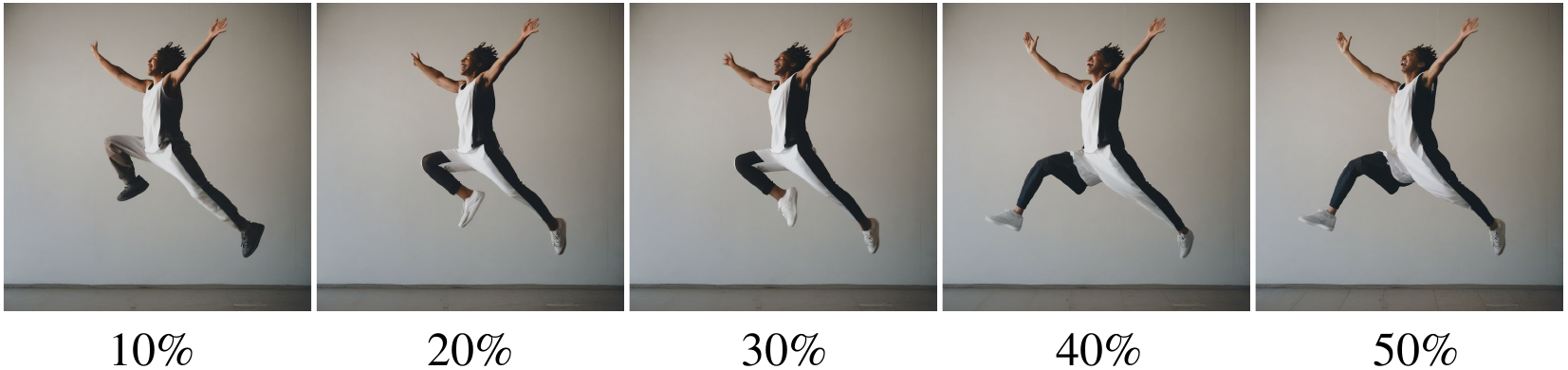}
        \caption{Results obtained by randomly suppressing h-space features.}
        \label{fig:ablation_random_perc}
    \end{subfigure}

    \caption{Investigation on applying DUNE to non-hallucinated images.}
\end{figure}

\begin{table}[t]
\centering
\caption{Random-mask suppression ablation on SDXL. We randomly select h-space entries during the detect phase and apply the same suppression operator, while keeping all other settings fixed.}
\label{tab:random_mask_ablation}
\setlength{\tabcolsep}{6pt}
\begin{tabular}{ccccccc}
\hline
Mask ratio & Original & 10\% & 20\% & 30\% & 40\% & 50\%\\
\hline
FID $\downarrow$ & 18.86 & 18.92 & 19.11 & 19.51 & 20.14 & 20.59 \\
\hline
\end{tabular}
\end{table}

For images that exhibit no clear visual artifacts or anatomical distortions, it might seem irrational to consistently correct a fixed percentile of the h-space. To investigate this concern, we first visualize how DUNE identifies potentially problematic areas in the latent space for non-artifact images. As demonstrated in Figure~\ref{fig:ablation_nonhall}, when no artifacts are present, the detected areas (marked in red dots) do not concentrate on any specific region, but instead spread evenly throughout the latent space.

Therefore, to further test the robustness of DUNE, we applied our suppression method randomly across the h-space, using the same correction scaling factor ($\kappa$) as in our main experiments. Interestingly, Figure~\ref{fig:ablation_random_perc} illustrates that image semantics remain largely intact even when up to 50\% of the latent space features are randomly suppressed. Specifically, when only 10\% of features are suppressed randomly, the visual quality remains nearly indistinguishable from the original, emphasizing the stability and semantic preservation capabilities of DUNE.

Table~\ref{tab:random_mask_ablation} further highlights the importance of informative masking. 
When we replace the detector-identified mask with a random mask and apply the same h-space suppression operator in SDXL, FID degrades consistently as the masking ratio increases. 
This suggests that the improvement of DUNE does not come from arbitrarily suppressing early h-space features, but from selectively suppressing detector-identified anomalous regions. 
At the same time, the relatively modest degradation under random masking is consistent with Figure~\ref{fig:ablation_random_perc}, indicating that the suppression operator itself is not overly destructive to non-hallucinated samples.

\section{Metrics}\label{appendix:metric}

\begin{figure}[t!]
\centering
\includegraphics[width=0.95\linewidth]{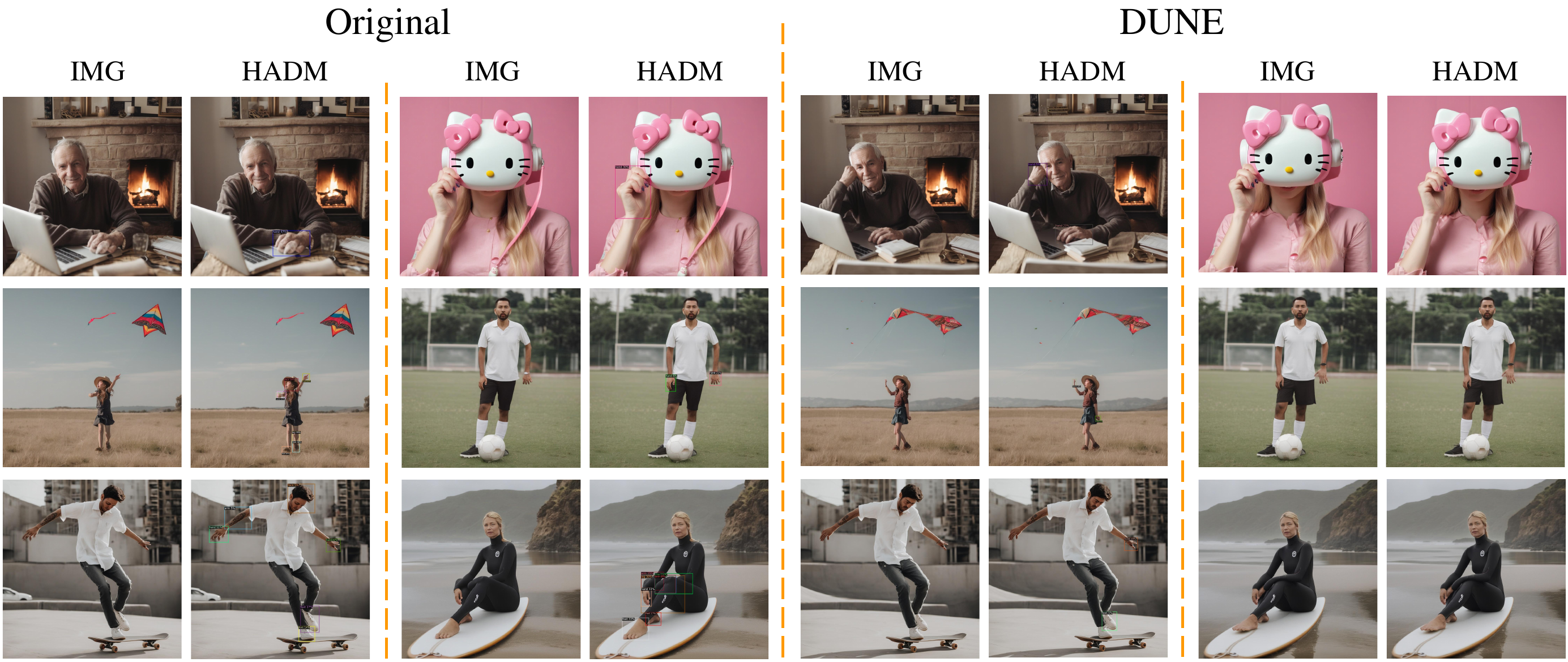}
\caption{Representative examples of detected samples in SDXL through HADM.}
\label{fig:hadm}
\end{figure}

Also, we use HADM-L~\cite{HADM} to measure the efficacy of DUNE in mitigating localized anatomical distortions. By employing a ViTDet-based object detection architecture, HADM-L explicitly localizes and classifies fine-grained structural defects. Specifically, it predicts bounding boxes around implausible body parts (e.g., six-fingered hands, distorted facial features, or unnaturally bent joints), while maintaining a low false-positive rate by leveraging normal human images during training, as shown in Figure~\ref{fig:hadm}

We deliberately report CLIP similarity (cosine similarity between 
original and refined images) rather than CLIP score (text--image 
alignment) for the following reason: we observed that methods 
producing overly saturated or high-contrast images---which deviate 
from the intended prompt semantics---can paradoxically achieve 
higher CLIP scores, as the CLIP encoder tends to favor visually 
salient features regardless of actual prompt faithfulness. This 
makes CLIP score unreliable as a quality indicator in our 
refinement setting. CLIP similarity, by contrast, directly 
measures whether the refinement preserves the semantic content 
of the original generation, which is the relevant criterion for 
a training-free post-hoc method.

\section{Comparison with Training-Free Refinement Baselines}\label{appendix:refinement}

\begin{figure}[t!]
\centering
\includegraphics[width=\linewidth]{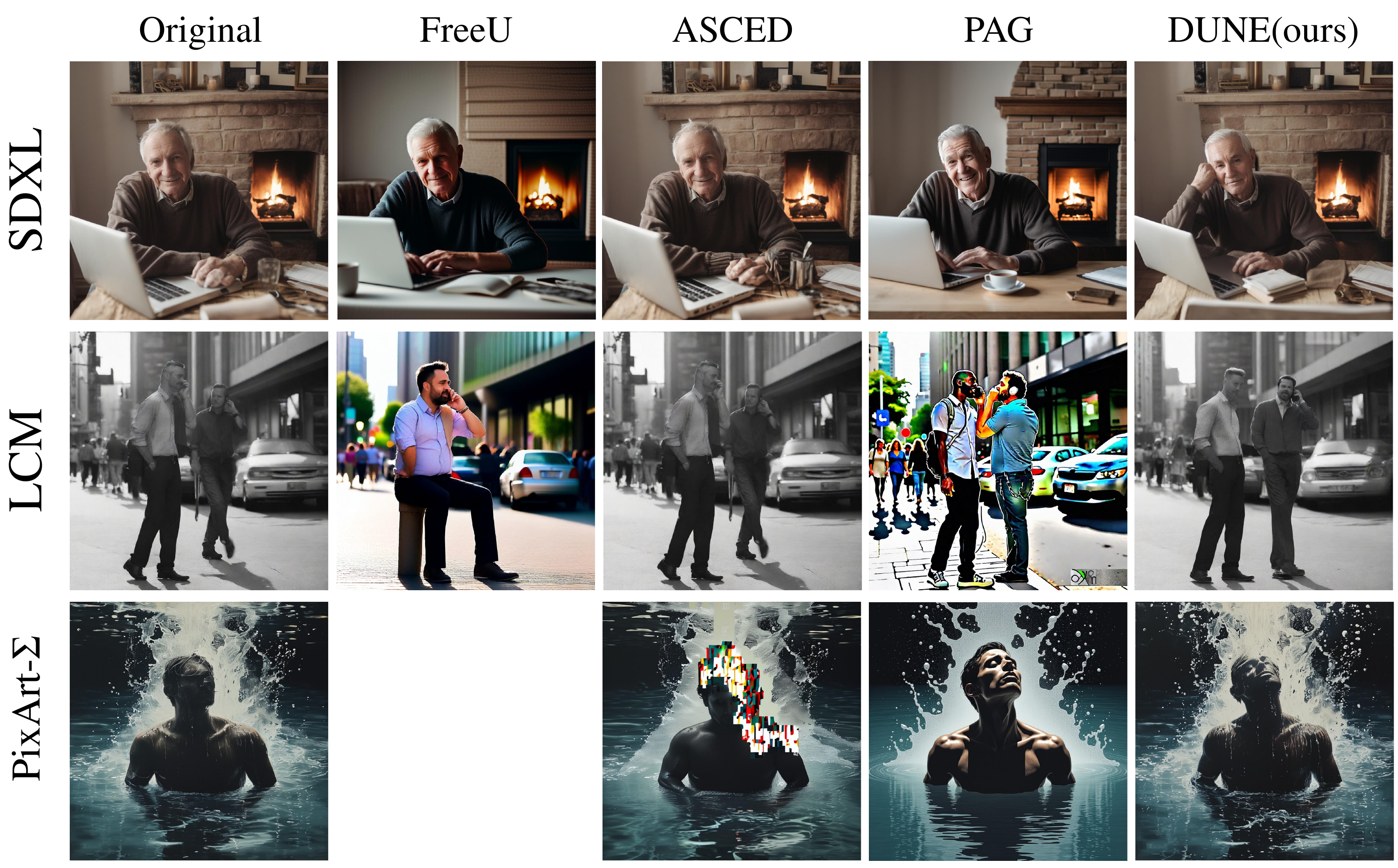}
\caption{Visual comparison between latest unsupervised refinement methods.}
\label{fig:refinemethods}
\end{figure}

\subsection{Broad Comparison}\label{appendix:refinement}

We consider three public, \emph{training‑free} refinement families:

\begin{itemize}
    \item \textbf{FreeU}\cite{si2024freeu} – reduces skip‑features and amplifies upsampling‑features. It is the most direct predecessor that DUNE aims to improve upon.
    \item \textbf{ASCED}\cite{cao2025temporal} – masks score‑trajectory outliers in pixel space and replaces them, standing for \textit{score‑masking} anomaly suppression.
    \item \textbf{PAG}\cite{PAG} – perturbs selected attention pathways at test time to guide denoising away from undesired structures, standing for \textit{attention-space guidance} approaches.
\end{itemize}

These three methods cover the most prominent families of training‑free
refinement: component scaling, second‑pass denoising, and
trajectory‑based masking, providing a balanced yard‑stick for DUNE. To maximize reproducibility and avoid cherry-picking, we follow the authors’ official implementations and recommended/default hyperparameters for FreeU, PAG, and ASCED.

The table~\ref{tab:metric} shows that DUNE achieves the best score on every metric‑model pair. For instance on SDXL, DUNE improves FID over Original by 0.11 and outperforms other models while avoiding the severe degradation introduced by FreeU (+12.4). HADM largely drops, indicating fewer anatomical distortions, whereas ASCED—despite a slightly better FID—raises HADM, suggesting that its pixel‑space masking can break body consistency.

\begin{figure}[h!]
\centering
\includegraphics[width=\linewidth]{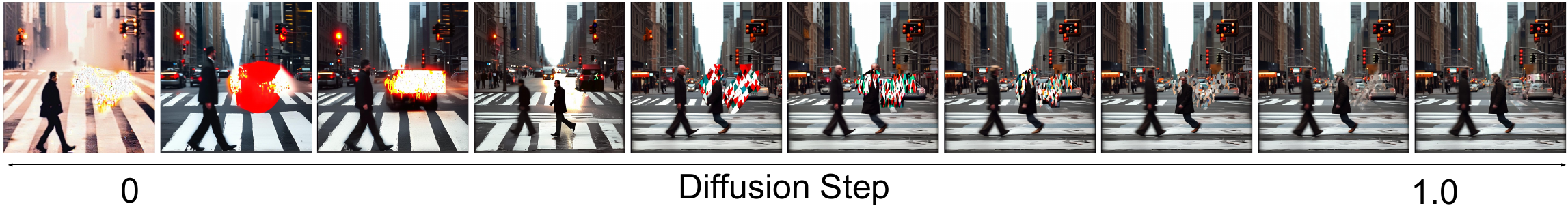}
\caption{Sampling results of different correction interventions across various time steps. }
\label{fig:asced}
\end{figure}

As depicted in Figure~\ref{fig:refinemethods}, we observe that PAG frequently generates over-saturated images (as mentioned in \cite{PAG}), a flaw it shares with FreeU. More importantly, this over-saturation problem is exacerbated when the number of sampling steps decreases. For example, on SDXL (50 steps), the saturation remains comparable to the original image; however, on PixArt-$\Sigma$ (30 steps), the over-saturation becomes visually intrusive. The degradation reaches its peak on LCM (8 steps), where extreme color distortion severely worsens the FID score (cf. Table~\ref{tab:metric}). In contrast, DUNE maintains a natural and moderate saturation level consistently across both short and long sampling steps.

Although PAG generates more clearer images, it loses detailed background or complex pattern. Moreover, such over-saturation becomes serious on less sampling steps; on SDXL (sampling step is 50) the saturation of image is similar with original image but on PixArt-$\Sigma$ (Sampling step is 30), saturation is much more obvious and on LCM (sampling step is 8), saturation is extremely high, making FID seriously worse (cf. Table~\ref{tab:metric}. In contrast, DUNE sustains moderate saturation across short sampling steps to longer one.

Also, we observe that ASCED often generates severe artifacts when applied to Transformer-based architectures. Because ASCED perturbs the sample and takes a forward step to recover the diffusion coefficient, it forces the reverse process to deviate from its original trajectory. While U-Net backbones can accommodate such deviations, Transformer-based models are notoriously sensitive to diffusion path alterations. Furthermore, this trajectory deviation introduces frequent \emph{semantic drift}; since ASCED applies pixel-space masking within the latent space equally across all channels, global structures (e.g., pose, background layout) can shift noticeably.

To isolate this issue, we re-forwarded the diffused sample without adding perturbed noise to examine the exact side effects of trajectory deviation on the Transformer-based method, following ASCED protocol. As illustrated in Figure~\ref{fig:asced} on PixArt-$\Sigma$, this deviation incurs severe noise even in the final phases of generation. Finally, ASCED requires almost \textit{double} the wall-clock time compared to standard sampling, as it mandates a full forward trajectory to compute score accelerations followed by a second denoising run with state replacement. In contrast, DUNE avoids both the destructive trajectory deviations and computational overhead by adding only a few tensor-wise masking operations inside a single pass.

\subsection{Why DUNE over FreeU?}\label{appendix:freeu}

\begin{figure}[t!]
\centering
\includegraphics[width=0.8\linewidth]{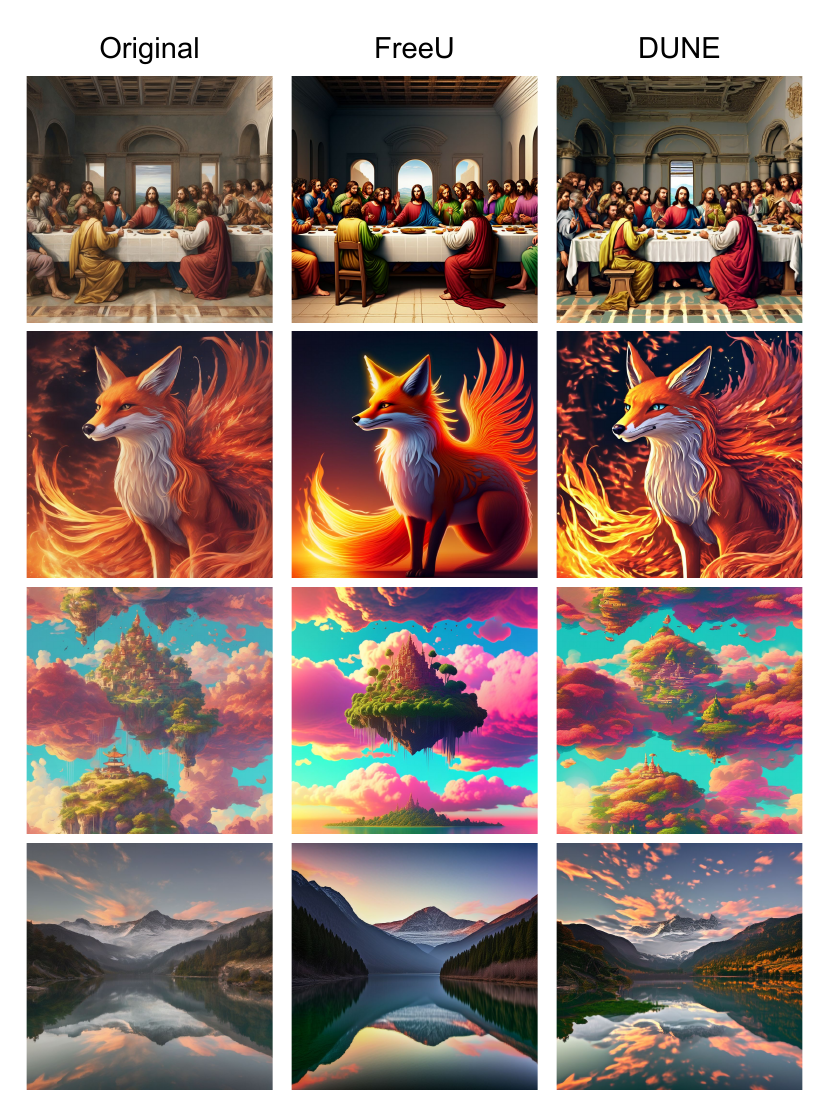}
\caption{Qualitative comparison between FreeU and DUNE (ours).}
\label{fig:freeu}
\end{figure}

In this section, we provide a detailed comparison with FreeU~\cite{si2024freeu}, another training-free tuning method designed for enhancing diffusion model outputs. 
FreeU is the closest baseline to DUNE because both methods manipulate internal features of pretrained diffusion backbones without retraining. The key difference is phase-aware control: DUNE first stabilizes anomalous deep latents during the detect phase and applies optional component reweighting only later, whereas FreeU reweights internal features without an explicit anomaly-correction phase. This distinction explains why DUNE better preserves context while avoiding the oversaturation and texture simplification often observed in FreeU.

As Figure~\ref{fig:freeu} shows, while FreeU tends to generate images characterized by overly saturated colors, monotone structures, and excessively smooth textures, DUNE produces outputs that are more diverse and visually coherent. Specifically, FreeU's adjustments often lead to loss of nuanced details and reduced visual diversity. In contrast, DUNE introduces an explicit initial correction phase targeting anomalies in the h-space, followed by adaptive scaling of skip connections and upsampling blocks based on a detailed, phase-wise and depth-wise analysis of the U-Net architecture. This approach enables DUNE to enhance fine details, preserve semantic consistency, and adaptively control visual attributes such as brightness and saturation, offering users enhanced flexibility in image refinement (as detailed in Sections~\ref{sec:analysis},~\ref{sec:sup-analysis}, and Appendix~\ref{appendix:theoretical_justification}).

Table~\ref{tab:metric} shows that FreeU~\footnote{We use suggested hyperparameters in official code (https://github.com/ChenyangSi/FreeU).} significantly deteriorates FID scores, indicating a loss of semantic consistency relative to the original images. In contrast, DUNE effectively improves image quality while preserving semantic fidelity, thus combining the advantages of FreeU without compromising semantic integrity. Notably, we emphasize that FreeU exclusively focuses on reducing skip connections to remove noise and increasing upsampling blocks to highlight original semantics. Ironically, these modifications often eliminate essential details from the original images and result in overly saturated outputs. Conversely, DUNE maintains original semantic content while enabling users to flexibly control saturation and brightness according to their preferences.

\section{Investigation on Remaining Component}\label{appendix:down}

\begin{figure}[h]
    \centering
    \begin{subfigure}[b]{0.7\linewidth}
        \centering
        \includegraphics[width=\linewidth,keepaspectratio]{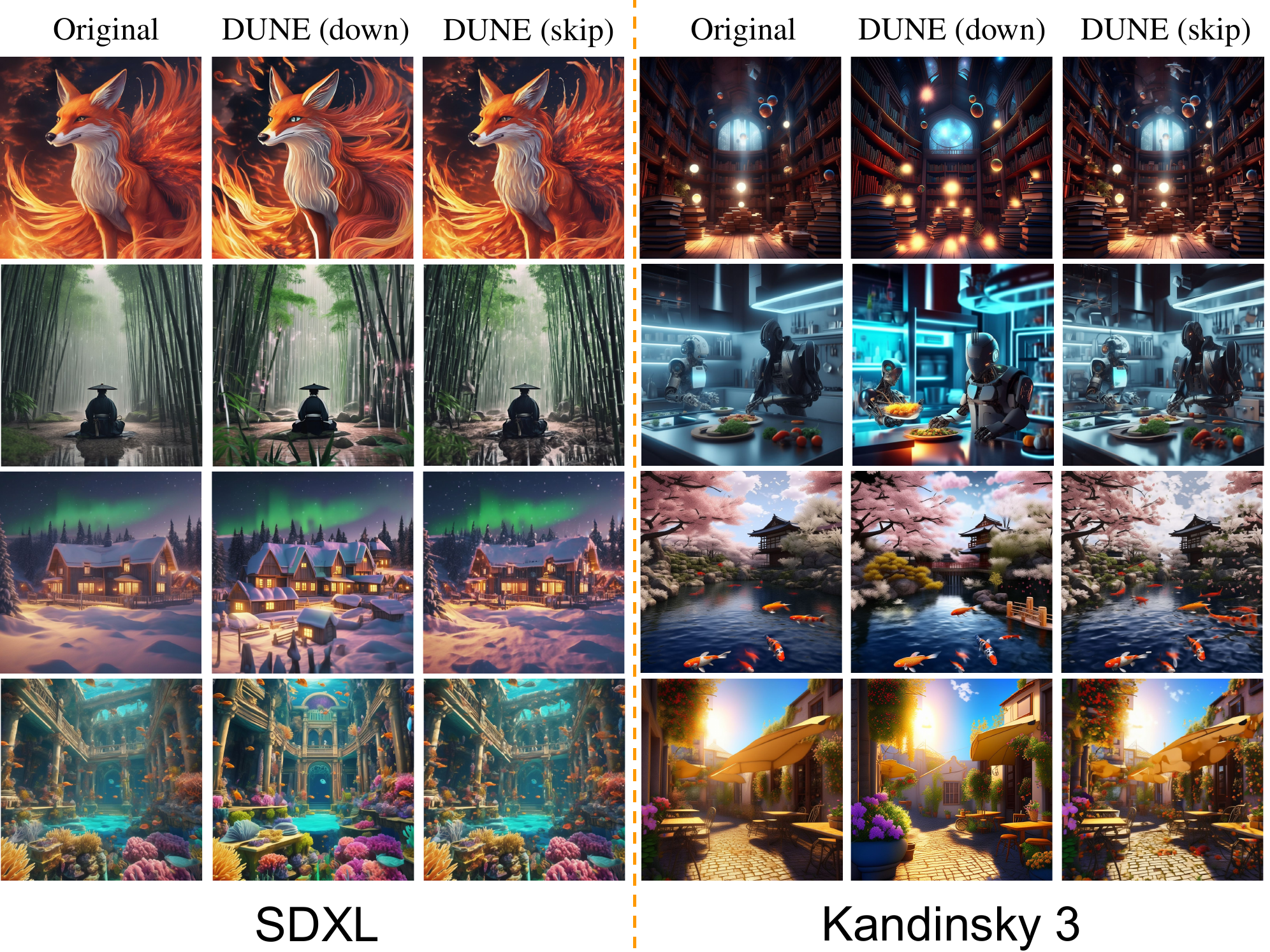}
        \caption{Results of controlling downsampling blocks and skip connections.}
        \label{fig:diversity2}
    \end{subfigure}

    \vspace{0.5em}

    \begin{subfigure}[b]{0.7\linewidth}
        \centering
        \includegraphics[width=\linewidth,keepaspectratio]{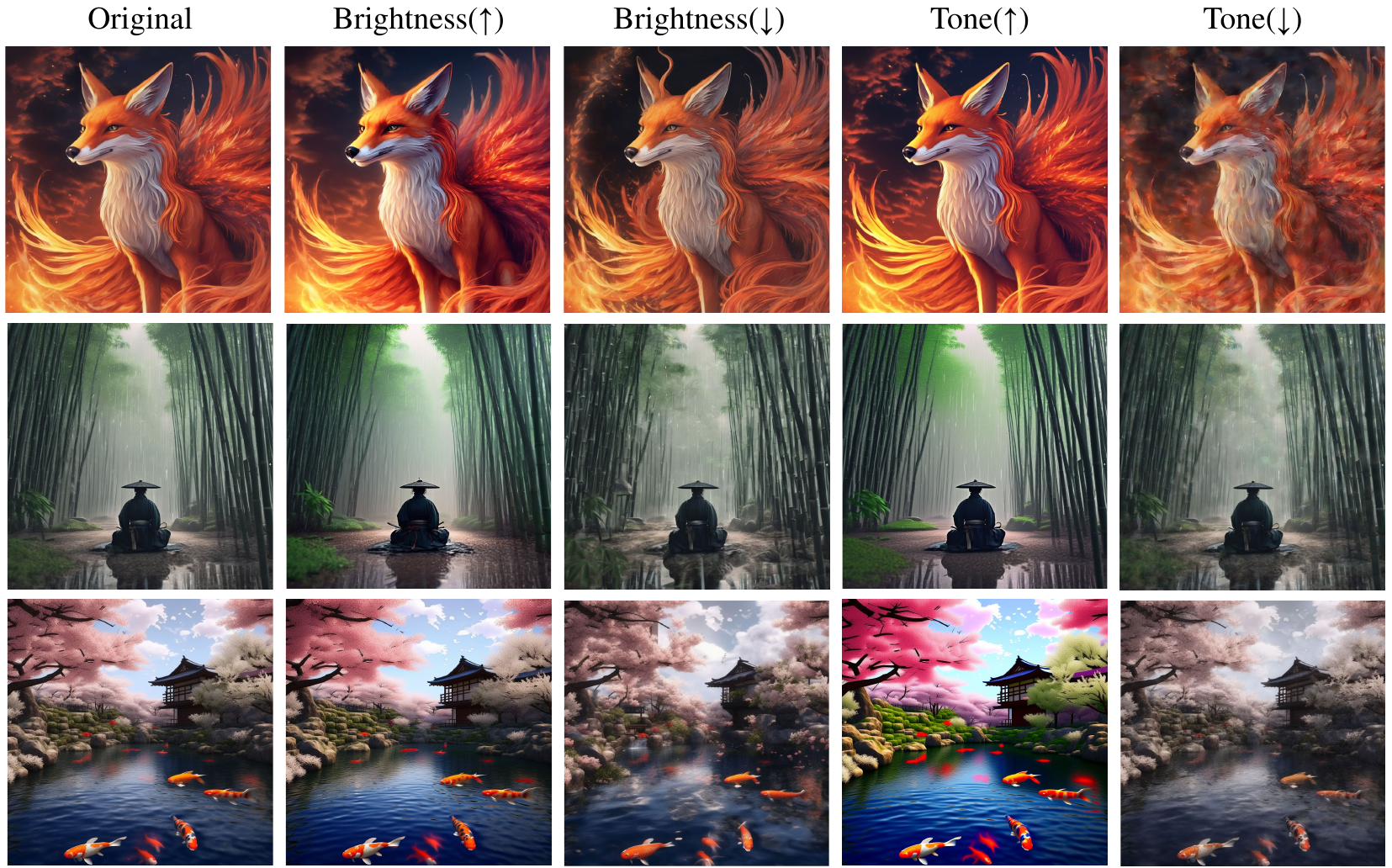}
        \caption{Ablation study on upsampling blocks.}
        \label{fig:ablation_control_unet}
    \end{subfigure}

    \caption{Visualization of controlling different components of the U-Net architecture.}
    \label{fig:combined_control_unet}
\end{figure}

Building upon previous observations regarding skip connections, h-space, and upsampling blocks, we further investigate the role of downsampling blocks. Earlier, we identified that skip connections primarily carry high-frequency information (e.g., edges), while upsampling blocks handle lower-frequency content (e.g., overall color or contents). Intuitively, downsampling blocks integrate comprehensive information by combining multiple components. Figure~\ref{fig:diversity2} illustrates that amplifying downsampling blocks, in the same way of scaling upsampling blocks, introduces additional detail but risks deviating from the original semantic content.

\begin{figure}[ht!]
\centering
\includegraphics[width=0.6\linewidth]{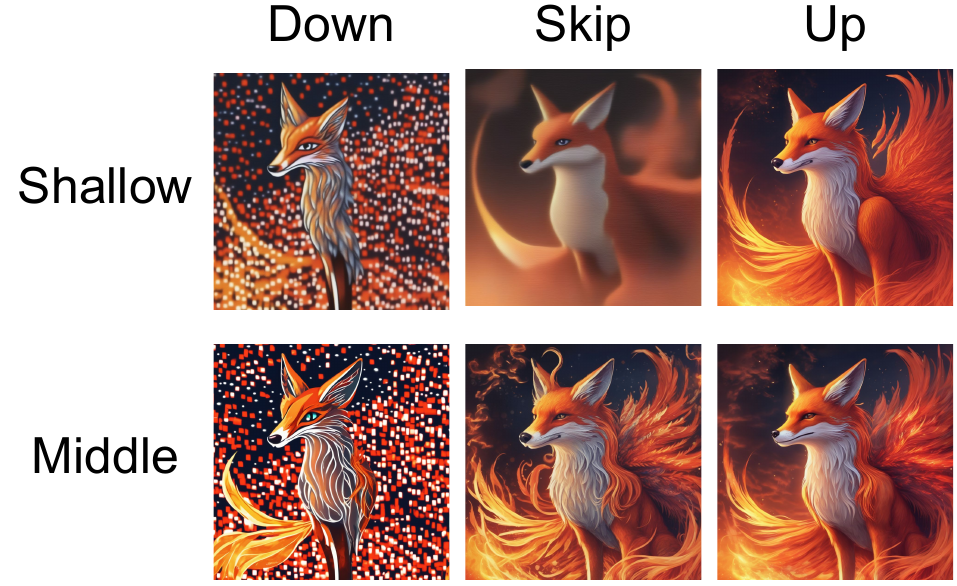}
\caption{Ablation study depth-wise control of each component.}
\label{fig:ablation_style}
\end{figure}

We additionally perform a depth-wise analysis of downsampling blocks. As depicted in Figure~\ref{fig:ablation_style}, controlling shallower layers is generally not recommended, aligning with the theoretical justification for preferring deeper layers as provided in Appendix~\ref{appendix:theoretical_justification}. Notably, effective control of downsampling blocks is primarily beneficial at the deepest layers, unlike other components, for which mid-layer adjustments typically remain visually acceptable. This aligns with the intuition that manipulating downsampling blocks simultaneously influences skip connections and upsampling blocks, thus requiring more careful handling.

Moreover, we investigate alternative scaling approaches for skip connections and upsampling blocks. Contrary to FreeU, which conventionally reduces skip connections and increases upsampling blocks, we explore the opposite scenario. As Figure~\ref{fig:diversity2} shows, amplifying skip connections enhances image detail while preserving semantic integrity. Figure~\ref{fig:ablation_style} further demonstrates that reducing upsampling blocks effectively decreases brightness and tones, allowing users to tailor the visual characteristics of generated images to their preferences. Specifically, we observe that initiating upsampling block adjustments slightly earlier in the diffusion process (approximately between 60\% and 70\%) primarily affects image brightness, whereas later adjustments predominantly control tonal qualities.

\section{Related Work}
\label{appendix:relatedworks}
\textbf{Diffusion models.} Diffusion models are a class of generative model, which have recently showed remarkable performance. They consist of a forward process---which progressively adds Gaussian noise to clean samples until they transform into Gaussian noise---and a reverse process---which generates real samples by sequentially denoising noisy samples.
Specifically, given a clean sample $X_0 \sim q(X_0)$, where $q$ is the data distribution, the forward process yields a noisy sample $X_t$ at time step $t$, which can be described as  

\begin{equation} \label{eq:x_t}
    X_t=\sqrt{\bar{\alpha}}_tX_0+\sqrt{1-\bar{\alpha}_t}\epsilon \quad \epsilon \sim \mathcal{N}(0, \textbf{I})
\end{equation}
where $\bar{\alpha}_t$ is a noise schedule, adopting the notation of DDPM~\cite{ho2020denoising}.

Diffusion models aim to learn the reverse process that can generate real samples from Gaussian noise. To achieve this, most prior works employed the loss function designed to predict the injected noise $\epsilon$~\cite{ho2020denoising} or, equivalently, the score function~\cite{song2019generative} of the noisy data distribution 
\begin{equation}\label{eq:score_loss}
    \mathcal{L}(\theta)=\mathbb{E}_{X_{t}, t}[\Vert \epsilon_{\theta}(X_{t}, t) - \epsilon\Vert _2^2]
\end{equation}

\textbf{U-net architecture.} U-net~\cite{ronneberger2015u} is the most commonly adopted architecture for approximating the \textit{score function} in diffusion models. As shown in Fig.~\ref{fig:framework}, U-net is mainly composed of three components: \textit{downsampling blocks}, \textit{skip connections} and \textit{upsampling blocks}. At each depth of U-net, a downsampling block passes its output to the deeper block while also conveying it through a skip connection to the upsampling block at the same depth. 

Although most of the diffusion models employ U-net architecture, only a limited number of studies have explored the components of U-net architecture. \cite{kwon2022diffusion} focused on the bottleneck of U-net, which is the deepest block, revealing that the bottleneck learns disentangled semantic space (h-space) and semantic manipulation is available by controlling the h-space of U-net. \cite{li2023faster} compared downsampling and upsampling blocks, showing that downsampling feature maps change more slowly across the denoising timesteps compared to those of upsampling blocks. Furthermore, by presenting the norm of encoder and decoder feature maps, they argued that downsmapling blocks play less important role than upsampling blocks. On the other hand, FreeU~\cite{si2024freeu} investigates the role of and upsampling blocks and skip connections. They argued that upsampling blocks mainly perform denoising and skip connections convey high-frequency information to upsampling blocks.  

However, none of the previous works have focused on the role of each block jointly with the denoising time steps, which is crucial to understand the detailed mechanism of the denoising process. Thus, in this work, we provide a thorough investigation into the role of U-net blocks across the denoising time steps and introduce a simple and effective refinement method based on the understandings. 

\textbf{Denoising process.}
\label{related:denoising}
Previous studies~\cite{choi2022perception, yang2023diffusion} analyzed the denoising process of diffusion models, revealing that low-frequency components are recovered in the early denoising steps, while high-frequency details are restored in the later steps. This behavior arises from the forward process, where high-frequency details destroy in the early phase, and low-frequency content gradually fades away. However, these works do not investigate the role of U-net components across denoising steps, which is the focus of our study. 

\textbf{Hallucination in diffusion models.}
Some studies~\cite{aithal2024understanding, cao2025temporal} have investigated hallucination of diffusion models. A recent work~\cite{aithal2024understanding} focuses on mode interpolation which results in the generation of unrealistic samples. They observe that diffusion models tend to interpolate between nearby modes, and their analysis reveals that the smooth approximation of the score function leads to such interpolation. Furthermore, they demonstrate that mode-interpolated samples exhibit high variance during the denoising trajectories. Another study~\cite{cao2025temporal} examines the temporal dynamics of the learned score function. They identify that the score dynamics of artifact regions show abnormally large variations compared to normal regions during the denoising process. 

\textbf{Related Training-Free Refinement Methods}

Training-free refinement methods differ in \emph{where} they intervene.
Score-level approaches---TAG~\cite{cho2025tag} (tangential amplification of sampling increments),
Dynamic Guidance~\cite{triaridis2025dynamic} (selective score sharpening),
and ASCED~\cite{cao2025temporal} (corrective noise injection)---modify the output or trajectory without explicitly accessing internal features, and therefore do not directly target detector-selected internal latent deviations.
We retain ASCED as a baseline for its shared artifact-suppression goal;
TAG and Dynamic Guidance operate at a complementary pipeline stage
(post-score), making direct comparison less informative.

At the attention level, PAG~\cite{PAG} perturbs self-attention
identity as guidance, and AAM~\cite{ozbulak2025aam} applies softmax
temperature scaling to suppress hallucinations.
PAG is included as a baseline; AAM is excluded as it only supports
low-resolution unconditional DDPM (MNIST, Hands).

For internal-latent manipulation, FreeU~\cite{si2024freeu} reweights
backbone and skip features;
InjectFusion~\cite{jeong2024injectfusion} blends $h$-space features
for content injection, validating $h$-space as an intervention point.
FreeU is included as a baseline for its shared reweighting paradigm;
InjectFusion and SkipInject target editing rather than artifact
suppression, precluding meaningful comparison.

\section{Evaluation on Unconditional Generation}

\begin{figure}[t]
  \centering
  \begin{subfigure}[c]{0.55\textwidth}
    \centering
    \includegraphics[width=\linewidth]{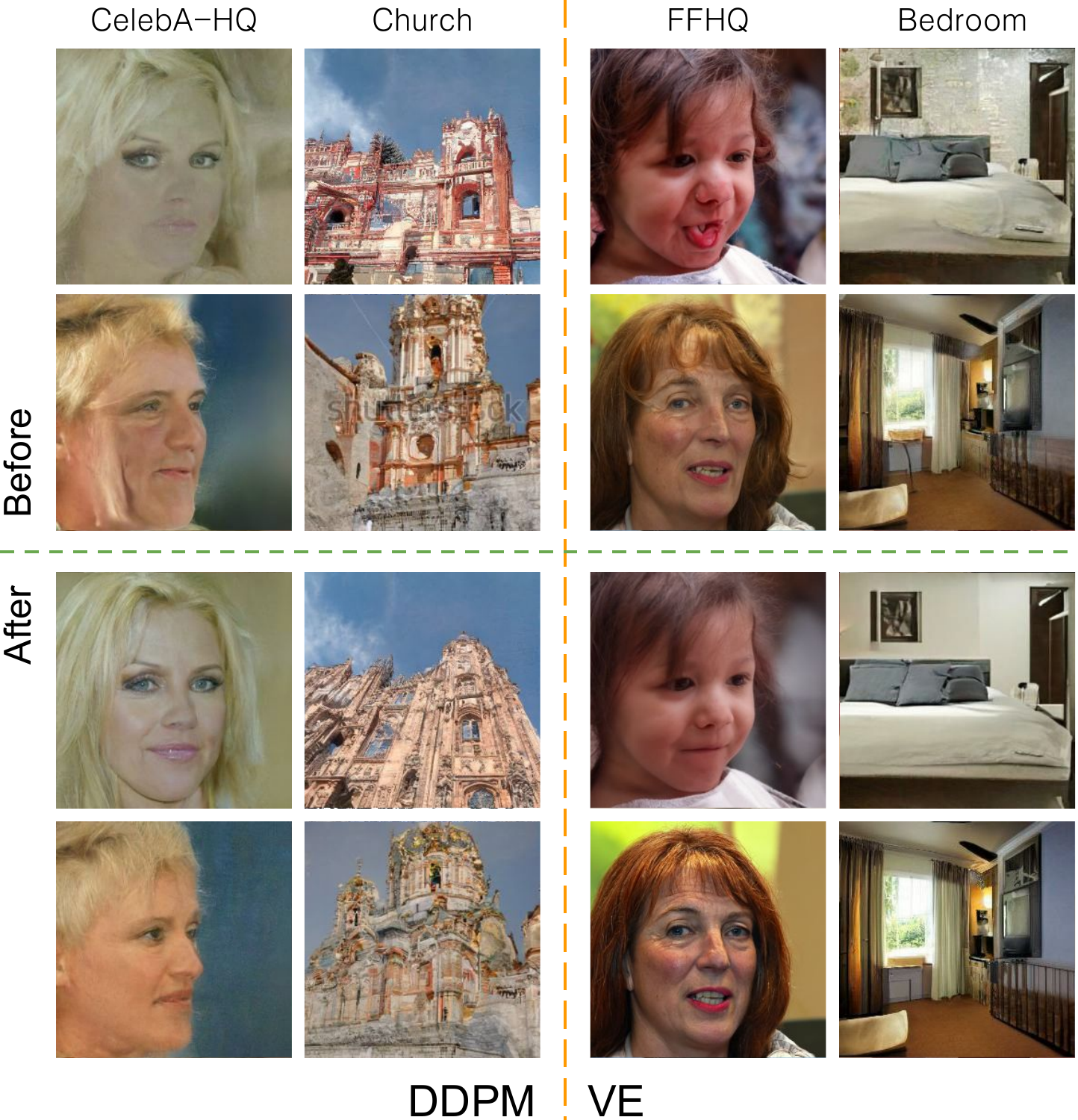}
    \caption*{(a) Visual results on low-resolution image generation.}
  \end{subfigure}%
  \hfill
  \begin{subfigure}[c]{0.42\textwidth}
    \centering
    \caption*{\textbf{(b) Qualitative Results}: User survey from 50 volunteers.}
    \vspace{0.5em}
    \resizebox{\linewidth}{!}{%
      \renewcommand{\arraystretch}{1.4}
      \begin{tabular}{cccc}
        \toprule
        \textbf{Model} & \textbf{Dataset} & \textbf{Original} & \textbf{DUNE} \\
        \midrule
        \multirow{2}{*}{DDPM} & CelebA-HQ & 6.60\% & \bfseries 93.40\% \\
                              & Bedroom   & 15.28\% & \bfseries 84.72\% \\
        \midrule
        \multirow{2}{*}{VE}   & FFHQ      & 11.51\% & \bfseries 88.49\% \\
                              & Church    & 16.67\% & \bfseries 83.33\% \\
        \bottomrule
      \end{tabular}
    }
  \end{subfigure}

  \caption{Comparison between visual results and user preference scores in unconditional generation tasks.}
  \label{fig:eval_uncond_swapped}
\end{figure}


While our main qualitative analyses focused on models such as SDXL and Kandinsky 3, we also conducted an additional user survey to evaluate DUNE's performance on DDPM and VE models. Due to the unavailability of the original training datasets for these models, we omitted their formal quantitative evaluations in the main paper. Nonetheless, qualitative improvements are strongly supported by user preference data. As summarized in Figure~\ref{fig:eval_uncond_swapped}(b), a substantial majority of the 50 surveyed participants consistently preferred the images generated by DUNE across all tested scenarios. Specifically, DUNE-enhanced images significantly outperformed original images, achieving user preference scores of 93.40\% on CelebA-HQ and 88.49\% on FFHQ. These results clearly demonstrate DUNE's effectiveness in enhancing unconditional image generation, underscoring strong user preference and broad applicability across various datasets and diffusion model architectures.

\section{Remained result of Figure~\ref{fig:main}}\label{appendix:remained_figures}

\begin{figure}[t]
\centering
\includegraphics[width=0.7\linewidth]{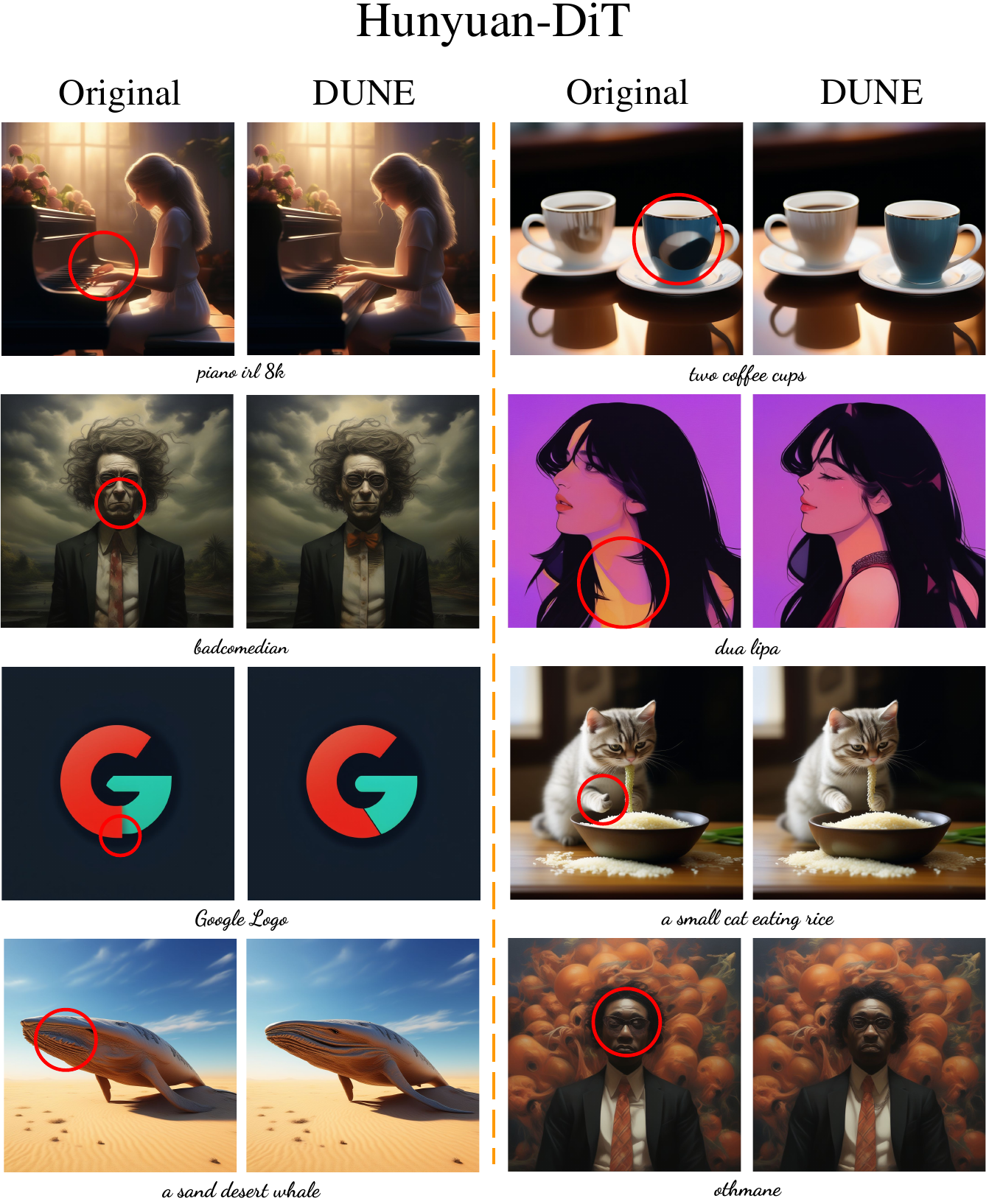}
\caption{Qualitative analysis on Hunyuan-DiT.}
\label{fig:Hunyuan}
\end{figure}

In this section, we present experimental results for another high-resolution T2I model—Hunyuan-DiT—which was omitted from Figure~\ref{fig:main} due to space limitations. The figure shows that DUNE improves the visual quality of the generations.

\section{Proofs of Theorems}

\subsection{Theoretical Justification for Using h-space}\label{appendix:theoretical_justification}

In this section, we provide a limited justification for preferring
the h-space as the intervention point. The result below does not directly prove
semantic concentration; rather, it shows that a downsampling convolution
contracts i.i.d. isotropic noise by the kernel $\ell_2$ norm. Repeated
application therefore attenuates stochastic fluctuations toward the bottleneck,
helping explain why h-space is a more stable space for detection and suppression
than shallower branches.

In Section~\ref{sec:analysis}, we discussed our preference for utilizing the h-space over shallower layers, such as upsampling blocks and skip connections. In this section, we investigate the depth-wise characteristics of the U-Net architecture to support our use of the h-space. In typical diffusion models employing U-Net, downsampling is performed by convolutional layers (CNNs), halving the spatial dimensions and increasing channel depth~\cite{podell2023sdxl,lcm,arkhipkin2024kandinsky}.

\begin{proposition}
Given an $n\times n$ data matrix $X_0=\{X_0^{i,j}\}_{1\leq i,j\leq 2n}$ and an $m\times m$ convolutional kernel $M = \{\lambda_{i,j}\}_{1\leq i,j\leq m}$ used in a downsampling layer, the standard deviation of the noise added to diffused data $X_t$ (for $t \in {0,1,...,T}$) is scaled by a factor of $||M||_F=\sqrt{\sum_{i,j}\lambda_{i,j}^2}$ through the convolutional operation.
\end{proposition}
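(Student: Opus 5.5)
We write the diffused input as $X_t=\sqrt{\bar\alpha_t}\,X_0+\sqrt{1-\bar\alpha_t}\,\epsilon$ with $\epsilon^{i,j}$ i.i.d.\ $\mathcal{N}(0,1)$, as in Eq.~\eqref{eq:x_t}. A downsampling layer is a strided convolution $(M*X_t)^{k,l}=\sum_{i,j=1}^m \lambda_{i,j}X_t^{s k+i,\,s l+j}$ (stride $s$, typically $2$), possibly followed by a bias that does not affect the noise. Because convolution is linear, each output entry splits into a signal part and a noise part:
\[
(M*X_t)^{k,l}=\sqrt{\bar\alpha_t}\,(M*X_0)^{k,l}+\sqrt{1-\bar\alpha_t}\sum_{i,j}\lambda_{i,j}\,\epsilon^{sk+i,\,sl+j}.
\]
So it suffices to study how the noise term $N^{k,l}:=\sum_{i,j}\lambda_{i,j}\epsilon^{sk+i,sl+j}$ transforms. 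The signal term is deterministic given $X_0$ and contributes no variance.

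For fixed $(k,l)$, $N^{k,l}$ is a linear combination of $m^2$ distinct independent standard Gaussians. These entries are distinct because the window indices $(sk+i,sl+j)$ are injective in $(i,j)$; at the boundary we use zero padding, and padded zeros only drop terms. Hence $N^{k,l}$ is Gaussian with mean $0$ and
\[
\mathrm{Var}(N^{k,l})=\sum_{i,j}\lambda_{i,j}^2=\|M\|_F^2 .
\]
Multiplying by $\sqrt{1-\bar\alpha_t}$, the noise standard deviation goes from $\sqrt{1-\bar\alpha_t}$ to $\sqrt{1-\bar\alpha_t}\,\|M\|_F$, which is exactly the claimed scaling. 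This is the same computation as Proposition~\ref{thm:attn_softmax}, with the fixed kernel weights playing the role of $w_j$. For multi-channel inputs, summing over input channels replaces the sum by all kernel entries of the given output filter, and the Frobenius norm is taken over that filter.

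The main obstacle is being precise about what "scaled" means. Marginally the claim holds entry-wise. Across neighboring outputs, however, the windows overlap when $m>s$, so the output noise is no longer i.i.d.: $\mathrm{Cov}(N^{k,l},N^{k',l'})=\sum\lambda_{i,j}\lambda_{i',j'}$ over coinciding input positions. I would state the proposition as a per-entry (marginal) standard-deviation statement and remark on these correlations. Then, to justify the repeated-contraction intuition toward h-space, I would argue only at the level of marginal variances. Each layer's output noise is a linear image of $\epsilon$, and Cauchy--Schwarz gives a per-entry standard deviation of at most $\|M\|_F$ times the maximal input standard deviation even under correlation. Composition then yields a product of kernel norms along the path, up to the number of input channels. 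With the empirical kernel norms ($\approx 0.2$) this is a contraction.
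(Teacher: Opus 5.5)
Your core argument is correct and is essentially the paper's proof. You split the strided convolution of $X_t$ by linearity into a deterministic signal term plus $\sqrt{1-\bar\alpha_t}\sum_{i,j}\lambda_{i,j}\epsilon^{sk+i,\,sl+j}$, then use independence of the $\epsilon$ entries to get variance $\sum_{i,j}\lambda_{i,j}^2=\|M\|_F^2$; the paper does the same, also with stride $2$ and entry by entry.

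Your optional composition remark, however, is false as stated. You note yourself that the inputs are correlated after one layer, and under correlation the available bound is $\mathrm{sd}\bigl(\sum\lambda_{i,j}Y_{i,j}\bigr)\le\sum|\lambda_{i,j}|\,\max\mathrm{sd}(Y_{i,j})\le\sqrt{K}\,\|M\|_F\,\max\mathrm{sd}(Y_{i,j})$. Here $K$ is the total number of kernel entries, i.e.\ $m^2$ times the number of input channels, not just a channel count. This bound is tight: on a single-channel $m\times m$ kernel with $\lambda_{i,j}=1/m$ and all $Y_{i,j}$ equal to one standard Gaussian, the output standard deviation is $m$, against $\|M\|_F=1$. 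So kernel norms $\approx 0.2$ do not by themselves give a contraction toward h-space.

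A smaller point: with zero padding, a boundary output's variance is the sum of $\lambda_{i,j}^2$ over the non-padded taps. That is at most $\|M\|_F^2$, not exactly $\|M\|_F^2$.
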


We empirically checked that the mean of kernel norms in each downsampling block are sufficiently small, effectively suppressing the added noise and confirming the theoretical prediction of Proposition~\ref{thm:hspace} (e.g., SDXL shows kernel norms of 0.2036 and 0.1509 in its respective downsampling layers). 

This inherent denoising capability supports our decision to leverage the h-space. During the detect phase, our anomaly detection approach successfully identifies irregularities within latent samples (see the second column in Figure~\ref{fig:score_plot}). However, directly correcting these anomalies in shallow layers poses challenges, as simultaneous adjustments of image content and added noise are needed to maintain the normal distribution of the score function outputs (see Equation~\ref{eq:score_loss}). Conversely, corrections applied in the h-space are less problematic, as this deeper latent space inherently condenses semantic information while effectively reducing noise. The final column of Figure~\ref{fig:component_plot} clearly illustrates that corrections in shallower layers, such as skip connections or upsampling blocks, increase score deviations in hallucinated regions, whereas corrections within the h-space significantly mitigate these deviations.

\begin{proof}

Given the diffusion process, we have:
\begin{equation}
X_t = \sqrt{\bar{\alpha}_t} X_0 + \sqrt{1 - \bar{\alpha}_t} \epsilon,
\end{equation}
where $\epsilon \sim \mathcal{N}(0, I)$ represents isotropic Gaussian noise. Let $F(\cdot): \mathbb{R}^{2n\times2n} \to \mathbb{R}^{n\times n}$ denote the convolutional downsampling operation with an $m\times m$ kernel $M = \{\lambda_{i,j}\}_{1\leq i,j\leq m}$. Due to the linearity of convolution, we have:
\begin{equation}
F(X_t) = \sqrt{\bar{\alpha}_t} F(X_0) + \sqrt{1 - \bar{\alpha}_t} F(\epsilon).
\end{equation}

Since $X_0$ remains constant throughout the diffusion process, its variance is zero, thus we focus solely on $\text{Var}(F(\epsilon))$. The elements of $F(\epsilon)$ at position $(i,j)$ are explicitly given by the convolution operation:
\begin{equation}
F(\epsilon)^{i,j} = \sum_{a=1}^{m}\sum_{b=1}^{m}\lambda_{a,b}\epsilon^{s+a, t+b}, \quad 1 \leq i,j \leq n,
\end{equation}
where $s=(i-1)\cdot 2$ and $t=(j-1)\cdot 2$ denote the starting indices for convolution with stride 2 (typical for downsampling).

Given that elements $\epsilon^{i,j}\sim\mathcal{N}(0,1)$ are independent and identically distributed (i.i.d.), we have:
\begin{align*}
\text{Var}(F(\epsilon)^{i,j}) &= \sum_{a=1}^{m}\sum_{b=1}^{m}\lambda_{a,b}^2\text{Var}(\epsilon^{s+a,t+b}) \\ &= \sum_{a=1}^{m}\sum_{b=1}^{m}\lambda_{a,b}^2 = ||M||_F^2.
\end{align*}

Hence, the standard deviation of $X_t$ after transformation by convolution operation $F(\cdot)$ is scaled by a factor of $||M||_2$, completing the proof.

\end{proof}

\subsection{Theoretical Justification for Using the Self-Attention latent in Transformers}
\label{appendix:theoretical_justification_transformer}

We next give the Transformer analogue. Conditioned on the realized attention
weights at a given layer, self-attention acts as a data-adaptive averaging
kernel whose effective norm is $\sqrt{\sum_{j=1}^N w_j^2}$. This explains why
deep self-attention latents can serve as stable intervention points, although it
does not by itself imply that any specific depth is universally optimal.

\begin{proposition}
Given tokens $x_j=s_j+\varepsilon_j\in\mathbb{R}^d$ with i.i.d.\ $\varepsilon_j\sim\mathcal{N}(0,\sigma^2 I_d)$ and conditioned on realized attention weights $w=\mathrm{softmax}((qK^\top)/\tau)$ ($w_j\ge0$, $\sum_j w_j=1$), the attention output $y=\sum_{j=1}^N w_j x_j$ scales the isotropic noise standard deviation by $\sqrt{\sum_{j=1}^N w_j^2}$, i.e.,
\[
\mathrm{Cov}(y_{\mathrm{noise}})=\sigma^2\!\left(\sum_{j=1}^N w_j^2\right) I_d,
\]
so $\sum_j w_j^2\!\in[1/N,1]$ plays the role of a \emph{data-adaptive kernel norm} (cf.\ Proposition~\ref{thm:hspace} for CNNs).
\end{proposition}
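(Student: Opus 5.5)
Since we condition on the realized attention weights $w$, the plan is to treat $w$ as fixed, nonrandom coefficients. The output is then a fixed linear combination of the token vectors. Substituting $x_j=s_j+\varepsilon_j$ gives
\[
y=\sum_{j=1}^N w_j s_j+\sum_{j=1}^N w_j\varepsilon_j ,
\]
and we identify $y_{\mathrm{noise}}:=\sum_j w_j\varepsilon_j$ as the noise part. The signal part $\sum_j w_j s_j$ is deterministic given $w$, so it contributes nothing to the covariance. This mirrors the linearity step in the proof of Proposition~\ref{thm:hspace}, where $F(X_t)$ was split into a $F(X_0)$ term and a $F(\epsilon)$ term.

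Next, we compute the covariance using independence. Because the $\varepsilon_j$ are independent and zero-mean, all cross terms $\mathbb{E}[\varepsilon_j\varepsilon_k^\top]$ with $j\neq k$ vanish. Hence
\[
\mathrm{Cov}(y_{\mathrm{noise}})=\sum_j w_j^2\,\mathrm{Cov}(\varepsilon_j)=\sigma^2\Big(\sum_j w_j^2\Big)I_d .
\]
Being a linear combination of independent Gaussians, $y_{\mathrm{noise}}$ is itself Gaussian and isotropic. Its per-coordinate standard deviation is therefore $\sigma\sqrt{\sum_j w_j^2}$, which is the claimed scaling. The quantity $\sqrt{\sum_j w_j^2}=\|w\|_2$ plays exactly the role that $\|K\|_2$ plays in Proposition~\ref{thm:hspace}.

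Finally, we prove the range $\sum_j w_j^2\in[1/N,1]$ using only $w_j\ge 0$ and $\sum_j w_j=1$:
\begin{itemize}
\item \emph{Upper bound.} Since $0\le w_j\le 1$, we have $w_j^2\le w_j$, and summing gives $\sum_j w_j^2\le \sum_j w_j=1$. Equality holds exactly for one-hot $w$.
\item \emph{Lower bound.} By Cauchy--Schwarz, $1=(\sum_j w_j)^2\le N\sum_j w_j^2$. Equality holds exactly for uniform $w$.
\end{itemize}
Because softmax outputs are strictly positive, the upper bound is not attained for finite logits. It is only approached in the limit, which is compatible with the closed interval stated.

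The main obstacle is the conditioning step. In a real Transformer, $w=\mathrm{softmax}(qK^\top/\tau)$ depends on the keys and hence on the noisy tokens themselves, so $w$ and $\varepsilon$ are not independent, and the unconditional covariance would not factor this way. The statement explicitly conditions on the realized weights (equivalently, it treats $w$ as independent of $\varepsilon$), and the proof will state this assumption up front so that the computation above is rigorous. If unconditional bounds were wanted, one could instead take the tower expectation. That would yield $\sigma^2\,\mathbb{E}[\sum_j w_j^2]I_d$ only under the additional assumption that $w$ is independent of the $\varepsilon_j$, and this extension lies beyond the proposition as stated.
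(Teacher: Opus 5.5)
Your proposal is correct and matches the paper's proof step for step. Both use the same split $y=\sum_j w_js_j+\sum_j w_j\varepsilon_j$, the same independence argument to drop the cross terms $\mathbb{E}[\varepsilon_j\varepsilon_k^\top]$, and Cauchy--Schwarz against the all-ones vector for the lower bound. Your upper bound via $w_j^2\le w_j$ is only a trivial variant of the paper's $\sum_j w_j^2\le\max_k w_k\le 1$.

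Your caveat about the conditioning sharpens the paper's ``for fixed $w$'' phrasing. One small correction: conditioning on $w$ is not literally equivalent to independence. What makes the computation valid is that $w$ is independent of the $\varepsilon_j$, so the conditional law of the noise given $w$ is still i.i.d.\ $\mathcal{N}(0,\sigma^2 I_d)$.
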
 

The quantity $\sum_{j=1}^N w_j^2$ is exactly the squared $\ell_2$ norm of the attention kernel $w$, so it plays the role of a \emph{kernel norm} controlling how much i.i.d.\ isotropic noise is averaged by the attention operator. When attention is spread out (high-entropy, near-uniform weights), $\sum_j w_j^2$ is small ($\approx 1/N$), giving strong averaging and thus strong denoising. When attention is sharply peaked (nearly one-hot), $\sum_j w_j^2 \approx 1$ and noise is essentially passed through unchanged.

This establishes that, for fixed attention weights $w$, the noise covariance after self-attention is
\[
\mathrm{Cov}(y_{\mathrm{noise}}) 
= \sigma^2 \Big(\sum_{j=1}^N w_j^2\Big) I_d
\]
and hence the standard deviation is scaled by $\sqrt{\sum_{j=1}^N w_j^2}$, as claimed.

\begin{proof}
Write each token as $x_j = s_j + \varepsilon_j$ with signal $s_j \in \mathbb{R}^d$ and noise $\varepsilon_j \sim \mathcal{N}(0,\sigma^2 I_d)$ i.i.d.\ across $j$.
For fixed attention weights $w = (w_1,\dots,w_N)$ with $w_j \ge 0$ and $\sum_{j=1}^N w_j = 1$, the attention output can be decomposed as
\[
y \;=\; \sum_{j=1}^N w_j x_j 
   \;=\; \underbrace{\sum_{j=1}^N w_j s_j}_{=:y_{\mathrm{sig}}}
      \;+\; \underbrace{\sum_{j=1}^N w_j \varepsilon_j}_{=:y_{\mathrm{noise}}}.
\]
We analyze the distribution of the noise term $y_{\mathrm{noise}}$.

\paragraph{Step 1: Mean and covariance of the noise.}
Each $\varepsilon_j$ is zero-mean Gaussian with covariance $\sigma^2 I_d$, and $\{\varepsilon_j\}_{j=1}^N$ are independent. Hence
\[
\mathbb{E}[\varepsilon_j] = 0, 
\qquad
\mathrm{Cov}(\varepsilon_j) = \sigma^2 I_d, 
\qquad
\mathbb{E}[\varepsilon_j \varepsilon_k^\top] = 0 \quad (j \neq k).
\]
Therefore
\[
\mathbb{E}[y_{\mathrm{noise}}]
= \mathbb{E}\Big[\sum_{j=1}^N w_j \varepsilon_j\Big]
= \sum_{j=1}^N w_j\,\mathbb{E}[\varepsilon_j]
= 0.
\]

For the covariance, using linearity and independence,
\begin{align*}
\mathrm{Cov}(y_{\mathrm{noise}})
&= \mathbb{E}\big[\,y_{\mathrm{noise}} y_{\mathrm{noise}}^\top\,\big] \\
&= \mathbb{E}\Big[ \Big( \sum_{j=1}^N w_j \varepsilon_j \Big)
                 \Big( \sum_{k=1}^N w_k \varepsilon_k \Big)^\top \Big] \\
&= \sum_{j=1}^N \sum_{k=1}^N w_j w_k\, \mathbb{E}\big[\varepsilon_j \varepsilon_k^\top\big].
\end{align*}
By independence, $\mathbb{E}[\varepsilon_j \varepsilon_k^\top] = 0$ for $j \ne k$, and for $j=k$ we have $\mathbb{E}[\varepsilon_j \varepsilon_j^\top] = \sigma^2 I_d$. Thus only the diagonal terms remain:
\[
\mathrm{Cov}(y_{\mathrm{noise}})
= \sum_{j=1}^N w_j^2\, \sigma^2 I_d
= \sigma^2 \Big(\sum_{j=1}^N w_j^2\Big) I_d.
\]
In particular, every coordinate of $y_{\mathrm{noise}}$ has variance
$\sigma^2 \sum_{j=1}^N w_j^2$, so the noise standard deviation is scaled by
$\sqrt{\sum_{j=1}^N w_j^2}$ compared to the original $\sigma$.

\paragraph{Step 2: Range of the kernel norm $\sum_j w_j^2$.}
Since $w$ is a probability vector ($w_j \ge 0$, $\sum_j w_j = 1$), its squared $\ell_2$ norm satisfies
\[
\frac{1}{N} \;\le\; \sum_{j=1}^N w_j^2 \;\le\; 1.
\]

\emph{Lower bound.}
Apply the Cauchy--Schwarz inequality to $w$ and the all-ones vector $\mathbf{1}$:
\[
\big(\mathbf{1}^\top w\big)^2 
\le \|\mathbf{1}\|_2^2 \, \|w\|_2^2
= N \sum_{j=1}^N w_j^2.
\]
But $\mathbf{1}^\top w = \sum_{j=1}^N w_j = 1$, so
\[
1 \le N \sum_{j=1}^N w_j^2
\quad\Longrightarrow\quad
\sum_{j=1}^N w_j^2 \ge \frac{1}{N}.
\]
Equality holds iff all $w_j$ are equal, i.e.\ $w_j = 1/N$ (uniform attention).

\emph{Upper bound.}
Because $w_j \ge 0$ and $\sum_j w_j = 1$,
\[
\sum_{j=1}^N w_j^2 
\le \sum_{j=1}^N w_j \cdot \max_k w_k
= \max_k w_k
\le 1.
\]
The last inequality is tight only when $\max_k w_k = 1$, i.e.\ when the attention is one-hot ($w_j = 1$ for some $j$ and $0$ otherwise). In that case $\sum_j w_j^2 = 1$.

Thus
\[
\sum_{j=1}^N w_j^2 \in \big[1/N,\,1\big],
\]
and the noise standard deviation is scaled by a factor
\[
\sqrt{\sum_{j=1}^N w_j^2} \in \big[1/\sqrt{N},\,1\big].
\]

\end{proof}

\subsection{SNR Analysis}\label{appendix:snr_proof}

\begin{definition}
\label{def:snr}
We define the latent SNR at timestep $t$ as
\begin{equation}
\mathrm{SNR}(z_t) \;:=\;
\frac{\mathbb{E}\|s_t\|_2^2}{\mathbb{E}\|n_t\|_2^2}.
\label{eq:snr_def}
\end{equation}
\end{definition}

\paragraph{EMA as a semantic estimator.}
The following proposition formalizes why deep latents are a stable ``sweet spot'' for detect--suppress:
EMA acts as a low-pass filter that reduces noise variance while tracking slowly-varying semantics.

\begin{proposition}
\label{prop:ema_estimator}
Under Assumptions~\ref{assump:decomp}--\ref{assump:slow_drift}, the EMA satisfies
\begin{align}
\big\|\mathbb{E}[\bar z_t]-s_t\big\|_2
&\;\le\; \frac{\gamma}{1-\gamma}\,\delta,
\label{eq:ema_bias}
\\
\mathrm{Var}(\bar z_t)
&\;\le\; \frac{1-\gamma}{1+\gamma}\,\mathrm{Var}(n_t),
\label{eq:ema_var}
\end{align}
where $\mathrm{Var}(\cdot)$ denotes the coordinate-wise isotropic variance (or any consistent scalar
variance proxy).
\end{proposition}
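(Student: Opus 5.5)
Unrolling the EMA recursion $\bar z_t=\gamma\bar z_{t+1}+(1-\gamma)z_t$ backward from the initialization $\bar z_T=z_T$ writes $\bar z_t$ as a convex combination of past scaled latents:
\[
\bar z_t=\sum_{k=0}^{K-1}(1-\gamma)\gamma^k z_{t+k}+\gamma^{K}z_T,\qquad K:=T-t .
\]
The weights $w_k=(1-\gamma)\gamma^k$ for $k<K$, together with $w_K=\gamma^K$, are nonnegative and sum to one. Substituting $z_{t+k}=s_{t+k}+n_{t+k}$ from Assumption~\ref{assump:decomp} splits $\bar z_t$ into a signal part $\sum_k w_k s_{t+k}$ and a noise part $\sum_k w_k n_{t+k}$. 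I treat the bias bound \eqref{eq:ema_bias} and the variance bound \eqref{eq:ema_var} separately, working conditionally on $\mathcal F$ so that the $s$'s are treated as fixed (or via the tower property).

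\textbf{Bias.} Since $\mathbb E[n_{t+k}\mid\mathcal F_{t+k}]=0$, the tower property gives $\mathbb E[\bar z_t]=\sum_k w_k\,\mathbb E[s_{t+k}]$. Because the weights sum to one,
\[
\mathbb E[\bar z_t]-s_t=\sum_k w_k\big(\mathbb E[s_{t+k}]-s_t\big).
\]
Telescoping with Assumption~\ref{assump:slow_drift} gives $\|s_{t+k}-s_t\|_2\le k\delta$. Hence the bias is at most $\delta\sum_k k\,w_k$, and
\[
\sum_{k\ge0}k(1-\gamma)\gamma^k=\frac{\gamma}{1-\gamma}.
\]
The truncated tail term $\gamma^K\cdot K\delta$ is dominated by the omitted tail $\sum_{k\ge K}(1-\gamma)\gamma^k k\ge \gamma^K K$, so the finite sum stays below the infinite one and \eqref{eq:ema_bias} follows. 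One caveat: $s_t$ is itself $\mathcal F_t$-measurable, so I will state the bound conditionally on the semantic path, or apply Jensen to $\|\mathbb E[\cdot]\|\le\mathbb E\|\cdot\|$.

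\textbf{Variance.} This is the main obstacle. The noise term has variance $\sum_k w_k^2\,\mathrm{Var}(n_{t+k})$ only if the $n$'s are uncorrelated across timesteps and have comparable variance. Assumption~\ref{assump:decomp} gives only conditional mean zero. I would first invoke the martingale-difference structure: if $n_{t+k}$ is measurable with respect to the next filtration, then cross terms $\mathbb E\langle n_{t+j},n_{t+k}\rangle$ vanish by conditioning on the later $\mathcal F$. I would also adopt, as the variance proxy the statement permits, a stationarity convention $\mathrm{Var}(n_{t+k})\le\mathrm{Var}(n_t)$ across the detect phase. The signal part contributes variance only through the randomness of $s$; conditional on the semantic path it contributes nothing. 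With these in place,
\[
\mathrm{Var}(\bar z_t)\le \mathrm{Var}(n_t)\Big[\sum_{k<K}(1-\gamma)^2\gamma^{2k}+\gamma^{2K}\Big].
\]
The bracket equals
\[
\frac{(1-\gamma)^2(1-\gamma^{2K})}{1-\gamma^2}+\gamma^{2K}
=\frac{1-\gamma}{1+\gamma}+\gamma^{2K}\Big(1-\frac{1-\gamma}{1+\gamma}\Big).
\]
This exceeds $\frac{1-\gamma}{1+\gamma}$ by the initialization term. To handle it I would either take the steady-state regime $K\to\infty$ (initialization washed out, yielding exactly $\frac{1-\gamma}{1+\gamma}$), or initialize conceptually from the stationary EMA. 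I expect to state the bound in that asymptotic/stationary sense, noting that the error is $O(\gamma^{2K})$ and vanishes quickly since $\gamma=0.7$.
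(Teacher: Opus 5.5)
Your argument matches the paper's proof: unroll the EMA, use $\|s_{t+k}-s_t\|_2\le k\delta$ together with $\sum_k k(1-\gamma)\gamma^k=\gamma/(1-\gamma)$ for the bias, and sum the geometric series $(1-\gamma)^2\sum_k\gamma^{2k}=\frac{1-\gamma}{1+\gamma}$ for the variance. The paper writes the unrolled sum as $(1-\gamma)\sum_{k\ge0}\gamma^k z_{t+k}$ and so silently adopts the conventions you state explicitly (infinite past, fixed semantics, uncorrelated equal-variance noise); your observation that the variance bound fails near the initialization $\bar z_T=z_T$ (e.g.\ at $t=T$) is correct and is a qualification the paper omits.
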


\begin{proof}
Unrolling EMA yields
$\bar z_t = (1-\gamma)\sum_{k\ge 0}\gamma^k z_{t+k}$, hence
$\mathbb{E}[\bar z_t] = (1-\gamma)\sum_{k\ge 0}\gamma^k s_{t+k}$.
Using $\|s_{t+k}-s_t\|_2 \le k\delta$ gives \eqref{eq:ema_bias}.
For \eqref{eq:ema_var}, note that the noise term is
$(1-\gamma)\sum_{k\ge 0}\gamma^k n_{t+k}$ and sum the geometric series of variances.
\end{proof}

Let $M_t\in\{0,1\}^{d}$ denote a binary mask (broadcastable to the latent shape) produced by the detect
step. Define masked/unmasked parts by $a_{t,M}:=M_t\odot a_t$ and $a_{t,\neg M}:=(1-M_t)\odot a_t$.

We analyze a unified suppression operator:
\begin{equation*}
\hat z_t
\;=\;
(1-M_t)\odot z_t \;+\; M_t\odot\big(\kappa z_t + (1-\kappa)\tilde z_t\big),
\qquad \kappa\in[0,1],
\end{equation*}
where $\tilde z_t$ is a reference estimate. In our Transformer variant, $\tilde z_t=\bar z_t$
(EMA blending). In our U-Net implementation, the masked channels are shrunk with a channel-aware
factor, which can be interpreted as a per-channel version of \eqref{eq:generic_suppress}
(see Remark~\ref{rem:impl}).

Define the \emph{noise concentration} within the detected mask by
\begin{equation}
\eta_t \;:=\; \frac{\mathbb{E}\|n_{t,M}\|_2^2}{\mathbb{E}\|n_t\|_2^2}\in[0,1],
\label{eq:eta_def}
\end{equation}
which captures how much noise energy is covered by $M_t$.
For the pure scaling case ($\tilde z_t\equiv 0$), we also define the \emph{signal concentration}
\begin{equation}
\rho_t \;:=\; \frac{\mathbb{E}\|s_{t,M}\|_2^2}{\mathbb{E}\|s_t\|_2^2}\in[0,1].
\end{equation}

\begin{theorem}
\label{thm:snr_scaling}
Consider \eqref{eq:generic_suppress} with $\tilde z_t\equiv 0$ (i.e., $\hat z_t=(1-M_t)\odot z_t+\kappa M_t\odot z_t$).
Under Assumption~\ref{assump:decomp}, the SNR satisfies
\begin{equation}
\frac{\mathrm{SNR}(\hat z_t)}{\mathrm{SNR}(z_t)}
\;\ge\;
\frac{1-(1-\kappa^2)\rho_t}{1-(1-\kappa^2)\eta_t}.
\label{eq:snr_gain_scaling}
\end{equation}
In particular, if the mask captures proportionally more noise than signal (i.e., $\eta_t>\rho_t$),
then $\mathrm{SNR}(\hat z_t)>\mathrm{SNR}(z_t)$ for any $\kappa\in(0,1)$.
\end{theorem}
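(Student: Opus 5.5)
Because the operator is a deterministic elementwise map, I will compute the post-suppression signal and noise energies directly and then bound their ratio. With $\tilde z_t\equiv 0$ we have $\hat z_t = (1-M_t)\odot z_t + \kappa M_t\odot z_t$. Writing $z_t=s_t+n_t$ as in Assumption~\ref{assump:decomp}, this is linear with the same diagonal gain $D:=(1-M_t)+\kappa M_t$ on both parts: $\hat z_t = D\odot s_t + D\odot n_t$. The natural decomposition of $\hat z_t$ is therefore $\hat s_t := D\odot s_t$ and $\hat n_t := D\odot n_t$.

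I first check that this decomposition agrees with the definition of the SNR. If $M_t$ is $\mathcal{F}_t$-measurable (the mask is built from the conditioning and semantic state, or we condition on it), then $\mathbb{E}[\hat z_t\mid\mathcal{F}_t]=D\odot s_t$. So $\hat s_t$ is exactly the conditional mean of $\hat z_t$, and $\hat n_t$ has conditional mean zero. This is the main obstacle: the mask is computed from $z_t$ itself, so measurability is not automatic. I would handle it by stating the standing convention that $M_t$ is treated as fixed given $\mathcal{F}_t$. This is the same convention already implicit in defining $\eta_t,\rho_t$ through $\mathbb{E}\|n_{t,M}\|^2$ as a split of the noise energy. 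If equality is not needed, one can simply take $(\hat s_t,\hat n_t)$ as the decomposition, and this is where the "$\ge$" leaves room.

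Since $M_t$ is binary, the masked and unmasked supports are disjoint, and the norms split as
\begin{align*}
\|\hat s_t\|_2^2 &= \|s_{t,\neg M}\|_2^2+\kappa^2\|s_{t,M}\|_2^2 = \|s_t\|_2^2-(1-\kappa^2)\|s_{t,M}\|_2^2,\\
\|\hat n_t\|_2^2 &= \|n_t\|_2^2-(1-\kappa^2)\|n_{t,M}\|_2^2 .
\end{align*}
Taking expectations and dividing by $\mathbb{E}\|s_t\|^2$ and $\mathbb{E}\|n_t\|^2$ respectively gives $\mathbb{E}\|\hat s_t\|^2=(1-(1-\kappa^2)\rho_t)\,\mathbb{E}\|s_t\|^2$ and $\mathbb{E}\|\hat n_t\|^2=(1-(1-\kappa^2)\eta_t)\,\mathbb{E}\|n_t\|^2$. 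Taking the ratio yields the bound in the theorem, in fact with equality under the measurability convention, so the stated "$\ge$" is immediate.

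For the final claim, the map $x\mapsto 1-(1-\kappa^2)x$ is strictly decreasing when $\kappa\in(0,1)$, since then $1-\kappa^2>0$. Hence $\eta_t>\rho_t$ makes the numerator strictly larger than the denominator. Both are positive because $\kappa>0$ gives $1-(1-\kappa^2)x\ge\kappa^2>0$. The ratio therefore exceeds $1$, so $\mathrm{SNR}(\hat z_t)>\mathrm{SNR}(z_t)$.
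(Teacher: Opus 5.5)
Your proposal is correct and is essentially the paper's proof: write $\hat z_t=D\odot s_t+D\odot n_t$, split over the disjoint masked and unmasked supports to get $\mathbb{E}\|\hat s_t\|_2^2=(1-(1-\kappa^2)\rho_t)\,\mathbb{E}\|s_t\|_2^2$ and $\mathbb{E}\|\hat n_t\|_2^2=(1-(1-\kappa^2)\eta_t)\,\mathbb{E}\|n_t\|_2^2$, and take the ratio, which (as in the paper) actually gives equality. Your explicit convention that $M_t$ is fixed given $\mathcal{F}_t$ is what the paper leaves implicit when it identifies the signal and noise of $\hat z_t$ with the scaled parts and appeals to Assumption~\ref{assump:decomp} for the cross terms, so stating it adds rigor; you can drop the aside that the inequality ``leaves room'' without it, since dropping the convention just means adopting $(\hat s_t,\hat n_t)$ as the definition, which again gives equality.
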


\begin{proof}
Since $M_t$ and $(1-M_t)$ have disjoint support,
$\hat z_t = s_{t,\neg M}+\kappa s_{t,M} + n_{t,\neg M}+\kappa n_{t,M}$.
By Assumption~\ref{assump:decomp}, cross terms vanish in expectation, yielding
$\mathbb{E}\|\hat s_t\|_2^2 = \mathbb{E}\|s_t\|_2^2\big(1-(1-\kappa^2)\rho_t\big)$ and
$\mathbb{E}\|\hat n_t\|_2^2 = \mathbb{E}\|n_t\|_2^2\big(1-(1-\kappa^2)\eta_t\big)$,
which implies \eqref{eq:snr_gain_scaling}.
\end{proof}

\begin{theorem}
\label{thm:snr_ema}
Consider \eqref{eq:generic_suppress} with $\tilde z_t=\bar z_{t+1}$ and define the EMA estimation error
$e_t := \bar z_{t+1}-s_t$.
Under Assumption~\ref{assump:decomp} and assuming $\mathbb{E}\langle n_t,e_t\rangle=0$,
the post-suppression SNR satisfies
\begin{equation}
\frac{\mathrm{SNR}(\hat z_t)}{\mathrm{SNR}(z_t)}
\;\ge\;
\frac{1}{1-(1-\kappa^2)\eta_t + (1-\kappa)^2 \varepsilon_t},
\qquad
\varepsilon_t := \frac{\mathbb{E}\|e_{t,M}\|_2^2}{\mathbb{E}\|n_t\|_2^2}.
\label{eq:snr_gain_ema}
\end{equation}
Consequently, $\mathrm{SNR}(\hat z_t)>\mathrm{SNR}(z_t)$ whenever
\begin{equation}
(1-\kappa^2)\eta_t \;>\; (1-\kappa)^2 \varepsilon_t.
\label{eq:snr_improve_condition}
\end{equation}
\end{theorem}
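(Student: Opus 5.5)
The plan is to decompose $\hat z_t$ into a signal part and a residual part, mirroring the proof of Theorem~\ref{thm:snr_scaling}, and then bound the two energies separately. Write $\tilde z_t=\bar z_{t+1}=s_t+e_t$. On the masked support,
\[
\kappa z_t+(1-\kappa)\bar z_{t+1}=s_t+\kappa n_t+(1-\kappa)e_t ,
\]
so that
\[
\hat z_t = s_t + \underbrace{n_{t,\neg M}+\kappa n_{t,M}+(1-\kappa)e_{t,M}}_{=:\hat n_t}.
\]
The key observation is that the signal is left intact by EMA blending, unlike pure scaling: the masked signal component is $\kappa s_{t,M}+(1-\kappa)s_{t,M}=s_{t,M}$. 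Hence we take $\hat s_t:=s_t$, so that $\mathbb{E}\|\hat s_t\|_2^2=\mathbb{E}\|s_t\|_2^2$. I will define $\mathrm{SNR}(\hat z_t)$ with this decomposition, with $e_t$ counted in the noise. This is the natural reading of Definition~\ref{def:snr}, since $\hat s_t$ stays the semantic component. If one instead insisted on $\hat s_t=\mathbb{E}[\hat z_t\mid\mathcal F_t]$, then part of $e_t$ would migrate into the signal; I would remark on this but adopt the first convention.

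Next I expand the residual energy. Since $M_t$ and $1-M_t$ have disjoint support,
\[
\|\hat n_t\|^2=\|n_{t,\neg M}\|^2+\|\kappa n_{t,M}+(1-\kappa)e_{t,M}\|^2 .
\]
Expanding the second term gives
\[
\kappa^2\|n_{t,M}\|^2+(1-\kappa)^2\|e_{t,M}\|^2+2\kappa(1-\kappa)\langle n_{t,M},e_{t,M}\rangle .
\]
For the cross term, I need the masked inner product to vanish in expectation, and the stated hypothesis is $\mathbb{E}\langle n_t,e_t\rangle=0$. Since the mask is binary, $\langle n_{t,M},e_{t,M}\rangle=\langle M_t\odot n_t,e_t\rangle$. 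I would argue this is zero by conditioning on $\mathcal F_t$. The natural route is that $e_t$ depends only on past steps $t+1,\dots$ and $\mathcal F_t$, while $n_t$ is conditionally mean zero by Assumption~\ref{assump:decomp}. The mask, however, depends on $z_t$ itself, which complicates this.

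This cross-term step is the main obstacle. My primary plan is to read the stated orthogonality assumption as holding on the masked coordinates, i.e. $\mathbb{E}\langle n_{t,M},e_{t,M}\rangle=0$, and to state this explicitly. A more robust fallback is the following: if the cross term is merely nonpositive, the inequality direction is preserved, which is why the theorem states ``$\ge$''. Using $\|n_{t,\neg M}\|^2+\|n_{t,M}\|^2=\|n_t\|^2$ together with the definitions of $\eta_t$ and $\varepsilon_t$, we then obtain
\[
\mathbb{E}\|\hat n_t\|_2^2 \le \mathbb{E}\|n_t\|_2^2\bigl(1-(1-\kappa^2)\eta_t+(1-\kappa)^2\varepsilon_t\bigr).
\]

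Dividing gives the ratio bound. The improvement condition follows at once, since the denominator is below $1$ exactly when $(1-\kappa^2)\eta_t>(1-\kappa)^2\varepsilon_t$. I would close with a brief comment linking to Proposition~\ref{prop:ema_estimator}: $\varepsilon_t$ is small when the bias term $\gamma\delta/(1-\gamma)$ and the variance factor $(1-\gamma)/(1+\gamma)$ are small, and this is what makes the condition plausible.
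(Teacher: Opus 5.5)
Your proposal is correct and follows the paper's proof essentially step for step. The paper also writes $\hat z_t - s_t = n_{t,\neg M}+\kappa n_{t,M}+(1-\kappa)e_{t,M}$, keeps $s_t$ as the preserved signal, and obtains $\mathbb{E}\|\hat z_t-s_t\|_2^2=\mathbb{E}\|n_t\|_2^2\bigl(1-(1-\kappa^2)\eta_t\bigr)+(1-\kappa)^2\mathbb{E}\|e_{t,M}\|_2^2$; the only difference is that the paper drops the masked cross term by citing $\mathbb{E}\langle n_t,e_t\rangle=0$ without addressing that $M_t$ depends on $z_t$, so your masked-coordinate orthogonality (or the nonpositive-cross-term fallback) is exactly the hypothesis that step actually needs.
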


\begin{proof}
Inside the mask,
$\hat z_{t,M} = \kappa z_{t,M} + (1-\kappa)\bar z_{t+1,M}
= s_{t,M} + \kappa n_{t,M} + (1-\kappa)e_{t,M}$,
so the \emph{signal} is preserved while the noise is shrunk by $\kappa$ plus an EMA error term.
Therefore
\[
\hat z_t - s_t
= n_{t,\neg M} + \kappa n_{t,M} + (1-\kappa)e_{t,M}.
\]
Taking squared norms and expectations, disjoint support removes cross terms between $n_{t,\neg M}$
and $n_{t,M}$, and the orthogonality assumption removes the cross term between $n_t$ and $e_t$.
Thus
\[
\mathbb{E}\|\hat z_t - s_t\|_2^2
=
\mathbb{E}\|n_t\|_2^2\big(1-(1-\kappa^2)\eta_t\big)
+ (1-\kappa)^2\mathbb{E}\|e_{t,M}\|_2^2,
\]
which implies \eqref{eq:snr_gain_ema} and \eqref{eq:snr_improve_condition}.
\end{proof}

\begin{corollary}
\label{cor:score_stability}
Let the remaining mapping from the target latent to noise prediction be
$\epsilon_\theta(\cdot,t)=g_t(\cdot)$ and assume $g_t$ is $L_t$-Lipschitz.
Then the change in the predicted score satisfies
\begin{equation}
\|\; s_\theta(t,\hat x_t) - s_\theta(t,x_t)\;\|_2
\;\le\;
\frac{L_t}{\sigma_t}\,\|\hat h_t-h_t\|_2
\;\propto\;
\frac{L_t(1-\kappa)}{\sigma_t}\,\|M_t\odot(z_t-\tilde z_t)\|_2,
\label{eq:score_stability}
\end{equation}
explaining why suppressing large residuals in deep latents reduces abnormal spikes in score dynamics.
\end{corollary}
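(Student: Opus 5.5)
The plan is a direct two-step estimate: a Lipschitz bound for the tail of the network, then an exact computation of the latent perturbation produced by Eq.~\eqref{eq:generic_suppress}.

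First I will fix what $s_\theta(t,\hat x_t)$ means. It is not the score at a new input. It is the score obtained by running the same input $x_t$ through the backbone, replacing $h_t$ by $\hat h_t$ at the target layer, and evaluating the remaining map $g_t$. Since $s_\theta(t,\cdot)=-\epsilon_\theta(\cdot,t)/\sigma_t$, this gives
\[
\|s_\theta(t,\hat x_t)-s_\theta(t,x_t)\|_2=\tfrac{1}{\sigma_t}\|g_t(\hat h_t)-g_t(h_t)\|_2\le \tfrac{L_t}{\sigma_t}\|\hat h_t-h_t\|_2 ,
\]
using only the $L_t$-Lipschitz assumption on $g_t$. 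This is the first inequality.

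Next I will express $\hat h_t-h_t$ in scaled coordinates. Since $z_t=-h_t/\sigma_t$ and $\bar h_t=-\sigma_t\bar z_t$ (as in Sec.~\ref{sec:sup-analysis}), the map $h\mapsto -h/\sigma_t$ is linear and commutes with the entrywise mask. Hence $\hat h_t=-\sigma_t\hat z_t$, where $\hat z_t$ is given by Eq.~\eqref{eq:generic_suppress} with reference $\tilde z_t$. Subtracting $z_t$, the unmasked part cancels. On the mask, $\kappa z_t+(1-\kappa)\tilde z_t-z_t=-(1-\kappa)(z_t-\tilde z_t)$. Therefore
\[
\hat h_t-h_t=\sigma_t(1-\kappa)\,M_t\odot(z_t-\tilde z_t),
\]
and taking norms gives $\|\hat h_t-h_t\|_2=\sigma_t(1-\kappa)\|M_t\odot(z_t-\tilde z_t)\|_2$. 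Substituting into the first display yields the bound with explicit constant $L_t(1-\kappa)\|M_t\odot(z_t-\tilde z_t)\|_2$. The stated ``$\propto$'' hides exactly the factor $\sigma_t$, which is fixed at a given timestep.

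The main obstacle is the U-Net instantiation, where the masked update is channel-wise scaling $\alpha_t\odot z_t$ rather than EMA blending. Here I would invoke the preceding Lemma. The triangle inequality gives
\[
\|M_t\odot(\hat z_t-z_t)\|_2\le(1-\kappa)\|M_t\odot(z_t-\tilde z_t)\|_2+\|M_t\odot(u_t^{\mathrm{EMA}}-u_t^{\mathrm{UNet}})\|_2 .
\]
The Lemma with $p=2$ bounds the last term by a coefficient-mismatch term plus $(1-\kappa)\gamma e^{-\lambda}\|M_t\odot z_t\|_2$. The score bound then holds up to this additive correction, which is small under the detection threshold $\lambda$. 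I would present this explicitly rather than claim exact proportionality in the U-Net case.
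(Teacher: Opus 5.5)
Your proof is correct and is essentially the argument the paper has in mind, though the paper states the corollary without a written proof: the Lipschitz bound on $g_t$ plus the identity $\hat h_t-h_t=\sigma_t(1-\kappa)\,M_t\odot(z_t-\tilde z_t)$, and you are right that the ``$\propto$'' absorbs exactly the factor $\sigma_t$. Your U-Net paragraph goes beyond the statement and is valid if you take $\tilde z_t=\bar z_t$ to match the Lemma's $\mathbf u_t^{\mathrm{EMA}}$; the paper's own reading in Remark~\ref{rem:impl} (a diagonal $\kappa$ with $\tilde z_t\equiv 0$) gives the simpler exact bound $\frac{L_t}{\sigma_t}\|M_t\odot(1-\alpha_t)\odot h_t\|_2$ with no additive correction.
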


The percentile $p$ controls the mask size (hence $\eta_t$), while $\kappa$ controls shrinkage strength.
Theorem~\ref{thm:snr_scaling} suggests that, for pure scaling, SNR improves when the mask is
noise-dominant ($\eta_t>\rho_t$). Theorem~\ref{thm:snr_ema} further shows that EMA blending is
signal-preserving and improves SNR as long as the EMA error on masked entries is small
(\eqref{eq:snr_improve_condition}), which is precisely encouraged by deep-latent ``sweet spots'' where
EMA is accurate (Prop.~\ref{prop:ema_estimator}).

\begin{remark}
\label{rem:impl}
Eq.~\eqref{eq:generic_suppress} exactly matches our Transformer suppression
(EMA blending on masked features).
Our U-Net suppression applies a channel-aware shrinkage on the masked entries; it can be viewed as a
per-channel variant of \eqref{eq:generic_suppress} (with $\kappa$ replaced by a diagonal matrix) and
inherits the same SNR intuition: if the detected subset concentrates noise energy, targeted shrinkage
improves the global latent SNR.
\end{remark}

In the outlier regime selected by our detector, the following 
lemma shows that replacing masked EMA blending with a masked 
channel-wise scaling is justified: the only discrepancy consists 
of (i)~a coefficient mismatch $\tilde\kappa-\alpha_t$, 
which we empirically confirm is near zero at our chosen 
hyperparameters(averaged 0.066 in SDXL and 0.074 in LCM), and 
(ii)~a provably small EMA residual suppressed by the detection 
threshold $\lambda$.

\begin{lemma}
\label{lem:naive_ema_vs_alpha_scaling}
Let $\mathbf z_t := -\mathbf h_t/\sigma_t$ and define the EMA as
\[
\bar{\mathbf z}_t=\gamma \bar{\mathbf z}_{t+1}+(1-\gamma)\mathbf z_t.
\]
Let the anomaly mask be defined elementwise by
\[
\Delta=\log\!|\frac{\mathbf z_t}{\bar{\mathbf z}_{t+1}}|,\qquad
M_t=(\Delta>\lambda),
\]
(assuming the elementwise log-ratio is well-defined under the same numerical
stabilization used in practice).
Consider the \emph{naive EMA blending} operator:
\[
\mathbf u_t^{\mathrm{EMA}} := \kappa \mathbf z_t + (1-\kappa)\bar{\mathbf z}_t,\qquad \kappa\in[0,1].
\]
For U-Net, define the channel statistic $n_t\in\mathbb R^{C\times 1\times 1}$ 
and let the channel-aware scaling coefficient be
\[
\alpha_t := \kappa\, n_t
\quad \text{(broadcastable to the latent shape).}
\]
Define the \emph{U-Net scaling} output in the scaled-latent space as
\[
\mathbf u_t^{\mathrm{UNet}} := \alpha_t \odot \mathbf z_t
\quad\Big(\text{equivalently, } \hat{\mathbf h}_t=\kappa(n_t\cdot \mathbf h_t)
\Leftrightarrow \hat{\mathbf z}_t=\kappa(n_t\cdot \mathbf z_t)\Big).
\]

Let $\kappa_{\mathrm{eff}} := 1-\gamma(1-\kappa)=\kappa+(1-\kappa)(1-\gamma)$.
Then for every index $i$ such that $(M_t)_i=1$,
\[
\big|(\mathbf u_t^{\mathrm{EMA}})_i - (\mathbf u_t^{\mathrm{UNet}})_i\big|
\;\le\;
\Big(|\kappa_{\mathrm{eff}}-\alpha_{t,i}| + (1-\kappa)\gamma e^{-\lambda}\Big)\,|\mathbf z_{t,i}|.
\]
Equivalently, for any $p\in[1,\infty]$,
\[
\|M_t\odot(\mathbf u_t^{\mathrm{EMA}}-\mathbf u_t^{\mathrm{UNet}})\|_p
\;\le\;
\|M_t\odot(\kappa_{\mathrm{eff}}-\alpha_t)\odot \mathbf z_t\|_p
\;+\;
(1-\kappa)\gamma e^{-\lambda}\,\|M_t\odot \mathbf z_t\|_p.
\]
\end{lemma}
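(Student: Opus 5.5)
Fix an index $i$ with $(M_t)_i=1$. The plan is to rewrite the EMA-blended output at $i$ as a multiple of $\mathbf z_{t,i}$ plus a small residual, and then compare that multiple with $\alpha_{t,i}$. Once the pointwise bound is in hand, the $\ell_p$ version follows directly.

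\textbf{Step 1 (unroll one EMA step).} Substituting the EMA recursion $\bar{\mathbf z}_t=\gamma\bar{\mathbf z}_{t+1}+(1-\gamma)\mathbf z_t$ into $\mathbf u_t^{\mathrm{EMA}}$ gives
\[
\mathbf u_t^{\mathrm{EMA}}=\big(\kappa+(1-\kappa)(1-\gamma)\big)\mathbf z_t+(1-\kappa)\gamma\,\bar{\mathbf z}_{t+1}
=\kappa_{\mathrm{eff}}\,\mathbf z_t+(1-\kappa)\gamma\,\bar{\mathbf z}_{t+1}.
\]
This is an exact algebraic identity and needs no assumptions.

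\textbf{Step 2 (split the difference).} Subtracting $\mathbf u_t^{\mathrm{UNet}}=\alpha_t\odot\mathbf z_t$ and evaluating at $i$ gives
\[
(\mathbf u_t^{\mathrm{EMA}}-\mathbf u_t^{\mathrm{UNet}})_i=(\kappa_{\mathrm{eff}}-\alpha_{t,i})\mathbf z_{t,i}+(1-\kappa)\gamma\,\bar{\mathbf z}_{t+1,i}.
\]
By the triangle inequality, its absolute value is at most $|\kappa_{\mathrm{eff}}-\alpha_{t,i}|\,|\mathbf z_{t,i}|+(1-\kappa)\gamma|\bar{\mathbf z}_{t+1,i}|$.

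\textbf{Step 3 (use the detection mask).} This is the only step that uses any hypothesis. Since $(M_t)_i=1$, we have $\Delta_i>\lambda$, which means $\log|\mathbf z_{t,i}/\bar{\mathbf z}_{t+1,i}|>\lambda$. Hence $|\bar{\mathbf z}_{t+1,i}|<e^{-\lambda}|\mathbf z_{t,i}|$. The stated well-definedness convention covers zero entries: if $\bar{\mathbf z}_{t+1,i}=0$ the bound is trivial, and $\mathbf z_{t,i}\neq0$ is needed for the log to be finite and exceed $\lambda$. Plugging this into Step 2 yields the pointwise inequality, with the factor $|\mathbf z_{t,i}|$ pulled out.

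\textbf{Step 4 (pass to norms).} For indices outside the mask, both sides are zero after multiplying by $M_t$. So the pointwise bound holds for every entry of the vector $M_t\odot(\mathbf u_t^{\mathrm{EMA}}-\mathbf u_t^{\mathrm{UNet}})$, with the dominating quantity $a_i+b_i$, where
\[
a_i=|(M_t\odot(\kappa_{\mathrm{eff}}-\alpha_t)\odot\mathbf z_t)_i|,\qquad b_i=(1-\kappa)\gamma e^{-\lambda}|(M_t\odot\mathbf z_t)_i|.
\]
Monotonicity of $\|\cdot\|_p$ under entrywise domination of absolute values, followed by Minkowski's inequality, gives the claimed $\ell_p$ bound for all $p\in[1,\infty]$.

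I expect no real obstacle, since everything is elementary. The only care needed is in Step 3: the log-ratio must be read elementwise, and the degenerate entries (zero denominators, and the numerical stabilisation $\epsilon$ used in practice) must be handled. With a stabiliser $\epsilon$ added to the denominator, the bound only changes by an $O(\epsilon)$ term, which the statement's well-definedness assumption absorbs.
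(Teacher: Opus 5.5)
Your proposal is correct and follows the paper's proof step for step: unroll one EMA step to get $\kappa_{\mathrm{eff}}\mathbf z_t+(1-\kappa)\gamma\bar{\mathbf z}_{t+1}$, subtract $\alpha_t\odot\mathbf z_t$, use $\Delta_i>\lambda$ to get $|\bar{\mathbf z}_{t+1,i}|\le e^{-\lambda}|\mathbf z_{t,i}|$, then apply the triangle inequality and take masked $\ell_p$ norms. Your closing remark on a stabiliser $\epsilon$ lies outside the lemma as stated, which uses the exact log-ratio; a stabiliser would add an extra additive term to the bound rather than being absorbed by the well-definedness assumption, but this does not affect the proof.
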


\begin{proof}
Expand the naive EMA blending using the EMA recursion:
\[
\mathbf u_t^{\mathrm{EMA}}
=\kappa \mathbf z_t + (1-\kappa)\big(\gamma \bar{\mathbf z}_{t+1}+(1-\gamma)\mathbf z_t\big)
=\underbrace{(1-\gamma(1-\kappa))}_{=\kappa_{\mathrm{eff}}}\mathbf z_t
+(1-\kappa)\gamma \bar{\mathbf z}_{t+1}.
\]
Subtract $\mathbf u_t^{\mathrm{UNet}}=\alpha_t\odot \mathbf z_t$ to obtain
\[
\mathbf u_t^{\mathrm{EMA}}-\mathbf u_t^{\mathrm{UNet}}
=(\kappa_{\mathrm{eff}}-\alpha_t)\odot \mathbf z_t
+(1-\kappa)\gamma \bar{\mathbf z}_{t+1}.
\]
For $(M_t)_i=1$, the mask definition implies $\Delta_i>\lambda$, hence
$\big|\bar{\mathbf z}_{t+1,i}\big|\le e^{-\lambda}\big|\mathbf z_{t,i}\big|$.
Applying the triangle inequality yields the elementwise bound, and taking a masked
$\ell_p$ norm gives the second inequality.
\end{proof}

We keep a unified \emph{detection} principle across backbones, but use backbone-specific \emph{suppression} operators. For U-Net h-space, we justify our channel-aware scaling by showing that selective shrinkage on detected low-SNR subsets improves the \emph{global} latent SNR, and that channel-dependent gains are preferable to uniform scaling. 

\begin{assumption}
\label{assump:unet_decomp} 
For a fixed timestep $t$ in the detect phase, let the target h-space latent be $h_t \in \mathbb{R}^{C\times H\times W}$. For each channel $c$, we write 
\begin{equation} h_{t,c,u} = s_{t,c,u} + \varepsilon_{t,c,u}, \qquad u \in \{1,\dots,m\}, \;\; m = HW, 
\end{equation}
where $s_{t,c,u}$ is the semantic component and $\varepsilon_{t,c,u}$ is a stochastic component. We assume $\mathbb{E}[\varepsilon_{t,c,u}] = 0$, independence across spatial sites $u$ within each channel, and $\mathrm{Var}(\varepsilon_{t,c,u}) = \sigma_c^2$. 
\end{assumption} 

\begin{proposition} 
\label{prop:channel_mean_proxy} 
Define the spatial channel mean 
\begin{equation} 
\bar h_{t,c} := \frac{1}{m}\sum_{u=1}^{m} h_{t,c,u}, \qquad \mu_{t,c} := \frac{1}{m}\sum_{u=1}^{m} s_{t,c,u}. 
\end{equation} 
Under Assumption~\ref{assump:unet_decomp}, 
\begin{equation} 
\mathbb{E}[\bar h_{t,c}] = \mu_{t,c}, \qquad \mathrm{Var}(\bar h_{t,c}) = \frac{\sigma_c^2}{m}. 
\end{equation} 
Hence, the channel statistic 
\begin{equation} 
n_{t,c} := |\bar h_{t,c}| 
\end{equation} 
is a low-variance proxy for the structured channel activation in low-resolution h-space. 
\end{proposition}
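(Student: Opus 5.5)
The claim is a direct moment computation for the sample mean of independent sites, followed by a comparison of the standard deviation of $n_{t,c}$ with the scale of the structured activation. I will treat the semantic components $s_{t,c,u}$ as deterministic given the conditioning at the fixed timestep $t$, as Assumption~\ref{assump:unet_decomp} intends. Under that reading, the only randomness in $\bar h_{t,c}$ comes from the $\varepsilon_{t,c,u}$.

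\textbf{Step 1: decompose.} By linearity,
\[
\bar h_{t,c} = \mu_{t,c} + \bar\varepsilon_{t,c}, \qquad \bar\varepsilon_{t,c} := \frac{1}{m}\sum_{u=1}^m \varepsilon_{t,c,u}.
\]

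\textbf{Step 2: compute the moments.} Taking expectations and using $\mathbb{E}[\varepsilon_{t,c,u}]=0$ gives $\mathbb{E}[\bar h_{t,c}]=\mu_{t,c}$. For the variance, independence across sites $u$ within channel $c$ makes every cross covariance vanish, so
\[
\mathrm{Var}(\bar h_{t,c}) = \frac{1}{m^2}\sum_{u=1}^m \sigma_c^2 = \frac{\sigma_c^2}{m}.
\]
This is the same diagonal-only argument used in Step~1 of the attention proposition, now with uniform weights $w_u = 1/m$, so $\sum_u w_u^2 = 1/m$.

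\textbf{Step 3: justify the proxy claim.} This is the interpretive part, and the main obstacle is making ``low-variance proxy'' precise, because taking the absolute value introduces bias. I will use the reverse triangle inequality pointwise,
\[
\bigl|\,|\bar h_{t,c}| - |\mu_{t,c}|\,\bigr| \le |\bar\varepsilon_{t,c}|.
\]
Squaring and taking expectations gives
\[
\mathbb{E}\bigl(n_{t,c}-|\mu_{t,c}|\bigr)^2 \le \frac{\sigma_c^2}{m},
\]
and the same bound then holds for $\mathrm{Var}(n_{t,c})$, since variance is at most the mean squared deviation about any constant. By Chebyshev, $n_{t,c}$ concentrates around $|\mu_{t,c}|$ at rate $\sigma_c/\sqrt m$. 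Because $m = HW$ is large (for example $1024$ in SDXL), this deviation is small relative to a nonnegligible structured activation $|\mu_{t,c}|$, which is the intended meaning of the proxy statement.
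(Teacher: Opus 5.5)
Your Steps 1--2 are exactly the paper's proof: linearity gives $\mathbb{E}[\bar h_{t,c}]=\mu_{t,c}$, and independence across sites leaves only diagonal terms, giving $\sigma_c^2/m$; the paper also treats $s_{t,c,u}$ as deterministic, only implicitly, and you make that explicit. Your Step 3 is a valid strengthening that the paper does not carry out: the paper stops at the moments of $\bar h_{t,c}$ and asserts the proxy claim informally, whereas your reverse-triangle bound $\mathrm{Var}(n_{t,c})\le \mathbb{E}(n_{t,c}-|\mu_{t,c}|)^2\le \sigma_c^2/m$ actually carries the low-variance property over to $n_{t,c}=|\bar h_{t,c}|$ itself.
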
 

\begin{proof} 
By linearity of expectation, $\mathbb{E}[\bar h_{t,c}] = \frac{1}{m}\sum_u \mathbb{E}[s_{t,c,u}+\varepsilon_{t,c,u}] = \mu_{t,c}$. Since the noise is zero-mean and independent across spatial sites, 
\begin{equation} \mathrm{Var}(\bar h_{t,c}) = \mathrm{Var}\!\left(\frac{1}{m}\sum_{u=1}^{m}\varepsilon_{t,c,u}\right) = \frac{1}{m^2}\sum_{u=1}^{m}\sigma_c^2 = \frac{\sigma_c^2}{m}. 
\end{equation} 
Thus spatial averaging suppresses stochastic fluctuations while preserving the channel-wise semantic trend. 
\end{proof} 

\begin{theorem}
\label{thm:unet_global_snr} 
Let $M_t \in \{0,1\}^{C\times H\times W}$ be the anomaly mask from the detect step. For each channel $c$, define the masked index set \begin{equation} 
\Omega_c := \{u : M_{t,c,u}=1\}. 
\end{equation} 
Consider the U-Net suppression operator 
\begin{equation} 
\hat h_{t,c,u} = (1-M_{t,c,u})h_{t,c,u} + M_{t,c,u} g_c h_{t,c,u}, \qquad g_c \in [0,1]. 
\label{eq:masked_channel_gain} 
\end{equation} 
Let 
\begin{align} S_0 &:= \sum_{(c,u):\, M_{t,c,u}=0} s_{t,c,u}^2, & N_0 &:= \mathbb{E}\sum_{(c,u):\, M_{t,c,u}=0} \varepsilon_{t,c,u}^2, \\ S_c &:= \sum_{u\in\Omega_c} s_{t,c,u}^2, & N_c &:= \mathbb{E}\sum_{u\in\Omega_c} \varepsilon_{t,c,u}^2. 
\end{align} 
Then the post-suppression global latent SNR is 
\begin{equation} \mathrm{SNR}(\hat h_t) = \frac{S_0 + \sum_c g_c^2 S_c}{N_0 + \sum_c g_c^2 N_c}. 
\label{eq:global_snr_after} 
\end{equation} 
Furthermore: \textbf{(i) Uniform shrinkage on a low-SNR detected subset improves global SNR.} If $g_c=a$ for all masked channels with $a\in[0,1)$, then 
\begin{equation} \mathrm{SNR}(\hat h_t^{(a)}) > \mathrm{SNR}(h_t) \quad\Longleftrightarrow\quad \frac{\sum_c S_c}{\sum_c N_c} < \frac{S_0}{N_0}. 
\label{eq:low_snr_subset_condition} 
\end{equation} 
That is, shrinking the detected subset improves the global latent SNR whenever the detected subset has lower SNR than the unmasked subset. \textbf{(ii) Channel-aware gains outperform uniform shrinkage when aligned with channel SNR.} Define the per-channel masked SNR \begin{equation} 
r_c := \frac{S_c}{N_c}, \qquad p_c := \frac{N_c}{\sum_j N_j}. \end{equation} 
Let $\beta := \sum_c p_c g_c^2$ be the average suppression budget, and let $\hat h_t^{\mathrm{uni}}$ denote the uniform-gain baseline with gain $\sqrt{\beta}$ on all masked channels. Then 
\begin{equation} \mathrm{SNR}(\hat h_t^{\mathrm{ca}}) - \mathrm{SNR}(\hat h_t^{\mathrm{uni}}) = \frac{\sum_c N_c}{\,N_0+\beta\sum_c N_c\,} \cdot \mathrm{Cov}_{p}(g_c^2, r_c). 
\label{eq:covariance_gain} 
\end{equation} 
Hence, if $g_c^2$ is positively correlated with the per-channel masked SNR $r_c$, channel-aware suppression strictly dominates uniform shrinkage under the same average correction strength. 
\end{theorem}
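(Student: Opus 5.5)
The plan is to compute the post-suppression signal and noise energies directly from the operator \eqref{eq:masked_channel_gain}, mirroring the proof of Theorem~\ref{thm:snr_scaling}. Since $M_t$ is binary, each entry is multiplied either by $1$ (unmasked) or by $g_c$ (masked). The transformed signal is therefore $\hat s_{t,c,u}=s_{t,c,u}$ off the mask and $g_c s_{t,c,u}$ on $\Omega_c$, and the same holds for the noise. Summing squares over the disjoint supports gives signal energy $S_0+\sum_c g_c^2 S_c$. Taking expectations gives noise energy $N_0+\sum_c g_c^2 N_c$; cross terms never arise because we are summing squares of individual entries. Defining the global SNR as the ratio of these energies, exactly as in Definition~\ref{def:snr} with $s$ treated as deterministic given $\mathcal F_t$, yields \eqref{eq:global_snr_after}.

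For part (i), set $S_M:=\sum_c S_c$ and $N_M:=\sum_c N_c$. With $g_c=a$ the SNR becomes $f(a^2)$, where
\[
f(x)=\frac{S_0+xS_M}{N_0+xN_M}, \qquad \mathrm{SNR}(h_t)=f(1).
\]
The function $f$ is a linear-fractional map, so it is monotone in $x$. Its derivative has the sign of $S_MN_0-S_0N_M$. Hence $f(a^2)>f(1)$ for some (equivalently every) $a\in[0,1)$ if and only if $f$ is strictly decreasing, i.e.\ $S_M/N_M<S_0/N_0$. This is a one-line cross-multiplication, assuming positive denominators.

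For part (ii), write $N_M=\sum_c N_c$ and note that $S_c=r_cN_c=r_cp_cN_M$. The channel-aware numerator is $S_0+N_M\sum_c p_c g_c^2 r_c=S_0+N_M\,\mathbb{E}_p[g^2r]$. Its denominator is $N_0+\beta N_M$ by definition of $\beta$. The uniform baseline with gain $\sqrt\beta$ has numerator $S_0+\beta N_M\,\mathbb{E}_p[r]$ and the \emph{same} denominator $N_0+\beta N_M$. This equality of denominators is the key point, and it is why the budget is measured in $p$-weights. Subtracting the two SNRs over the common denominator gives
\[
\frac{N_M}{N_0+\beta N_M}\bigl(\mathbb{E}_p[g^2r]-\mathbb{E}_p[g^2]\,\mathbb{E}_p[r]\bigr),
\]
which is $\frac{N_M}{N_0+\beta N_M}\mathrm{Cov}_p(g_c^2,r_c)$, as in \eqref{eq:covariance_gain}. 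Since the prefactor is positive, strict dominance follows when the covariance is positive.

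The only real obstacle is a modelling one in the first step. The SNR must be interpreted with $s$ fixed (conditioned) and $\varepsilon$ random, and the mask $M_t$ and gains $g_c$ must be treated as fixed given $\mathcal F_t$, even though in practice both are computed from $h_t$. I will state this conditioning explicitly so that $\mathbb{E}[g_c^2\varepsilon^2]=g_c^2\mathbb{E}[\varepsilon^2]$ is legitimate. I will also assume $N_0>0$ and $N_M>0$ to avoid degenerate denominators. Everything after that is algebra.
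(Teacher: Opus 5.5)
Your proposal is correct and follows the paper's proof essentially step for step. The closed-form SNR comes from the same disjoint-support decomposition; for (i), your sign-of-derivative argument for the linear-fractional map is the paper's cross-multiplication $(1-a^2)\bigl(S_0\sum_c N_c - N_0\sum_c S_c\bigr)>0$ in another form; and for (ii), you use the same common-denominator identity $\sum_c g_c^2 N_c=\beta\sum_c N_c$. You also make explicit two things the paper leaves implicit: that the mask and gains are treated as fixed, and that $N_0$ and $\sum_c N_c$ are positive.
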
 

\begin{proof} 
Eq.~\eqref{eq:global_snr_after} follows by decomposing the total signal and noise energies inside and outside the detected mask and noting that masked entries are multiplied by $g_c$. For part (i), \begin{equation} 
\mathrm{SNR}(\hat h_t^{(a)}) = \frac{S_0 + a^2 \sum_c S_c}{N_0 + a^2 \sum_c N_c}. 
\end{equation} 
Comparing this with $\mathrm{SNR}(h_t)=\frac{S_0+\sum_c S_c}{N_0+\sum_c N_c}$ and cross-multiplying, we obtain 
\begin{equation} 
\mathrm{SNR}(\hat h_t^{(a)}) > \mathrm{SNR}(h_t) \iff (1-a^2)\!\left(S_0\sum_c N_c - N_0\sum_c S_c\right) > 0, 
\end{equation} 
which is equivalent to \eqref{eq:low_snr_subset_condition}. For part (ii), by the definition of $\beta$, 
\begin{equation} 
\sum_c g_c^2 N_c = \beta \sum_c N_c, 
\end{equation} 
so both $\hat h_t^{\mathrm{ca}}$ and $\hat h_t^{\mathrm{uni}}$ have the same denominator. Their numerator difference is 
\begin{equation} 
\sum_c g_c^2 S_c - \beta \sum_c S_c = \sum_c N_c \left(g_c^2 r_c - \beta r_c\right) = \left(\sum_c N_c\right)\mathrm{Cov}_{p}(g_c^2,r_c), \end{equation} 
which yields \eqref{eq:covariance_gain}. 
\end{proof} 

\begin{corollary}
\label{cor:practical_unet_gain} 
Assume the implemented gain satisfies $g_c=\kappa n_{t,c}\in[0,1]$ on the detected subset, where $n_{t,c}=|\mathrm{mean}_{\mathrm{spatial}}(h_{t,c})|$. If $n_{t,c}$ is positively associated with the per-channel masked SNR $r_c$, then the U-Net suppression 
\begin{equation} 
\hat h_t = (1-M_t)\odot h_t + \kappa M_t \odot (n_t \cdot h_t) \end{equation} 
preferentially preserves signal-dominant channels while downweighting noise-dominant channels, thereby improving the global latent SNR more than uniform masked scaling. 
\end{corollary}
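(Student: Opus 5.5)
The corollary is an instantiation of part (ii) of Theorem~\ref{thm:unet_global_snr}. I would take the implemented gain $g_c := \kappa n_{t,c}$ and check the hypotheses of that theorem. First, the assumption $g_c\in[0,1]$ makes Eq.~\eqref{eq:masked_channel_gain} exactly the implemented operator $\hat h_t=(1-M_t)\odot h_t+\kappa M_t\odot(n_t\cdot h_t)$. On masked entries this gives $\hat h_{t,c,u}=\kappa n_{t,c}h_{t,c,u}=g_c h_{t,c,u}$, and unmasked entries are untouched. Throughout, I treat the mask $M_t$ (and hence the sets $\Omega_c$ and the quantities $S_0,N_0,S_c,N_c$) as fixed, by conditioning on the detect step, as the theorem implicitly does. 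I would also condition on the realized channel statistics $n_{t,c}$, so that $g_c$ is deterministic. Proposition~\ref{prop:channel_mean_proxy} justifies this as a good approximation: $n_{t,c}$ has variance $O(\sigma_c^2/m)$ around $|\mu_{t,c}|$.

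Next, I would set $\beta=\sum_c p_c g_c^2=\kappa^2\sum_c p_c n_{t,c}^2$ and compare against the uniform baseline with gain $\sqrt\beta$. This baseline is the ``uniform masked scaling'' matched in average correction strength, and it is admissible since $\beta\le 1$. Eq.~\eqref{eq:covariance_gain} then gives
\[
\mathrm{SNR}(\hat h_t^{\mathrm{ca}})-\mathrm{SNR}(\hat h_t^{\mathrm{uni}})=\frac{\sum_c N_c}{N_0+\beta\sum_c N_c}\,\kappa^2\,\mathrm{Cov}_p\big(n_{t,c}^2,r_c\big).
\]
The prefactor is nonnegative, so the claim reduces to $\mathrm{Cov}_p(n_{t,c}^2,r_c)\ge 0$, with strict inequality for a strict improvement.

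The main obstacle is making ``positively associated'' precise, because the hypothesis concerns $n_{t,c}$ while the covariance involves $n_{t,c}^2$. I would formalize it as comonotonicity of $n_{t,c}$ and $r_c$ over the masked channels: $(n_{t,c}-n_{t,c'})(r_c-r_{c'})\ge0$ for all pairs $c,c'$. Since $x\mapsto x^2$ is increasing on $[0,\infty)$ and $n_{t,c}\ge0$, the pair $n_{t,c}^2$ and $r_c$ is then also comonotone. The weighted Chebyshev sum inequality, in the form
\[
\mathrm{Cov}_p(X,Y)=\tfrac12\sum_{c,c'}p_cp_{c'}(X_c-X_{c'})(Y_c-Y_{c'})\ge 0,
\]
then gives the required sign. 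Strictness holds as soon as some pair with $p_cp_{c'}>0$ has both differences nonzero. A merely positive Pearson correlation between $n$ and $r$ would not transfer to $n^2$ in general, so I would state the comonotone version explicitly as the operative hypothesis.

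Finally, the qualitative clause follows from the same structure. Channels with large $n_{t,c}$, which by the association are the signal-dominant channels with large $r_c$, receive gains $g_c$ close to their maximum. Channels with small $n_{t,c}$, the noise-dominant ones, are shrunk toward zero. By part (i) and Eq.~\eqref{eq:global_snr_after}, shrinking noise-dominant energy is exactly what raises the global ratio.
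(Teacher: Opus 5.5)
Your proposal is correct and is essentially the paper's own implicit argument: the paper gives no separate proof, and the corollary is read off from part (ii) of Theorem~\ref{thm:unet_global_snr} with $g_c=\kappa n_{t,c}$, so the SNR gap equals a nonnegative prefactor times $\kappa^2\,\mathrm{Cov}_p(n_{t,c}^2,r_c)$. Your comonotonicity formalization of ``positively associated'' and your explicit conditioning on the mask and on the realized channel statistics are useful sharpenings, since the paper tacitly equates association of $n_{t,c}$ with $\mathrm{Cov}_p(g_c^2,r_c)>0$ (which mere positive correlation does not guarantee) and treats the data-dependent gains as deterministic.
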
 

\begin{remark} 
The theorem does \emph{not} claim that scaling a single channel increases that channel's own SNR. Rather, it shows that DUNE improves the \emph{global latent SNR} by selectively downweighting the detected low-SNR subset and by preserving high-SNR channels more aggressively than low-SNR channels. This matches the role of channel-wise suppression in U-Net h-space. 
\end{remark}


\section{User Survey Description}\label{appendix:usersurvey}

Each participant completed all pairwise comparisons. For each question, the left/right presentation order was randomized, and the order of prompts/models was shuffled. Participants were asked to choose the better image overall in terms of visual quality and artifact reduction. We report the aggregate preference over all judgments.

Prompts used for SDXL:

\begin{enumerate}
\item \textit{A cheerful picnic scene in a sunny meadow with colorful flowers, happy people, and a vibrant sky, photorealistic, 8k.}
\item \textit{A person holding a mirror that perfectly reflects a different scene, not what is in front of them, highly detailed, 8k.}
\item \textit{A grand ballroom from the Victorian era, chandeliers glowing, people in elegant attire, rich details, 8k.}
\item \textit{An Escher-style staircase where people walk in all directions, physically impossible architecture, highly detailed, 8k.}
\item \textit{A fantasy creature: a hybrid between a fox and a phoenix, with fiery tails glowing against a twilight background, magical realism.}
\item \textit{A peaceful snowy village under the northern lights, warm lights glowing from cottage windows, photorealistic, 8k.}
\item \textit{A grand city built inside a massive crystal cavern, with light refracting in every direction, breathtaking spectacle, 8k.}
\item \textit{A magical marketplace hidden in a forest, stalls selling potions, enchanted items, and rare creatures, whimsical and lively, 8k.}
\item \textit{A grand steampunk city with airships floating among towering brass structures, intricate details, 8k.}
\item \textit{A grand temple floating among the clouds, partially hidden by mist, with golden sunlight filtering through, ethereal atmosphere, highly detailed, 8k.}
\end{enumerate}

Prompts used for Kandinsky 3:

\begin{enumerate}
\item \textit{A futuristic robot chef preparing a meal in a sleek modern kitchen, cyberpunk aesthetic, ultra-detailed, 8k.}
\item \textit{A medieval blacksmith forging a glowing sword in a dimly lit forge, sparks flying, cinematic lighting, 8k.}
\item \textit{A person holding a mirror that perfectly reflects a different scene, not what is in front of them, highly detailed, 8k.}
\item \textit{A world where gravity works in reverse, people and objects floating upward while birds walk on the ground, highly detailed, 8k.}
\item \textit{A hand writing a letter, with the actual readable text visible and correctly spelled, hyper-realistic, 8k.}
\item \textit{A grand temple floating among the clouds, partially hidden by mist, with golden sunlight filtering through, ethereal atmosphere, highly detailed, 8k.}
\item \textit{A group of people playing chess in zero gravity, the pieces floating and moving in a realistic way, highly detailed, 8k.}
\item \textit{A grand steampunk city with airships floating among towering brass structures, intricate details, 8k.}
\item \textit{A giant enchanted tree in the middle of a glowing fairy forest, mysterious and magical ambiance, 8k.}
\item \textit{A celestial palace floating among the clouds, golden spires shining under a twilight sky, ethereal atmosphere, 8k.}
\end{enumerate}

\begin{figure*}[ht!]
\centering
\includegraphics[width=0.8\linewidth]{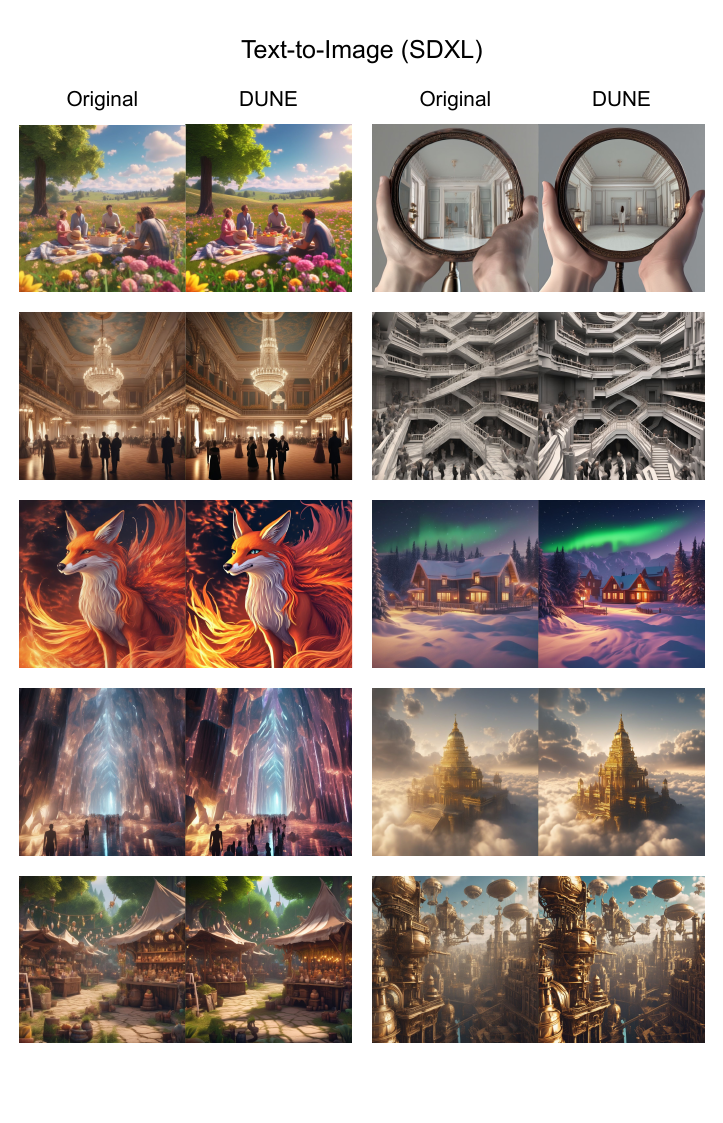}
\caption{Figures for User Survey. SDXL is used to generate these images.}
\end{figure*}

\begin{figure*}[ht!]
\centering
\includegraphics[width=0.8\linewidth]{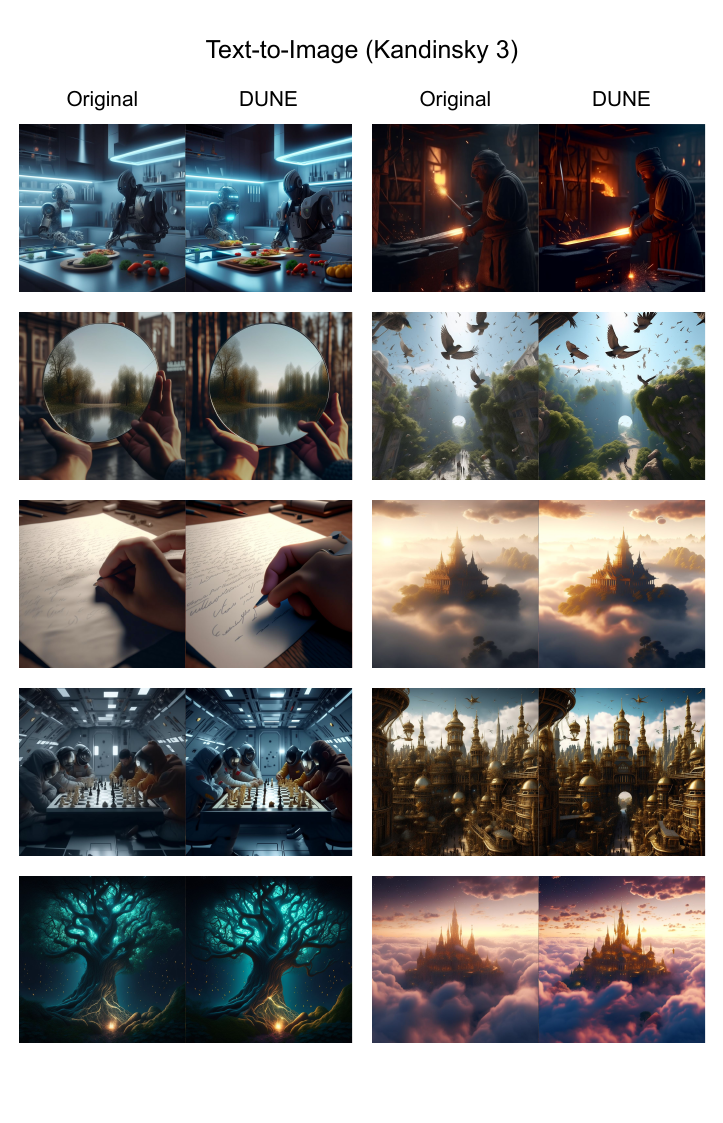}
\caption{Figures for User Survey. Kandinsky 3 is used to generate these images.}
\end{figure*}

\begin{figure*}[ht!]
\centering
\includegraphics[width=0.8\linewidth]{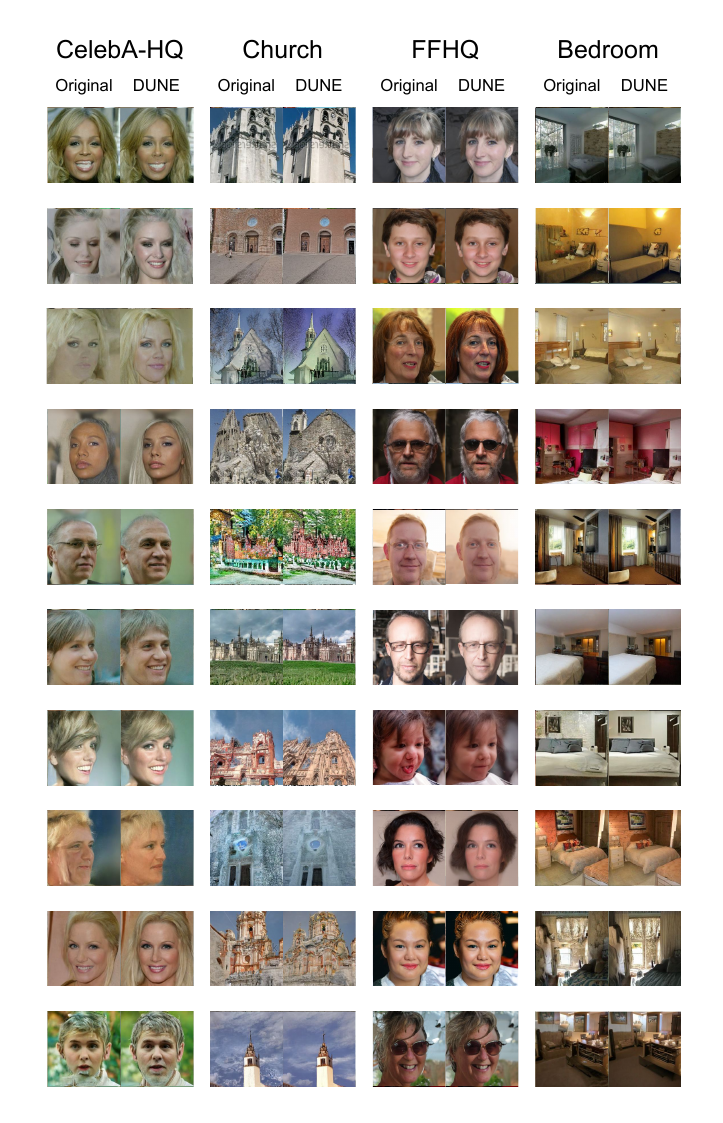}
\caption{Figures for User Survey. DDPM is used for CelebA-HQ and Church, while VE is used for FFHQ and Bedroom.}
\end{figure*}

\vspace{-0.5em}

\end{document}